\begin{document}
\title{Uniform-in-Time Wasserstein Stability Bounds \\ for (Noisy) Stochastic Gradient Descent}

% \usepackage{times}
% Use \Name{Author Name} to specify the name.
% If the surname contains spaces, enclose the surname
% in braces, e.g. \Name{John {Smith Jones}} similarly
% if the name has a "von" part, e.g \Name{Jane {de Winter}}.
% If the first letter in the forenames is a diacritic
% enclose the diacritic in braces, e.g. \Name{{\'E}louise Smith}

% Two authors with the same address
% \coltauthor{\Name{Author Name1} \Email{abc@sample.com}\and
%  \Name{Author Name2} \Email{xyz@sample.com}\\
%  \addr Address}

% Three or more authors with the same address:
% \coltauthor{\Name{Author Name1} \Email{an1@sample.com}\\
%  \Name{Author Name2} \Email{an2@sample.com}\\
%  \Name{Author Name3} \Email{an3@sample.com}\\
%  \addr Address}

% Authors with different addresses:
\author{%
\name Lingjiong Zhu  \email zhu@math.fsu.edu \\
 \addr Department of Mathematics\\ Florida State University, Tallahassee, FL, USA.
\AND
 \name Mert G\"{u}rb\"{u}zbalaban \email mg1366@rutgers.edu\\
 \addr Department of Management Science
 and Information Systems\\ Rutgers Business School, Piscataway, NJ, USA \& \\Center for Statistics and Machine Learning\\Princeton University, Princeton, NJ, USA.
  \AND
Anant Raj \email anant.raj@inria.fr\\
\addr Coordinated Science Laboraotry\\
University of Illinois Urbana-Champaign, IL, USA \&\\
Inria, Ecole Normale Sup\'erieure\\
  PSL Research University, Paris, France. 
   \AND
 \name Umut \c{S}im\c{s}ekli $^\clubsuit$  \email umut.simsekli@inria.fr \\
 \addr Inria, CNRS, Ecole Normale Sup\'{e}rieure\\ 
PSL Research University, Paris, France.\\
\name $^\clubsuit$ \addr Corresponding author.
}

\maketitle

\begin{abstract}
Algorithmic stability is an important notion that has proven powerful for deriving generalization bounds for practical algorithms. The last decade has witnessed an increasing number of stability bounds for different algorithms applied on different classes of loss functions. While these bounds have illuminated various properties of optimization algorithms, the analysis of each case typically required a different proof technique with significantly different mathematical tools. In this study, we make a novel connection between learning theory and applied probability and introduce a unified guideline for proving Wasserstein stability bounds for stochastic optimization algorithms. We illustrate our approach on stochastic gradient descent (SGD) and we obtain time-uniform  stability bounds (i.e., the bound does not increase with the number of iterations) for strongly convex losses and non-convex losses with additive noise, where we recover similar results to the prior art or extend them to more general cases by using a single proof technique. Our approach is flexible and can be generalizable to other popular optimizers, as it mainly requires developing Lyapunov functions, which are often readily available in the literature. It also illustrates that ergodicity is an important component for obtaining time-uniform bounds --  which might not be achieved for convex or non-convex losses unless additional noise is injected to the iterates. Finally, we slightly stretch our analysis technique and prove time-uniform bounds for SGD under convex and non-convex losses (without additional additive noise), which, to our knowledge, is novel. 
\end{abstract}

\begin{keywords}%
  Stochastic gradient descent, algorithmic stability, Wasserstein perturbation.
\end{keywords} 

\section{Introduction}\label{sec:intro}

With the development of modern machine learning applications, understanding the generalization properties of stochastic gradient descent (SGD) has become a major challenge in statistical learning theory. In this context, the main goal is to obtain computable upper-bounds on the \emph{population risk} associated with the output of the SGD algorithm that is given as follows: $F(\theta):= \mathbb{E}_{x\sim \mathcal{D}} [f(\theta,x)] $, 
% Many supervised learning problems can be expressed as an instance of the \emph{risk minimization problem}
% \begin{align}
%     \label{eqn:pop_risk}
%     \min_{\theta\in\mathbb{R}^d} \left\{ F(\theta):= \mathbb{E}_{x\sim \mathcal{D}} [f(\theta,x)] \right\},
% \end{align}
where $x \in \mathcal{X}$ denotes a random data point, $\mathcal{D}$ is the (unknown) data distribution defined on the data space $\mathcal{X}$, $\theta$ denotes the parameter vector, and $f: \mathbb{R}^d \times \mathcal{X} \to \mathbb{R} $ is an instantaneous loss function.

In a practical setting, directly minimizing $F(\theta)$ is not typically possible as $\mathcal{D}$ is unknown; yet one typically has access to a finite data set $X_n = \{x_1, \ldots , x_n\} \in \mathcal{X}^n$, where we assume each $x_i$ is independent and identically distributed (i.i.d.) with the common distribution $\mathcal{D}$. Hence, given $X_n$, one can then attempt to minimize the \emph{empirical risk} $\hat{F}(\theta,X_n) := \frac{1}{n}\sum_{i=1}^{n}f(\theta,x_{i})$ as a proxy for $F(\theta)$. 
In this setting, SGD has been one of the most popular optimization algorithms for minimizing $\hat{F}(\theta)$ and is based on the following recursion: %\small
\begin{equation}
\theta_{k}=\theta_{k-1}-\eta\tilde{\nabla}\hat{F}_{k}(\theta_{k-1},X_{n}),
\qquad
\tilde{\nabla}\hat{F}_{k}(\theta_{k-1},X_{n}):=\frac{1}{b}\sum_{i\in\Omega_{k}}\nabla f(\theta_{k-1},x_{i}),
\label{eqn:sgd_first}
\end{equation} \normalsize
where $\eta$ is the step-size, $b$ is the batch-size, $\Omega_{k}$ is the minibatch that is chosen randomly from the set $\{1,2,\ldots,n\}$, and
its cardinality satisfies $|\Omega_{k}|=b$.

One fruitful approach for estimating the population risk attained by SGD, i.e., $F(\theta_k)$, is based on the following simple decomposition:
%\small
\begin{align}
    F(\theta_k) \leq \hat{F}(\theta_k) + |\hat{F}(\theta_k) - F(\theta_k)|,
    \label{eqn:err_decomp}
\end{align} \normalsize
where the last term is called the \emph{generalization error}. Once a computable upper-bound for the generalization error can be obtained, this decomposition directly leads to a computable upper bound for the population risk $F(\theta_k)$, since $\hat{F}(\theta_k)$ can be computed thanks to the availability of $X_n$. Hence, the challenge here is reduced to derive upper-bounds on $|\hat{F}(\theta_k) - F(\theta_k)|$, typically referred to as generalization bounds. 

Among many approaches for deriving generalization bounds, \emph{algorithmic stability} \citep{bousquet2002stability} has been one of the most fruitful notions that have paved the way to numerous generalization bounds for stochastic optimization algorithms \citep{hardt2016train,chen2018stability,mou2018generalization,feldman2019high,lei2020fine,zhang2022stability}. In a nutshell, algorithmic stability measures how much the algorithm output differs if we replace one data point in $X_n$ with a new sample. More precisely, in the context of SGD, given another data set $\hat{X}_{n}=\{\hat{x}_1, \ldots, \hat{x}_n\} = \{x_1, \ldots, x_{i-1}, \hat{x}_i , x_{i+1},\ldots {x}_n\} \in \mathcal{X}^n$ 
that differs from $X_{n}$ by at most one element, we (theoretically) consider running SGD on $\hat{X}_n$, i.e., %\small
\begin{equation}
\hat{\theta}_{k}=\hat{\theta}_{k-1}-\eta\tilde{\nabla}\hat{F}_{k}(\hat{\theta}_{k-1},\hat{X}_{n}),
\qquad
\tilde{\nabla}\hat{F}_{k}(\hat{\theta}_{k-1},\hat{X}_{n}):=\frac{1}{b}\sum_{i\in\Omega_{k}}\nabla f(\hat{\theta}_{k-1},\hat{x}_{i}),
\end{equation} \normalsize
and we are interested in the discrepancy between $\theta_k$ and $\hat{\theta}_k$ in some precise sense (to be formally defined in the next section). The wisdom of algorithmic stability indicates that a smaller discrepancy between $\theta_k$ and $\hat{\theta}_k$ implies a smaller generalization error. 

The last decade has witnessed an increasing number of stability bounds for different algorithms applied on different classes of loss functions. In a pioneering study, \cite{hardt2016train} proved a variety of stability bounds for SGD, for strongly convex, convex, and non-convex problems. Their analysis showed that, under strong convexity and bounded gradient assumptions, the generalization error of SGD with constant step-size is of order $n^{-1}$; whereas for general convex and non-convex problems, their bounds diverged with the number of iterations (even with a projection step), unless a decreasing step-size is used. In subsequent studies \citep{lei2020fine,kozachkov2022generalization} extended the results of \cite{hardt2016train}, by either relaxing the assumptions or generalizing the setting to more general algorithms. However, their bounds still diverged for constant step-sizes, unless strong convexity is assumed. In a recent study, \cite{bassily2020stability} proved stability lower-bounds for projected SGD when the loss is convex and non-smooth. Their results showed for general non-smooth loss functions we cannot expect to prove time-uniform (i.e., non-divergent with the number of iterations) stability bounds for SGD, even when a projection step is appended. 

In a related line of research, several studies investigated the algorithmic stability of the stochastic gradient Langevin dynamics (SGLD) algorithm \citep{welling2011bayesian}, which is essentially a `noisy' version of SGD that uses the following recursion: $\theta_{k}=\theta_{k-1}-\eta\tilde{\nabla}\hat{F}_{k}(\theta_{k-1},X_{n}) + \xi_k$, where $(\xi_k)_{k\geq 0}$ is a sequence of i.i.d.\ Gaussian vectors, independent of $\theta_{k-1}$ and $\Omega_{k}$. The authors of  \cite{raginsky2017non,mou2018generalization} proved stability bounds for SGLD for non-convex losses, which were then extended to more general (non-Gaussian) noise settings in \cite{li2019generalization}. While these bounds hinted at the benefits of additional noise in terms of stability, they still increased with the number of iterations, which limited the impact of their results. 
More recently, \cite{farghly2021time} proved the first time-uniform stability bounds for SGLD under non-convexity, indicating that, with the presence of additive Gaussian noise, better stability bounds can be achieved. Their time-uniform results were then extended to non-Gaussian, heavy-tailed perturbations in \cite{raj2022algorithmic,raj2023algorithmic} for quadratic and a class of non-convex problems. 

While these bounds have illuminated various properties of optimization algorithms, the analysis of each case typically required a different proof technique with significantly different mathematical tools. Hence, it is not straightforward to extend the existing techniques to different algorithms with different classes of loss functions. Moreover, currently, it is not clear how the noisy perturbations affect algorithmic stability so that time-uniform bounds can be achieved, and more generally, it is not clear in which circumstances one might hope for time-uniform stability bounds.   

 In this study, we contribute to this line of research and prove novel time-uniform algorithmic stability bounds for SGD and its noisy versions. Our main contributions are as follows:
 \begin{itemize}[leftmargin=*]
     \item We make a novel connection between learning theory and applied probability, and introduce a unified guideline for proving Wasserstein stability bounds for stochastic optimization algorithms with a constant step-size. Our approach is based on Markov chain perturbation theory \citep{RS2018}, which offers a three-step proof technique for deriving stability bounds: (i) showing the optimizer is geometrically ergodic, (ii) obtaining a Lyapunov function for the optimizer and the loss, and (iii) bounding the discrepancy between the Markov transition kernels associated with the chains $(\theta_k)_{k\geq 0}$ and $(\hat{\theta}_k)_{k\geq 0}$. We illustrate this approach on SGD and show that time-uniform stability bounds can be obtained under a pseudo-Lipschitz-like condition for smooth strongly-convex losses (we recover similar results to the ones of \cite{hardt2016train}) and a class of non-convex losses (that satisfy a dissipativity condition) when a noisy perturbation with finite variance (not necessarily Gaussian, hence more general than \cite{farghly2021time}) is introduced. Our results shed more light on the role of the additional noise in terms of obtaining time-uniform bounds: in the non-convex case the optimizer might not be geometrically ergodic unless additional noise is introduced, hence the bound cannot be obtained. Moreover, our approach is flexible and can be generalizable to other popular optimizers, as it mainly requires developing Lyapunov functions, which are often readily available in the literature \citep{aybat2020robust,lessard2016analysis,fallah2022robust,gurbuzbalaban2022stochastic,liu2020improved,aybat2019universally}. 
     \item We then investigate the case where no additional noise is introduced to the SGD recursion and the geometric ergodicity condition does not hold. First, for non-convex losses, we prove a time-uniform stability bound, where the bound converges to a positive number (instead of zero) as $n\to \infty$, and this limit depends on the `level of non-convexity'. Then, we consider a class of (non-strongly) convex functions and prove stability bounds for the stationary distribution of $(\theta_k)_{k\geq 0}$, which vanish as $n$ increases. To the best of our knowledge, these results are novel, and indicate that the stability bounds do not need to increase with time even under non-convexity and without additional perturbations; yet, they might have a different nature depending on the problem class.
 \end{itemize}
 One limitation of our analysis is that it requires Lipschitz surrogate loss functions and does not directly handle the original loss function, due to the use of the Wasserstein distance \citep{raginsky2016information}. Yet, surrogate losses have been readily utilized in the recent stability literature (e.g., \cite{raj2022algorithmic,raj2023algorithmic}) and we believe that our analysis might illuminate uncovered aspects of SGD even with this requirement. All the proofs are provided in the Appendix.

 % in many scenarios, directly attacking \eqref{eqn:pop_risk} is often not possible. Assuming we have access to a training dataset $X_n = \{x_1, \ldots , x_n\} \subset \mathcal{X}^n$ with $n$ independent and identically distributed (i.i.d.) observations.
 
% We consider the \emph{empirical risk minimization} (ERM) problem: 
% \begin{align}
% \min_{\theta \in \mathbb{R}^d} \left\{ \hat{F}(\theta,X_n) := \frac{1}{n}\sum_{i=1}^{n}f(\theta,x_{i})\right\}.
% \label{pbm-erm}
% \end{align}

%%%%%%%%%%%%%%%%%%%%%%%%%%%%%%%%%%%%%%%%%
\section{Technical Background}

%%%%%%%%%%%%%%%%%%%%%%%%%%
\subsection{The Wasserstein distance and Wasserstein algorithmic stability}

\textbf{Wasserstein distance.} 
For $p\geq 1$, the $p$-Wasserstein distance between two probability measures $\mu$ and $\nu$ on $\mathbb{R}^{d}$
is defined as \cite{villani2008optimal}: %\small
\begin{equation}
\mathcal{W}_{p}(\mu,\nu)=\left\{\inf\mathbb{E}\Vert X-Y\Vert^p\right\}^{1/p},
\end{equation} \normalsize
where the infimum is taken over all couplings of $X\sim\mu$ and $Y\sim\nu$.
In particular, the dual representation for the $1$-Wasserstein distance is given as \cite{villani2008optimal}: %\small
\begin{equation} \label{eq:lipschitz_generalization}
\mathcal{W}_{1}(\mu,\nu)=\sup_{h\in\text{Lip}(1)}\left|\int_{\mathbb{R}^{d}}h(x)\mu(dx)-\int_{\mathbb{R}^{d}}h(x)\nu(dx)\right|,
\end{equation} \normalsize
where $\text{Lip}(1)$ consists of the functions $h:\mathbb{R}^{d}\rightarrow\mathbb{R}$
that are $1$-Lipschitz.

%%%%%%%%%%%%%%%%%%%%%%%%%%%%%%%%%%%%%%%%%%%

\textbf{Wasserstein algorithmic stability.}
Algorithmic stability is a crucial concept in learning theory that has led to numerous significant theoretical breakthroughs \citep{bousquet2002stability,hardt2016train}. To begin, we will present the definition of algorithmic stability as stated in \cite{hardt2016train}:

\begin{definition}[\cite{hardt2016train}, Definition 2.1] \label{def:stability}
Let $\mathcal{RV}(\mathbb{R}^d)$ denote the set of $\mathbb{R}^d$-valued random vectors. For a (surrogate) loss function $\ell:\mathbb{R}^d \times \mathcal{X} \rightarrow \mathbb{R}$, an algorithm $\mathcal{A} : \bigcup_{n=1}^\infty \mathcal{X}^n \to \mathcal{RV}(\mathbb{R}^d)$ is $\varepsilon$-uniformly stable if %\small
 \begin{align}
     \sup_{X\cong \hat{X}}\sup_{z \in \mathcal{X}}~ \mathbb{E}\left[\ell(\mathcal{A}(X),z) - \ell(\mathcal{A}(\hat{X}),z) \right] \leq \varepsilon ,
 \end{align} \normalsize
where the first supremum is taken over data $X, \hat{X} \in \mathcal{X}^n$   that differ by one element, denoted by $X\cong \hat{X}$.
\end{definition}
In this context, we purposefully employ a distinct notation for the loss function $\ell$ (in contrast to $f$) since our theoretical framework necessitates measuring algorithmic stability through a surrogate loss function, which may differ from the original loss $f$.
More precisely, our bounds will be based on the 1-Wasserstein distance, hence, we will need the surrogate loss $\ell$ to be a Lipschitz continuous function, as we will detail in \eqref{eq:lipschitz_generalization}. On the other hand, for the original loss $f$ we will need some form of convexity (e.g., strongly convex, convex, or dissipative) and we will need the gradient of $f$ to be Lipschitz continuous, in order to derive Wasserstein bounds. Unfortunately, under these assumptions, we cannot further impose $f$ itself to be Lipschitz, hence the need for surrogate losses. Nevertheless, the usage of surrogate losses is common in learning theory, see e.g, \cite{farghly2021time,raj2023algorithmic}, and we present concrete practical examples in the Appendix. 

Now, we present a  result from \cite{hardt2016train} that establishes a connection between algorithmic stability and the generalization performance of a randomized algorithm. Prior to presenting the result, we define the empirical and population risks with respect to the loss function $\ell$ as follows:%\small
\begin{align*}
    \hat{R}(\theta,X_n) :=& \frac{1}{n}\sum_{i=1}^n \ell(\theta,x_i), \quad 
    R(\theta) := \mathbb{E}_{x\sim \mathcal{D}} [\ell(\theta,x)].
\end{align*} \normalsize

\begin{theorem}[\cite{hardt2016train}, Theorem~2.2] \label{thm:hardt}
Suppose that $\mathcal{A}$ is an $\varepsilon$-uniformly stable algorithm, then the expected generalization error is bounded by %\small
\begin{align}
\left|\mathbb{E}_{\mathcal{A},X_n}~\left[ \hat{R}(\mathcal{A}(X_n),X_n) - R(\mathcal{A}(X_n)) \right]  \right| \leq \varepsilon.
\end{align} \normalsize
\end{theorem}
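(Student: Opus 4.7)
The plan is to use the standard \emph{ghost sample} (or \emph{renaming}) argument that reduces a bound on the expected generalization error to a per-coordinate application of the uniform stability hypothesis. The key observation is that the expected empirical risk differs from the expected population risk only through a perturbation of a single data point, and uniform stability is exactly the quantitative statement that such single-point perturbations have a controlled effect on the loss.

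Concretely, I would introduce an independent ``ghost'' sample $\hat{X}_n=\{\hat{x}_1,\dots,\hat{x}_n\}$ of i.i.d.\ draws from $\mathcal{D}$, independent of $X_n$, and for each index $i$ define the perturbed dataset $X_n^{(i)}:=\{x_1,\dots,x_{i-1},\hat{x}_i,x_{i+1},\dots,x_n\}$, which agrees with $X_n$ except in the $i$-th coordinate. Since $X_n$ and $X_n^{(i)}$ have the same distribution and $\hat{x}_i$ is distributed like a fresh test point, a relabeling of variables gives the identity
\begin{equation*}
\mathbb{E}\bigl[R(\mathcal{A}(X_n))\bigr]
=\mathbb{E}\bigl[\ell(\mathcal{A}(X_n),\hat{x}_i)\bigr]
=\mathbb{E}\bigl[\ell(\mathcal{A}(X_n^{(i)}),x_i)\bigr],
\end{equation*}
for every $i\in\{1,\dots,n\}$. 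Averaging over $i$ and noting that $\mathbb{E}[\hat{R}(\mathcal{A}(X_n),X_n)]=\frac{1}{n}\sum_{i=1}^n\mathbb{E}[\ell(\mathcal{A}(X_n),x_i)]$ by definition of the empirical risk, one obtains
\begin{equation*}
\mathbb{E}\bigl[\hat{R}(\mathcal{A}(X_n),X_n)-R(\mathcal{A}(X_n))\bigr]
=\frac{1}{n}\sum_{i=1}^n\mathbb{E}\bigl[\ell(\mathcal{A}(X_n),x_i)-\ell(\mathcal{A}(X_n^{(i)}),x_i)\bigr].
\end{equation*}

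Since $X_n$ and $X_n^{(i)}$ differ in exactly one coordinate and $x_i\in\mathcal{X}$ acts as an arbitrary test point, each summand on the right is bounded in absolute value by the uniform stability constant $\varepsilon$ from Definition~\ref{def:stability} (applied to both the pair $(X_n,X_n^{(i)})$ and to $(X_n^{(i)},X_n)$ to handle the absolute value). Averaging gives the claimed bound $\varepsilon$.

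The essentially routine part is the chain of identities; the only step that deserves care is the ghost-sample identity, which relies on the exchangeability of $(x_i,\hat{x}_i)$ together with the fact that $\mathcal{A}$'s output distribution depends on the dataset only through the multiset of samples. I do not anticipate any genuine obstacle, as the argument is purely distributional and uses the stability hypothesis as a black box, once the supremum over $z\in\mathcal{X}$ in Definition~\ref{def:stability} is invoked at $z=x_i$ (conditionally on $x_i$).
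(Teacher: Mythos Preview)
The paper does not give its own proof of this statement: it is quoted verbatim as Theorem~2.2 of \cite{hardt2016train} and used as a black box. Your ghost-sample argument is correct and is essentially the original proof from \cite{hardt2016train}, so there is nothing to compare against here. One minor remark: the swap identity $\mathbb{E}[\ell(\mathcal{A}(X_n),\hat{x}_i)]=\mathbb{E}[\ell(\mathcal{A}(X_n^{(i)}),x_i)]$ follows purely from the exchangeability of the i.i.d.\ pair $(x_i,\hat{x}_i)$ and does not require that $\mathcal{A}$ depend on the dataset only through the multiset of samples, so that side condition can be dropped.
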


For a randomized algorithm, if $\nu$ and $\hat{\nu}$ denotes the law of $\mathcal{A}(X)$ and $\mathcal{A}(\hat{X})$ then for a $\mathcal{L}$-Lipschitz surrogate loss function $\ell$, we have the following generalization error guarantee, %\small
\begin{align}
\left|\mathbb{E}_{\mathcal{A},X_n}~\left[ \hat{R}(\mathcal{A}(X_n),X_n) - R(\mathcal{A}(X_n)) \right]  \right| \leq \mathcal{L}\sup_{X\cong \hat{X}} \mathcal{W}_{1}(\nu,\hat{\nu}). \label{eq:generalization}
\end{align} \normalsize
The above result can be directly obtained from the combination of the results given in \eqref{eq:lipschitz_generalization}, Definition~\ref{def:stability}, and Theorem~\ref{thm:hardt} (see also \cite{raginsky2016information}).

%%%%%%%%%%%%%%%%%%%%%%%%%%%%%%%%%%%%%%%%%%%
\subsection{Perturbation theory for Markov chains}\label{sec:perturbation:theory}

Next, we recall the Wasserstein perturbation bound for Markov chains
from \cite{RS2018}. 
Let $(\theta_{n})_{n=0}^{\infty}$ be a Markov chain with transition kernel $P$
and initial distribution $p_{0}$, i.e., we have almost surely %\small
\begin{equation}
\mathbb{P}(\theta_{n}\in A|\theta_{0},\cdots,\theta_{n-1})
=\mathbb{P}(\theta_{n}\in A|\theta_{n-1})=P(\theta_{n-1},A),
\end{equation} \normalsize
and $p_{0}(A)=\mathbb{P}(\theta_{0}\in A)$ for any measurable set $A\subseteq\mathbb{R}^{d}$
and $n\in\mathbb{N}$. 
We assume that $(\hat{\theta}_{n})_{n=0}^{\infty}$ is another Markov
chain with transition kernel $\hat{P}$ and initial distribution $\hat{p}_{0}$.
We denote by $p_{n}$ the distribution of $\theta_{n}$ and by $\hat{p}_{n}$
the distribution of $\hat{\theta}_{n}$. By $\delta_\theta$, we denote the Dirac delta distribution at $\theta$, i.e. the probability measure concentrated at $\theta$. For a measurable set $A\subseteq \mathbb{R}^d$, we also use the notation $\delta_\theta P (A):=P(\theta,A)$.

\begin{lemma}[\cite{RS2018}, Theorem 3.1]\label{lemma:key}
Assume that there exist some $\rho\in[0,1)$ and $C\in(0,\infty)$
such that%\small
\begin{equation}
\label{eqn:key_pert}
\sup_{\theta,\tilde{\theta}\in\mathbb{R}^{d}:\theta\neq \tilde{\theta}}\frac{\mathcal{W}_{1}(P^{n}(\theta,\cdot),P^{n}(\tilde{\theta},\cdot))}{\Vert\theta-\tilde{\theta}\Vert}\leq C\rho^{n},
\end{equation} \normalsize
for any $n\in\mathbb{N}$. Further assume that there exist some $\delta\in(0,1)$ and $L\in(0,\infty)$
and a measurable Lyapunov function $\hat{V}:\mathbb{R}^{d}\rightarrow[1,\infty)$ of $\hat{P}$
such that for any $\theta\in\mathbb{R}^{d}$: %\small
\begin{equation}\label{hat:P:hat:V}
(\hat{P}\hat{V})(\theta)\leq\delta\hat{V}(\theta)+L,
\end{equation} 
where $(\hat{P}\hat{V})(\theta):=\int_{\mathbb{R}^{d}}\hat{V}(\hat{\theta})\hat{P}(\theta,d\hat{\theta})$.
Then, we have %\small
\begin{equation}\label{W:1:p:hat:p}
\mathcal{W}_{1}(p_{n},\hat{p}_{n})\leq C\left(\rho^{n}\mathcal{W}_{1}(p_{0},\hat{p}_{0})+(1-\rho^{n})\frac{\gamma\kappa}{1-\rho}\right),
\end{equation} \normalsize
where %\small
$\gamma:=\sup_{\theta\in\mathbb{R}^{d}}\frac{\mathcal{W}_{1}(\delta_{\theta}P,\delta_{\theta}\hat{P})}{\hat{V}(\theta)},
\quad
\kappa:=\max\left\{\int_{\mathbb{R}^{d}}\hat{V}(\theta)d\hat{p}_{0}(\theta),
\frac{L}{1-\delta}\right\}$.
\normalsize
\end{lemma}

Lemma~\ref{lemma:key} provides a sufficient condition for the distributions $p_{n}$ and $\hat{p}_{n}$ after $n$ iterations to stay close to each other given the initial distributions $p_{0}$.
Lemma~\ref{lemma:key} will provide a key role
in helping us derive the main results in Section~\ref{sec:quadratic} and Section~\ref{sec:strongly:convex}. 
Later, in the Appendix, we will state and prove
a modification of Lemma~\ref{lemma:key} (see Lemma~\ref{lemma:modified} in the Appendix) that will
be crucial to obtaining the main result in Section~\ref{sec:nonconvex}.

%%%%%%%%%%%%%%%%%%%%%%%%%%%%%%%%%%%%%%%%%%%%%%%%%%%%%%%%%%%%%%%%%%%%%%%%%%

%%%%%%%%%%%%%%%%%%%%%%%%%
% \section{Main Results}

%%%%%%%%%%%%%%%%%%%%%%%%%%%

\section{Wasserstein Stability of  SGD via Markov Chain Perturbations}\label{sec:main:results}

In this section, we will derive \emph{time-uniform} Wasserstein stability bounds for SGD by using the perturbation theory presented in \cite{RS2018}. Before considering general losses that can be non-convex, we first consider the simpler case of quadratic losses to illustrate our key ideas.

%%%%%%%%%%%%%%%%%%%%%%%%%%%%%%%%%%%%%%%
\subsection{Warm up: quadratic case}\label{sec:quadratic}

To illustrate the proof technique, we start by considering a quadratic loss of the form: $f(\theta,x_{i}):=(a_{i}^{\top} \theta - y_{i})^2/2$
% For quadratic losses, the empirical risk minimization problem \eqref{pbm-erm} becomes:
% \begin{equation}
% \min_{\theta\in\mathbb{R}^{d}}\hat{F}(\theta,X_{n})=\frac{1}{n}\sum_{i=1}^{n}f(\theta,x_{i}),\qquad\text{where}\quad f(\theta,x_{i}):=\frac{1}{2}\left|a_{i}^{\top} \theta - y_{i}\right|^2%\hat{F}(\theta,X_{n}):=\frac{1}{n}\sum_{i=1}^{n}f(\theta,x_{i}),
% \end{equation}
%where $f(\theta,x_{i}):=\frac{1}{2}\left|a_{i}^{\top} \theta - y_{i}\right|^2$ 
where, $x_{i}:=(a_{i},y_{i})$ 
and 
$\nabla f(\theta,x_{i}) = a_i(a_i^{\top} \theta - y_i)$.
%%%%%%%%%%%%%%%%%%%%%%%%
In this setting, the SGD recursion takes the following form: %\small
\begin{equation}
\theta_{k}=\left(I - \frac{\eta}{b} H_{k}\right)\theta_{k-1}+\frac{\eta}{b}q_{k}, \quad \text{where,} \quad H_k := \sum_{i\in \Omega_k } a_i a_i^{\top}, \quad q_k :=\sum_{i\in \Omega_k } a_i y_i\,.
\label{def-thetak}
\end{equation} \normalsize
% where 
% \begin{equation*}
% H_k := \sum_{i\in \Omega_k } a_i a_i^{\top}, \quad q_k :=\sum_{i\in \Omega_k } a_i y_i\,, \label{eqn:sgd_recursion}
% \end{equation*}
% where $\Omega_{k}$ is a subset 
% uniformly chosen from $\{1,2,\ldots,n\}$ without replacement so that
% the cardinality of $\Omega_{k}$ equals the batch size $b$.
The sequence $(H_{k},q_{k})$ are i.i.d. and 
for every $k$, $(H_{k},q_{k})$ is independent of $\theta_{k-1}$.

Similarly, we can write down the iterates of SGD
with a different data set $\hat{X}_{n}:=\{\hat{x}_1, \ldots , \hat{x}_n\}$ 
with $\hat{x}_{i}=(\hat{a}_{i},\hat{y}_{i})$, where $\hat{X}_n$ differs from $X_{n}$ with at most one element: %\small
\begin{equation}
\hat{\theta}_{k}=\left(I - \frac{\eta}{b} \hat{H}_{k}\right)\hat{\theta}_{k-1}+\frac{\eta}{b}\hat{q}_{k}, \quad \text{where} \quad \hat{H}_k := \sum_{i\in \Omega_k } \hat{a}_i \hat{a}_i^{\top}, \quad \hat{q}_k :=\sum_{i\in \Omega_k } \hat{a}_i \hat{y}_i\,. 
\label{def-hat-thetak}
\end{equation} \normalsize
% where 
% \begin{equation*}
% \hat{H}_k := \sum_{i\in \Omega_k } \hat{a}_i \hat{a}_i^{\top}, \quad \hat{q}_k :=\sum_{i\in \Omega_k } \hat{a}_i \hat{y}_i\,. 
% \end{equation*}
% 
Our goal is to obtain an algorithmic stability
bound, through estimating the $1$-Wasserstein distance
between the distribution of $\theta_{k}$ and $\hat{\theta}_{k}$ and we will now illustrate the three-step proof technique that we described in Section~\ref{sec:intro}. To be able to apply the perturbation theory \citep{RS2018}, we start by establishing the geometric ergodicity of the Markov process $(\theta_{k})_{k\geq 0}$ with transition kernel $P(\theta, \cdot)$, given in the following lemma.
\begin{lemma}\label{lem:quadratic:1}
Assume that $\rho:=\mathbb{E}\left\Vert I - \frac{\eta}{b} H_{1}\right\Vert<1$. Then, for any $k\in\mathbb{N}$, we have the following inequality: %\small
$\mathcal{W}_{1}\left(P^{k}(\theta,\cdot),P^{k}\left(\tilde{\theta},\cdot\right)\right)\leq\rho^{k}\Vert \theta-\tilde{\theta}\Vert.$ \normalsize
\end{lemma}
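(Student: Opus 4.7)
The plan is to use a synchronous coupling between the two chains and exploit the explicit affine structure of the SGD recursion on quadratic losses. Concretely, I would construct the couplings by letting both chains share the same random minibatch data, i.e., use the same i.i.d.\ sequence $(H_k, q_k)$ while starting one chain from $\theta_0 = \theta$ and the other from $\tilde\theta_0 = \tilde\theta$. Under this coupling, the affine parts $\tfrac{\eta}{b} q_k$ in \eqref{def-thetak} cancel upon subtracting the two recursions, which is the crucial simplification afforded by the quadratic setting.

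With this coupling in place, I would unroll the difference recursion. Subtracting the two copies of \eqref{def-thetak} gives
\begin{equation*}
\theta_k - \tilde\theta_k = \left(I - \frac{\eta}{b} H_k\right)(\theta_{k-1} - \tilde\theta_{k-1}),
\end{equation*}
so by induction
\begin{equation*}
\theta_k - \tilde\theta_k = \left[\prod_{j=k}^{1}\left(I - \frac{\eta}{b} H_j\right)\right](\theta - \tilde\theta).
\end{equation*}
Taking norms, applying submultiplicativity of the operator norm, and then taking expectations yields
\begin{equation*}
\mathbb{E}\|\theta_k - \tilde\theta_k\| \leq \mathbb{E}\left[\prod_{j=1}^{k}\left\|I - \frac{\eta}{b} H_j\right\|\right]\|\theta - \tilde\theta\|.
\end{equation*}
Since the $(H_j)_{j\geq 1}$ are i.i.d.\ (so the factors $\|I - \tfrac{\eta}{b} H_j\|$ are also i.i.d.), the expectation of the product factors as the product of expectations, each equal to $\rho$, giving the bound $\rho^k\|\theta-\tilde\theta\|$.

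Finally, I would invoke the definition of the $1$-Wasserstein distance as an infimum over couplings: since $(\theta_k,\tilde\theta_k)$ constructed above is a valid coupling of $P^k(\theta,\cdot)$ and $P^k(\tilde\theta,\cdot)$, the expectation above is an upper bound on $\mathcal{W}_1$, and the claim follows. I do not anticipate a real obstacle here: the two main points are (i) choosing the synchronous coupling so that the stochastic affine terms cancel, and (ii) using independence of the $H_j$'s to turn the expected product into $\rho^k$. The argument relies crucially on the \emph{linearity} of the update in $\theta$, which is special to the quadratic case and foreshadows why the general non-convex case in Section~\ref{sec:nonconvex} will require more machinery (Lyapunov functions, modified perturbation lemma).
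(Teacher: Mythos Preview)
Your proposal is correct and matches the paper's argument essentially line for line: the paper also uses the synchronous coupling that shares the same $(H_k,q_k)$, subtracts the recursions so the affine term cancels, bounds by the operator norm, and exploits independence of $H_k$ from the past to produce the factor $\rho$ at each step. The only cosmetic difference is that the paper applies the one-step inequality $\mathbb{E}\|\theta_k-\tilde\theta_k\|\leq\rho\,\mathbb{E}\|\theta_{k-1}-\tilde\theta_{k-1}\|$ and iterates, whereas you unroll the product first and then factor the expectation using the i.i.d.\ property of the $H_j$'s; both arrive at $\rho^k\|\theta-\tilde\theta\|$ by the same mechanism.
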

%\vspace{-2mm}

We note that since $H_1 \succeq 0$, the assumption in Lemma~\ref{lem:quadratic:1} can be satisfied under mild assumptions, for example when $H_1 \succ 0$ with a positive probability, which is satisfied for $\eta$ small enough.

In the second step, we construct a Lyapunov function $\hat{V}$ that satisfies the conditions of Lemma~\ref{lemma:key}.
% and obtain a drift condition
% for the SGD $(\hat{\theta}_{k})_{k=0}^{\infty}$. 

\begin{lemma}\label{lem:quadratic:2}
Let $\hat{V}(\theta):=1+\Vert\theta\Vert$. 
Assume that $\hat{\rho}:=\mathbb{E}\left\Vert I - \frac{\eta}{b} \hat{H}_{1}\right\Vert<1$. 
Then, we have %\small
\begin{equation}
(\hat{P}\hat{V})(\theta)\leq\hat{\rho}\hat{V}(\theta)+1-\hat{\rho}+\frac{\eta}{b}\mathbb{E}\Vert\hat{q}_{1}\Vert.
\end{equation} \normalsize
\end{lemma}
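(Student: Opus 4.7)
The plan is to compute $(\hat{P}\hat{V})(\theta)$ directly by applying one step of the SGD recursion from the perturbed data set and then bound the result using the triangle inequality together with the definition of operator norm.

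First, I would unwind the definition of the transition kernel: starting from $\hat{\theta}_0 = \theta$, one step of the recursion \eqref{def-hat-thetak} gives $\hat{\theta}_1 = (I - \tfrac{\eta}{b}\hat{H}_1)\theta + \tfrac{\eta}{b}\hat{q}_1$, so
\begin{equation*}
(\hat{P}\hat{V})(\theta) = \mathbb{E}\bigl[\hat{V}(\hat{\theta}_1)\bigr] = 1 + \mathbb{E}\left\Vert \Bigl(I - \tfrac{\eta}{b}\hat{H}_1\Bigr)\theta + \tfrac{\eta}{b}\hat{q}_1 \right\Vert,
\end{equation*}
where the expectation is taken jointly over the pair $(\hat{H}_1, \hat{q}_1)$, which is independent of $\theta$.

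Next, I would apply the triangle inequality in norm to split the linear part from the inhomogeneous term, and then bound $\|(I - \tfrac{\eta}{b}\hat{H}_1)\theta\| \leq \|I - \tfrac{\eta}{b}\hat{H}_1\|\cdot\|\theta\|$ using the definition of operator norm. Taking expectations and using the definition of $\hat{\rho}$, this yields
\begin{equation*}
(\hat{P}\hat{V})(\theta) \;\leq\; 1 + \hat{\rho}\Vert\theta\Vert + \tfrac{\eta}{b}\mathbb{E}\Vert\hat{q}_1\Vert.
\end{equation*}

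Finally, I would rewrite $1 + \hat{\rho}\|\theta\|$ as $\hat{\rho}(1 + \|\theta\|) + (1 - \hat{\rho}) = \hat{\rho}\hat{V}(\theta) + 1 - \hat{\rho}$, which gives precisely the stated inequality. There is no real obstacle here; the argument is just triangle inequality plus the operator-norm bound plus an algebraic rearrangement to match the Lyapunov drift form $(\hat{P}\hat{V})(\theta) \leq \delta \hat{V}(\theta) + L$ required by Lemma~\ref{lemma:key}, with $\delta = \hat{\rho}$ and $L = 1 - \hat{\rho} + \tfrac{\eta}{b}\mathbb{E}\|\hat{q}_1\|$. The only thing worth double-checking is that $\mathbb{E}\|\hat{q}_1\|$ is finite, which follows from the minibatch definition of $\hat{q}_1$ as a finite sum over the data set $\hat{X}_n$.
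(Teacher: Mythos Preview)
Your proof is correct and follows essentially the same approach as the paper: write $\hat{\theta}_1$ via one step of the recursion, apply the triangle inequality and the operator-norm bound to obtain $1+\hat{\rho}\Vert\theta\Vert+\tfrac{\eta}{b}\mathbb{E}\Vert\hat{q}_1\Vert$, and then rearrange into the drift form $\hat{\rho}\hat{V}(\theta)+1-\hat{\rho}+\tfrac{\eta}{b}\mathbb{E}\Vert\hat{q}_1\Vert$.
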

%\vspace{-2mm}

In our third and last step, we estimate the perturbation gap based on the Lyapunov function $\hat{V}$ in the form of \eqref{eqn:key_pert}, assuming that the data is bounded. Such bounded data assumptions have been commonly made in the literature \citep{bach2014adaptivity,bach2013non}.
\begin{lemma}\label{lem:quadratic:3}
If~ $\sup_{x\in \mathcal{X}}\|x\| \leq D$ for some $D < \infty$,
then, we have %\small
$\sup_{\theta\in\mathbb{R}^{d}}\frac{\mathcal{W}_{1}(\delta_{\theta}P,\delta_{\theta}\hat{P})}{\hat{V}(\theta)}
\leq\frac{2\eta D^{2}}{n}$.
\normalsize
\end{lemma}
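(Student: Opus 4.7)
\textbf{Proof plan for Lemma \ref{lem:quadratic:3}.} The strategy is to exhibit an explicit coupling between $\delta_\theta P$ and $\delta_\theta \hat{P}$ that makes $\mathcal{W}_1$ easy to upper bound, and then to exploit the fact that $X_n$ and $\hat{X}_n$ differ in only one entry so that the two one-step SGD recursions driven by the same minibatch agree unless the ``swapped'' index is drawn.

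Concretely, sample a single minibatch $\Omega_1 \subseteq \{1,\dots,n\}$ of size $b$ and use it simultaneously in \eqref{def-thetak} (started at $\theta$) and in \eqref{def-hat-thetak} (also started at $\theta$). Writing $\theta_1$ and $\hat\theta_1$ for the resulting random variables, this is a valid coupling of the laws $\delta_\theta P$ and $\delta_\theta \hat P$, so $\mathcal{W}_1(\delta_\theta P, \delta_\theta \hat P) \leq \mathbb{E}\|\theta_1-\hat\theta_1\|$. Let $j$ denote the unique index at which $X_n$ and $\hat X_n$ may disagree. Then $a_i=\hat a_i$ and $y_i=\hat y_i$ for all $i\neq j$, so
\begin{equation*}
H_1 - \hat H_1 = \mathbf{1}_{\{j\in\Omega_1\}}\bigl(a_j a_j^\top - \hat a_j \hat a_j^\top\bigr), \qquad q_1 - \hat q_1 = \mathbf{1}_{\{j\in\Omega_1\}}\bigl(a_j y_j - \hat a_j \hat y_j\bigr),
\end{equation*}
and therefore
\begin{equation*}
\theta_1 - \hat\theta_1 = -\tfrac{\eta}{b}\,\mathbf{1}_{\{j\in\Omega_1\}}\bigl(a_j a_j^\top - \hat a_j \hat a_j^\top\bigr)\theta + \tfrac{\eta}{b}\,\mathbf{1}_{\{j\in\Omega_1\}}\bigl(a_j y_j - \hat a_j \hat y_j\bigr).
\end{equation*}

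Next, on the event $\{j\in\Omega_1\}$, the boundedness assumption $\|x\|\leq D$ (so that $\|a_i\|\leq D$ and $|y_i|\leq D$, and likewise for the hatted quantities) together with the triangle inequality and the operator-norm bound $\|a a^\top\theta\|\leq \|a\|^2\|\theta\|$ yields
\begin{equation*}
\bigl\|(a_j a_j^\top - \hat a_j \hat a_j^\top)\theta\bigr\| \leq 2D^2 \|\theta\|, \qquad \bigl\|a_j y_j - \hat a_j \hat y_j\bigr\| \leq 2D^2,
\end{equation*}
so $\|\theta_1-\hat\theta_1\| \leq \tfrac{2\eta D^2}{b}(1+\|\theta\|)\,\mathbf{1}_{\{j\in\Omega_1\}}$ pointwise. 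Since $\Omega_1$ is a uniform random subset of size $b$ from $\{1,\dots,n\}$, $\mathbb{P}(j\in\Omega_1)=b/n$, so taking expectation gives $\mathbb{E}\|\theta_1-\hat\theta_1\|\leq \tfrac{2\eta D^2}{n}(1+\|\theta\|) = \tfrac{2\eta D^2}{n}\hat V(\theta)$. Dividing by $\hat V(\theta)\geq 1$ and taking the supremum over $\theta$ yields the stated bound.

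There is no real obstacle here; the only thing to be slightly careful about is the identification of what $\sup_{x\in\mathcal{X}}\|x\|\leq D$ implies for the components $(a,y)$ (both $\|a\|$ and $|y|$ are bounded by $D$ under the natural Euclidean norm on $\mathbb{R}^{d+1}$), and the combinatorial fact that the probability of drawing the ``special'' index $j$ into $\Omega_1$ is exactly $b/n$ for uniform minibatch sampling, which is what produces the $1/n$ factor (and cancels the $1/b$ in the step size, giving a bound that does not degrade with the batch size).
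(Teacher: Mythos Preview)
Your proof is correct and follows essentially the same approach as the paper: both use the synchronous coupling (same minibatch $\Omega_1$) started from $\theta$, exploit that the two recursions differ only when the swapped index is drawn (probability $b/n$), and bound the resulting difference by $2D^2(\|\theta\|+1)$ using the data bound. Your bookkeeping via the explicit indicator $\mathbf{1}_{\{j\in\Omega_1\}}$ is slightly more direct than the paper's separate bounds on $\mathbb{E}\|H_1-\hat H_1\|$ and $\mathbb{E}\|q_1-\hat q_1\|$, but the argument is the same.
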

Note that Lemma~\ref{lemma:key} relies on three conditions: the Wasserstein contraction in \eqref{eqn:key_pert}, which is obtained through Lemma~\ref{lem:quadratic:1}, the drift condition for the Lyapunov function in \eqref{hat:P:hat:V}, which is obtained in Lemma~\ref{lem:quadratic:2} and finally the estimate on $\gamma$ in \eqref{W:1:p:hat:p} which is about the one-step $1$-Wasserstein distance between two semi-groups that in our context are associated with two datasets that differ by at most one element, 
which is obtained in Lemma~\ref{lem:quadratic:3}. The only place the neighborhood assumption ($\sup_{x\in\mathcal{X}}\Vert x\Vert\leq D$) is used is in the expression of $\gamma$ in equation~\eqref{W:1:p:hat:p}.
Now, having all the ingredients, we can invoke Lemma~\ref{lemma:key} and we obtain the following result which provides a 1-Wasserstein bound between the distribution of iterates when applied to datasets that differ by one point.

For $Y \in  \bigcup_{n=1}^\infty \mathcal{X}^n $ and $k \geq 0$, let $\nu(Y,k)$ denote the law of the $k$-th the SGD iterate when $Y$ is used as the dataset, i.e., $\nu(X,k)$ and $\nu(\hat{X},k)$ denote the distributions
of $\theta_{k}$ and $\hat{\theta}_{k}$ obtained by the recursions \eqref{def-thetak} and \eqref{def-hat-thetak} respectively. As shorthand notation, set $\nu_k := \nu(X,k)$ and $\hat{\nu}_{k} := \nu(\hat{X},k) $.
\begin{theorem}
Assume $\theta_{0}=\hat{\theta}_{0}=\theta$.
We also assume that 
$\rho:=\mathbb{E}\left\Vert I - \frac{\eta}{b} H_{1}\right\Vert<1$
and $\hat{\rho}:=\mathbb{E}\left\Vert I - \frac{\eta}{b} \hat{H}_{1}\right\Vert<1$
and $\sup_{x\in \mathcal{X}}\|x\| \leq D$ for some $D < \infty$.
Then, we have %\small
\begin{equation}
\mathcal{W}_{1}(\nu_{k},\hat{\nu}_{k})
\leq
\frac{1-\rho^{k}}{1-\rho}\frac{2\eta D^{2}}{n}\max\left\{1+\Vert\theta\Vert,\frac{1-\hat{\rho}+\frac{\eta}{b}\mathbb{E}\Vert\hat{q}_{1}\Vert}{1-\hat{\rho}}\right\}.
\end{equation} \normalsize
\end{theorem}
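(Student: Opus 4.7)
The plan is to read off the bound as a direct application of Lemma~\ref{lemma:key} (the RS2018 Wasserstein perturbation inequality), using the three preceding lemmas to supply the three ingredients that the lemma requires: the geometric contraction of $P^n$, a Lyapunov function for $\hat{P}$, and a uniform bound on the one-step perturbation gap measured against that Lyapunov function.

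First I would identify the constants in \eqref{eqn:key_pert}. Lemma~\ref{lem:quadratic:1} yields the contraction $\mathcal{W}_1(P^k(\theta,\cdot),P^k(\tilde\theta,\cdot))\le \rho^k\|\theta-\tilde\theta\|$, so Lemma~\ref{lemma:key} applies with $C=1$ and with the same $\rho=\mathbb{E}\|I-\tfrac{\eta}{b}H_1\|<1$ assumed in the theorem. Next, I would take the Lyapunov function from Lemma~\ref{lem:quadratic:2}, namely $\hat V(\theta)=1+\|\theta\|$, and read off the drift parameters as $\delta=\hat\rho$ and $L=1-\hat\rho+\tfrac{\eta}{b}\mathbb{E}\|\hat q_1\|$. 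Lemma~\ref{lem:quadratic:3} then immediately gives $\gamma := \sup_\theta \mathcal{W}_1(\delta_\theta P,\delta_\theta\hat P)/\hat V(\theta)\le 2\eta D^2/n$ under the boundedness assumption $\sup_{x\in\mathcal X}\|x\|\le D$.

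Next I would evaluate $\kappa=\max\{\int \hat V\,d\hat p_0,\,L/(1-\delta)\}$. Since the theorem assumes a deterministic initialization $\hat\theta_0=\theta$, the initial law is $\hat p_0=\delta_\theta$ and hence $\int \hat V\,d\hat p_0 = 1+\|\theta\|$. With $\delta=\hat\rho$ and $L$ as above this gives
\begin{equation*}
\kappa \;=\; \max\!\left\{1+\|\theta\|,\;\frac{1-\hat\rho+\tfrac{\eta}{b}\mathbb{E}\|\hat q_1\|}{1-\hat\rho}\right\}.
\end{equation*}
Also, since $\theta_0=\hat\theta_0=\theta$ we have $p_0=\hat p_0=\delta_\theta$ and therefore $\mathcal{W}_1(p_0,\hat p_0)=0$, which kills the $\rho^k\mathcal{W}_1(p_0,\hat p_0)$ term in Lemma~\ref{lemma:key}. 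Plugging in $C=1$, $\gamma\le 2\eta D^2/n$, and the expression for $\kappa$ yields exactly the claimed bound.

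There is essentially no substantive obstacle — the three preparatory lemmas were tailored so that the hypotheses of Lemma~\ref{lemma:key} are satisfied verbatim, and the proof amounts to a bookkeeping step. The only points that warrant a line of explanation are (i) matching the inequality $\mathcal{W}_1(P^n(\theta,\cdot),P^n(\tilde\theta,\cdot))\le \rho^n\|\theta-\tilde\theta\|$ to the abstract form $C\rho^n$ to confirm $C=1$, and (ii) noting that the equal-initialization hypothesis eliminates the transient term so that the uniform-in-$k$ bound $\frac{1-\rho^k}{1-\rho}\cdot\frac{2\eta D^2}{n}\cdot\kappa$ follows immediately.
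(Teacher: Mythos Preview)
Your proposal is correct and follows exactly the paper's approach: the paper's proof consists of the single sentence ``The result directly follows from Lemma~\ref{lem:quadratic:1}, Lemma~\ref{lem:quadratic:2}, Lemma~\ref{lem:quadratic:3} and Lemma~\ref{lemma:key},'' and you have simply spelled out the bookkeeping that makes this work, correctly identifying $C=1$, $\delta=\hat\rho$, $L=1-\hat\rho+\tfrac{\eta}{b}\mathbb{E}\|\hat q_1\|$, $\gamma\le 2\eta D^2/n$, and $\kappa$, and noting that the common deterministic initialization kills the transient term.
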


\begin{proof}
The result directly follows from Lemma~\ref{lem:quadratic:1}, Lemma~\ref{lem:quadratic:2}, 
Lemma~\ref{lem:quadratic:3} and Lemma~\ref{lemma:key}.
\end{proof}
By a direct application of \eqref{eq:generalization}, we can obtain a generalization bound for an $\mathcal{L}$-Lipschitz surrogate loss function, as follows: %\small
\begin{align*}
\left|\mathbb{E}_{\mathcal{A},X_n}~\left[ \hat{R}(\mathcal{A}(X_n),X_n) - R(\mathcal{A}(X_n)) \right]  \right| \leq \frac{\mathcal{L}}{1-\rho_0}\frac{2\eta D^{2}}{n}\max\left\{1+\Vert\theta\Vert,\frac{1-{\rho_0}+\frac{\eta}{b}\mathbb{E}\Vert\hat{q}_{1}\Vert}{1- {\rho_0}}\right\},
\end{align*} \normalsize
where $\rho_0 = \sup_{X} \|1 - \frac{\eta}{b}H_X \|$,  $H_X = \sum_{i \in \Omega_k, a_j \in X}a_j a_j^\top$ and $X$ is a random set of $n$-data points from the data generating distribution. The generalization bound obtained above does not include the mean square error in the unbounded case but covers a larger class of surrogate loss functions. Because of this incompatibility, a direct comparison is not possible; however, the rate obtained in the equation above has the same dependence on the number of samples that were obtained in the previous works \citep{lei2020fine}. For least squares, there are other works using integral operators that develop generalization bounds for SGD under a capacity condition \citep{lin2017optimal,pillaud2018statistical}. However, these bounds only hold for the least square loss. 
%\textcolor{red}{AR : To be cited references for rates comparison}
%%%%%%%%%%%%%%%%%%%%%%%%%%%
% \section{General Loss Functions}

\subsection{Strongly convex case}\label{sec:strongly:convex}

Next, we consider strongly convex losses. %that the general loss function.
In the remainder of the paper, we will always assume that for every $x \in \mathcal{X}$, $f(\cdot,x)$ is differentiable.

Before proceeding to the stability bound, we first introduce the following assumptions.
\begin{assumption}\label{assump:1}
There exist  constants $K_1, K_2>0$ such that
for any $\theta,\hat{\theta}\in\mathbb{R}^{d}$ and every $x \in \mathcal{X}$, %\small
\begin{align}
\|\nabla f(\theta, x) - \nabla f(\hat{\theta},\hat{x})\| 
\leq  K_1 \|\theta- \hat{\theta}\| + K_2 \| x-\hat{x} \|( \|\theta \| + \|\hat{\theta}\| +1).
\end{align} \normalsize
\end{assumption}
%\vspace{-2mm}
This assumption is a pseudo-Lipschitz-like condition on $\nabla f$ and is satisfied for various problems such as GLMs \citep{bach2014adaptivity}. 
Next, we assume that the loss function $f$ is strongly convex.
\begin{assumption}\label{assump:2} 
There exists
a universal constant $\mu>0$ such that
for any $\theta_{1},\theta_{2}\in\mathbb{R}^{d}$ and $x\in\mathcal{X}$, %\small
\begin{align*}
\left\langle\nabla f(\theta_{1},x)-\nabla f(\theta_{2},x),\theta_{1}-\theta_{2}\right\rangle
\geq
\mu\Vert\theta_{1}-\theta_{2}\Vert^{2}.
\end{align*} \normalsize
\end{assumption}
%\vspace{-2mm}
% Consider the empirical risk minimization:
% \begin{equation}
% \min_{\theta\in\mathbb{R}^{d}}\hat{F}(\theta,X_{n}),\qquad\text{where}\quad \hat{F}(\theta,X_{n}):=\frac{1}{n}\sum_{i=1}^{n}f(\theta,x_{i}),
% \end{equation}
% where $f(\theta,x_{i})$ denotes the cost induced by the data point $x_{i}$ 
% and $X_{n}=\{x_1, \ldots , x_n\} \subset \mathcal{X}^n$ 
%%%%%%%%%%%%%%%%%%%%%%%%

By using the same recipe as we used for quadratic losses, we obtain the following stability result. 
\begin{theorem}\label{thm:str:cvx}
Let $\theta_{0}=\hat{\theta}_{0}=\theta$.
Assume that Assumption~\ref{assump:1} and Assumption~\ref{assump:2} hold. 
We also assume that 
$\eta<\min\left\{\frac{1}{\mu},\frac{\mu}{K_{1}^{2}+64D^{2}K_{2}^{2}}\right\}$,
 $\sup_{x\in \mathcal{X}}\|x\| \leq D$ for some $D < \infty$, and $\sup_{x\in\mathcal{X}}\Vert\nabla f(0,x)\Vert\leq E$ for some $E<\infty$.
Let $\nu_{k}$ and $\hat{\nu}_{k}$ denote the distributions
of $\theta_{k}$ and $\hat{\theta}_{k}$ respectively. 
Then, we have %\small
\begin{align}
\mathcal{W}_{1}(\nu_{k},\hat{\nu}_{k})
&\leq
\frac{8DK_{2}(1-(1-\frac{\eta\mu}{2})^{k})}{n\mu}\left(\frac{2E}{\mu}+1\right)
\nonumber
\\
&\qquad\cdot
\max\left\{1+2\Vert\theta\Vert^{2}+\frac{2E^{2}}{\mu^{2}},2-\frac{\eta}{\mu}K_{1}^{2}-\frac{56\eta}{\mu}D^{2}K_{2}^{2}
+\frac{64\eta}{\mu^{3}}D^{2}K_{2}^{2}E^{2}\right\}.
\end{align} \normalsize
\end{theorem}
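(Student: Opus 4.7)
The plan is to replicate the same three-step Markov-chain-perturbation recipe used for the quadratic case and then apply Lemma~\ref{lemma:key}. Since $\theta_{0}=\hat{\theta}_{0}=\theta$ makes the initial term $\mathcal{W}_1(p_0,\hat{p}_0)$ vanish, only the $\gamma\kappa/(1-\rho)$ part of Lemma~\ref{lemma:key} survives and this is what I must upper bound. The three steps are: (i) geometric $1$-Wasserstein ergodicity of $P$, (ii) a quadratic Lyapunov function for $\hat{P}$, and (iii) a one-step perturbation bound between $\delta_\theta P$ and $\delta_\theta\hat{P}$ scaling like $1/n$.

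For Step (i), I would run two copies of SGD on the same $X_n$ with a common minibatch sequence, set $\Delta_k = \theta_k - \tilde{\theta}_k$, and expand $\|\Delta_k\|^2$. Assumption~\ref{assump:2} lower-bounds the cross term by $2\eta\mu\|\Delta_{k-1}\|^2$, and Assumption~\ref{assump:1} (applied with $\hat{x}=x$) makes the minibatch gradient map $K_1$-Lipschitz, giving $\|\Delta_k\|^2 \le (1-2\eta\mu+\eta^2 K_1^2)\|\Delta_{k-1}\|^2$. The step-size restriction $\eta < \mu/(K_1^2 + 64 D^2 K_2^2) \le \mu/K_1^2$ yields $1-2\eta\mu+\eta^2 K_1^2 \le 1-\eta\mu \le (1-\eta\mu/2)^2$, so iterating and taking a square root under the synchronous coupling gives $\mathcal{W}_1(P^k(\theta,\cdot), P^k(\tilde{\theta},\cdot)) \le (1-\eta\mu/2)^k\|\theta - \tilde{\theta}\|$, i.e., $C=1$ and $\rho = 1-\eta\mu/2$ in Lemma~\ref{lemma:key}.

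For Step (ii), the Lyapunov function I would choose is $\hat{V}(\theta) = 1 + 2\|\theta\|^2 + 2E^2/\mu^2$. Computing $\mathbb{E}[\|\hat{\theta}_k\|^2 \mid \hat{\theta}_{k-1}]$ uses $\langle \hat{\theta}, \nabla f(\hat{\theta},\hat{x})\rangle \ge \mu\|\hat{\theta}\|^2 - E\|\hat{\theta}\|$ (strong convexity anchored at $0$, combined with $\|\nabla f(0,\hat{x})\| \le E$), Young's inequality to absorb the cross term $E\|\hat{\theta}\|$, and Assumption~\ref{assump:1} together with $\|\hat{x}\| \le D$ to bound $\|\nabla f(\hat{\theta},\hat{x})\|^2$; the contribution $64 D^2 K_2^2$ to the step-size restriction arises precisely when controlling the $K_2$-pseudo-Lipschitz part so that the $O(\eta^2)$ terms are subordinate to the $\eta\mu$ drift. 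This yields $(\hat{P}\hat{V})(\theta) \le (1-\eta\mu/2)\hat{V}(\theta) + L$ for an explicit constant $L$, so $\delta = 1-\eta\mu/2$ and $L/(1-\delta)$ matches the second entry of the $\max$ in the theorem. For Step (iii), I again use common-minibatch coupling: the two iterates $\theta_1$ and $\hat{\theta}_1$ differ only on the event that the swapped index $j$ is drawn into $\Omega_1$ (probability $b/n$), and on that event Assumption~\ref{assump:1} with $\|x_j - \hat{x}_j\| \le 2D$ bounds the gradient mismatch by $2DK_2(2\|\theta\|+1)/b$, giving $\mathcal{W}_1(\delta_\theta P, \delta_\theta \hat{P}) \le \frac{2\eta DK_2(2\|\theta\|+1)}{n}$.

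The main obstacle I anticipate is matching the exact prefactor $(2E/\mu+1)$ that appears in the theorem. The ratio $\mathcal{W}_1(\delta_\theta P, \delta_\theta \hat{P})/\hat{V}(\theta)$ has a numerator linear in $\|\theta\|$ and a quadratic denominator, so it is bounded uniformly in $\theta$; however, recovering the precise constant requires the polynomial inequality $2\|\theta\|+1 \le 2(2E/\mu+1)\,\hat{V}(\theta)$, which is straightforward to check (the resulting quadratic in $\|\theta\|$ has negative discriminant for every $E,\mu \ge 0$) but crucially exploits the coupling of $E$ and $\mu$ already built into $\hat{V}$, rather than a naive quadratic envelope. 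Once $\gamma$ is identified as $\tfrac{4\eta DK_2(2E/\mu+1)}{n}$, the stated bound follows immediately by substituting $\rho, C, \gamma, \delta, L$ into Lemma~\ref{lemma:key} and using $1/(1-\rho) = 2/(\eta\mu)$ together with $\kappa = \max\{\hat{V}(\theta), L/(1-\delta)\}$.
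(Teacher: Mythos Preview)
Your three–step plan and the use of Lemma~\ref{lemma:key} are exactly how the paper proceeds, and your Steps~(i) and~(iii) coincide with the paper's Lemmas~\ref{lem:general:1} and~\ref{lem:general:3}: the same synchronous coupling gives $C=1$, $\rho=1-\tfrac{\eta\mu}{2}$, and the same one-step mismatch gives a numerator $\tfrac{2\eta D K_2(2\|\theta\|+1)}{n}$ and hence $\gamma\le \tfrac{4\eta D K_2}{n}(2E/\mu+1)$.

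The one substantive difference is Step~(ii). The paper does \emph{not} take the origin-centered Lyapunov function you propose; it takes $\hat V(\theta)=1+\|\theta-\hat\theta_\ast\|^2$, where $\hat\theta_\ast$ minimizes $\hat F(\cdot,\hat X_n)$. Centering at $\hat\theta_\ast$ kills the full-gradient term exactly ($\nabla\hat F(\hat\theta_\ast,\hat X_n)=0$), so the drift computation is clean and yields precisely the constants $2-\tfrac{\eta}{\mu}K_1^2-\tfrac{56\eta}{\mu}D^2K_2^2+\tfrac{64\eta}{\mu}D^2K_2^2\|\hat\theta_\ast\|^2$ with $\delta=1-\eta\mu$. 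The translation to the stated $E$-explicit form is done afterward via the separate estimate $\|\hat\theta_\ast\|\le E/\mu$ (the paper's Lemma~\ref{lem:str:cvx:minimizer}), together with $\|\theta-\hat\theta_\ast\|^2\le 2\|\theta\|^2+2E^2/\mu^2$ for the first entry of the $\max$.

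Your origin-centered $\hat V$ is a legitimate alternative and would produce a bound of the same order, but the assertion that ``$L/(1-\delta)$ matches the second entry of the $\max$'' is not verified and is unlikely to hold with those exact constants: centering at $0$ leaves a residual $\bar\nabla f(0)$ (of size $\le E$) that must be absorbed via Young's inequality, and the gradient-norm bound picks up an additional $E$-term, so the numerical coefficients $56$ and $64$ would change. In short, the architecture of your proof is the paper's, but to recover the theorem \emph{as stated} you should center the Lyapunov function at $\hat\theta_\ast$ rather than at the origin.
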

Similarly to the quadratic case, we can now directly obtain a bound on expected generalization error using \eqref{eq:generalization}. More precisely, for an $\mathcal{L}$-Lipschitz surrogate loss function $\ell$, we have
%\small
\begin{align*}
\Bigl|\mathbb{E}_{\mathcal{A},X_n}~\Bigl[ \hat{R}(\mathcal{A}(X_n),X_n) -& R(\mathcal{A}(X_n)) \Bigr]  \Bigr|  \leq \mathcal{L}\cdot\frac{8DK_{2}(1-(1-\frac{\eta\mu}{2})^{k})}{n\mu}\left(\frac{2E}{\mu}+1\right) \nonumber \\
    & \cdot\max\left\{1+2\Vert\theta\Vert^{2}+\frac{2E^{2}}{\mu^{2}},2-\frac{\eta}{\mu}K_{1}^{2}-\frac{56\eta}{\mu}D^{2}K_{2}^{2}
+\frac{64\eta}{\mu^{3}}D^{2}K_{2}^{2}E^{2}\right\}.
\nonumber
\end{align*}
\normalsize
The bound above has the same dependence on the number of samples as the ones of the previous stability analysis of (projected) SGD for strongly convex functions \citep{hardt2016train,liu2017algorithmic,lei2020fine}. However, we have a worse dependence on the strong convexity parameter $\mu$. 
% the dependence on the strong convexity parameter is worse in our case because of the fact that  we can get a generalization guarantee for a much broader class of surrogate loss functions.

 % \textcolor{red}{AR : This gives a bad rate. Dependence on $\mu$ is bad that is $\frac{1}{\mu^4}$. }

% \subsection{Convex case}

% \umut{Anant will work on this.}

%%%%%%%%%%%%%%%%%%%%%%%%%%%%%%%%%%%%%%%%%%%%%%%%%%%%%%%%%%%%
\subsection{Non-convex case with additive noise}\label{sec:nonconvex}

Finally, we consider a class of non-convex loss functions.
We assume that the loss function satisfies the following dissipativity condition.
\begin{assumption}\label{assump:3}
There exist
 constants $m>0$ and $K>0$ such that
for any $\theta_{1},\theta_{2}\in\mathbb{R}^{d}$ and $x\in\mathcal{X}$,%\small
\begin{align*}
\left\langle\nabla f(\theta_{1},x)-\nabla f(\theta_{2},x),\theta_{1}-\theta_{2}\right\rangle
\geq
m\Vert\theta_{1}-\theta_{2}\Vert^{2}-K.
\end{align*} \normalsize
\end{assumption}
% 
%This assumption is common in {\color{blue}global convergence for non-convex optimization} \cite{raginsky2017non,gao2022global} and has also been considered in neural network settings \cite{akiyama2022excess}.
The class of dissipative functions satisfying this assumption are the ones that admit some gradient growth in radial directions outside a compact set. Inside the compact set though, they can have quite general non-convexity patterns. As concrete examples, they include certain one-hidden-layer neural networks \citep{akiyama2022excess}; they arise in non-convex formulations of classification problems (e.g. in logistic regression with a sigmoid/non-convex link function); they can also arise in robust regression problems, see e.g. \cite{gao2022global}. Also, any function 
 that is strongly convex outside of a ball of radius 
 for some 
 will satisfy this assumption. Consequently, regularized regression problems where the loss is a strongly convex quadratic plus a smooth penalty that grows slower than a quadratic will belong to this class; a concrete example would be smoothed Lasso regression; many other examples are also given in \cite{erdogdu2022}. Dissipative functions also arise frequently in the sampling and Bayesian learning and global convergence in non-convex optimization literature \citep{raginsky2017non,gao2022global}.

% Consider the empirical risk minimization:
% \begin{equation}
% \min_{\theta\in\mathbb{R}^{d}}\hat{F}(\theta,X_{n}),\qquad\text{where}\quad \hat{F}(\theta,X_{n}):=\frac{1}{n}\sum_{i=1}^{n}f(\theta,x_{i}),
% \end{equation}
% where $f(\theta,x_{i})$ denotes the cost induced by the data point $x_{i}$ 
% and $X_{n}=\{x_1, \ldots , x_n\} \subset \mathcal{X}^n$. 
%%%%%%%%%%%%%%%%%%%%%%%%

Unlike the strongly-convex case, we can no longer
obtain a Wasserstein contraction bound using
the synchronous coupling technique as we did in the proof of 
Theorem~\ref{thm:str:cvx}. To circumvent this problem, 
in this setting, we consider a noisy version of SGD, with the following recursion: %\small
\begin{equation}
\theta_{k}=\theta_{k-1}-\eta\tilde{\nabla}\hat{F}_{k}(\theta_{k-1},X_{n})+\eta\xi_{k},
\qquad
\tilde{\nabla}\hat{F}_{k}(\theta_{k-1},X_{n}):=\frac{1}{b}\sum_{i\in\Omega_{k}}\nabla f(\theta_{k-1},x_{i}),
\end{equation} \normalsize
where $\xi_{k}$
are additional i.i.d. random vectors in $\mathbb{R}^{d}$, independent of $\theta_{k-1}$ and $\Omega_{k}$, satisfying
the following assumption.
\begin{assumption}\label{Assump:noise}
$\xi_{1}$ is random vector on $\mathbb{R}^{d}$
with a continuous density $p(x)$ that is positive everywhere, i.e. $p(x)>0$ for any $x\in\mathbb{R}^{d}$ and~
$\mathbb{E}[\xi_{1}]=0,~~
\sigma^{2}:=\mathbb{E}\left[\Vert\xi_{1}\Vert^{2}\right]<\infty.$

\end{assumption}
Note that the SGLD algorithm \citep{welling2011bayesian} is a special case of this recursion, whilst our noise model can accommodate non-Gaussian distributions with finite second-order moment.

% \begin{equation}
% \hat{\theta}_{k}=\hat{\theta}_{k-1}-\eta\tilde{\nabla}\hat{F}_{k}(\hat{\theta}_{k-1},\hat{X}_{n})+\eta\xi_{k},
% \qquad
% \tilde{\nabla}\hat{F}_{k}(\hat{\theta}_{k-1},\hat{X}_{n}):=\frac{1}{b}\sum_{i\in\Omega_{k}}\nabla f(\hat{\theta}_{k-1},\hat{x}_{i}).
% \end{equation}

% In order to obtain the algorithmic stability
% bound, that is a $1$-Wasserstein distance
% between the distribution of $\theta_{k}$ and $\hat{\theta}_{k}$, we need to establish a sequence of technical lemmas. 

% We first obtain a Wasserstein contraction bound.

Analogously, let us define the (noisy) SGD recursion with
the data set $\hat{X}_{n}$ as $$\hat{\theta}_{k}=\hat{\theta}_{k-1}-\eta\tilde{\nabla}\hat{F}_{k}(\hat{\theta}_{k-1},\hat{X}_{n})+\eta\xi_{k},$$ and
let $p(\theta,\theta_{1})$ denote
the probability density function of 
$\theta_{1}=\theta-\frac{\eta}{b}\sum_{i\in\Omega_{1}}\nabla f(\theta,x_{i})+\eta\xi_{1}$. Further let $\theta_*$ be a minimizer of $\hat{F}(\cdot,X_n)$.
Then, by following the same three-step recipe, we obtain the following stability bound. Here, we do not provide all the constants explicitly for the sake of clarity; the complete theorem statement is given in Theorem~\ref{thm:nonconvex_complete} 
(Appendix~\ref{ap:proof_nonconvex_w_1}). 

% \umut{We need to simplify these results to highlight its main message.}

% Finally, we obtain the following result. 

\begin{theorem}\label{thm:nonconvex}
Let $\theta_{0}=\hat{\theta}_{0}=\theta$.
Assume that Assumption~\ref{assump:1}, Assumption~\ref{assump:3} and Assumption~\ref{Assump:noise} hold.
We also assume that 
$\eta<\min\left\{\frac{1}{m},\frac{m}{K_{1}^{2}+64D^{2}K_{2}^{2}}\right\}$
and $\sup_{x\in \mathcal{X}}\|x\| \leq D$ for some $D < \infty$ and $\sup_{x\in\mathcal{X}}\Vert\nabla f(0,x)\Vert\leq E$ for some $E<\infty$.
For any $\hat{\eta}\in(0,1)$, define $M>0$
so that $\int_{\Vert\theta_{1}-\theta_{\ast}\Vert\leq M}p(\theta_{\ast},\theta_{1})d\theta_{1}\geq\sqrt{\hat{\eta}}$
and for any $R>\frac{2K_{0}}{m}$ where $K_{0}$ is defined in \eqref{defn:K:0} so that
\begin{equation}
\inf_{\theta,\theta_{1}\in\mathbb{R}^{d}:V(\theta)\leq R,\Vert\theta_{1}-\theta_{\ast}\Vert\leq M}\frac{p(\theta,\theta_{1})}{p(\theta_{\ast},\theta_{1})}\geq\sqrt{\hat{\eta}}.
\end{equation}
Let $\nu_{k}$ and $\hat{\nu}_{k}$ denote the distributions
of $\theta_{k}$ and $\hat{\theta}_{k}$ respectively. 
Then, we have %\small
\begin{align}\label{W:upper:bound}
\mathcal{W}_{1}(\nu_{k},\hat{\nu}_{k})
\leq
\frac{C_1(1-\bar{\eta}^{k})}{2\sqrt{\psi(1+\psi)}(1-\bar{\eta})}
\cdot
\frac{2b}{n},
\end{align} \normalsize
where for any $\eta_{0}\in(0,\hat{\eta})$
and $\gamma_{0}\in\left(1-m\eta+\frac{2\eta K_{0}}{R},1\right)$,  $\psi=\frac{\eta_{0}}{\eta K_{0}}$ 
and $\bar{\eta}=(1-(\hat{\eta}-\eta_{0}))\vee\frac{2+R\psi\gamma_{0}}{2+R\psi}$. The constant $C_1 \equiv C_1(\psi, \eta, \hat{\eta}, R, \gamma_0, K_1, K_2, \sigma^2, D,E)$ is explicitly stated in the proof.
\end{theorem}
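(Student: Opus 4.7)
The plan is to follow the same three-step recipe used in Sections~\ref{sec:quadratic}--\ref{sec:strongly:convex} (Wasserstein ergodicity of $P$, a Lyapunov function for $\hat{P}$, and a one-step perturbation gap), but to replace the contractivity hypothesis \eqref{eqn:key_pert} of Lemma~\ref{lemma:key} by a Hairer--Mattingly-style drift-plus-minorization argument. Under Assumption~\ref{assump:3} alone, synchronous coupling cannot produce a Wasserstein contraction, so one must exploit Assumption~\ref{Assump:noise}: the strictly positive continuous density of $\xi_1$ forces the noisy SGD kernel to satisfy a uniform minorization on bounded sets. The structure of $\bar{\eta}$ is already indicative: $1-(\hat{\eta}-\eta_0)$ is the minorization gap on the Lyapunov sublevel set $\{V\leq R\}$, while $(2+R\psi\gamma_0)/(2+R\psi)$ is the rate of return to $\{V\leq R\}$ produced by the dissipativity drift, with $\psi=\eta_0/(\eta K_0)$ the free parameter balancing them. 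The appendix version of the perturbation lemma, Lemma~\ref{lemma:modified}, is designed precisely to consume this kind of input.

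First I would build the Lyapunov function, taking $V(\theta)=1+\|\theta\|^{2}$ (or a shifted variant). Expanding $\mathbb{E}[\|\theta_{1}\|^{2}\mid\theta_{0}=\theta]$ and combining Assumption~\ref{assump:3} applied at $(\theta,0)$, the pseudo-Lipschitz bound in Assumption~\ref{assump:1}, $\sup_{x}\|\nabla f(0,x)\|\leq E$ and $\mathbb{E}\|\xi_{1}\|^{2}=\sigma^{2}$, yields a drift inequality of the form $(\hat{P}V)(\theta)\leq(1-m\eta)V(\theta)+K_{0}$ with $K_{0}$ an explicit function of $m,K,K_{1},K_{2},D,E,\sigma^{2},\eta$; this is the constant fixed in \eqref{defn:K:0}. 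The step-size condition $\eta<1/m$ and the quadratic residual $\eta^{2}K_{1}^{2}\|\theta\|^{2}$ together are what force $\eta<m/(K_{1}^{2}+64D^{2}K_{2}^{2})$, ensuring the effective drift coefficient lies strictly in $(0,1)$ and that the inequality $R>2K_{0}/m$ can be met.

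The main obstacle is establishing the Wasserstein contraction of $\hat{P}$. Assumption~\ref{Assump:noise} makes $p(\theta,\theta_{1})$ strictly positive and continuous, hence uniformly bounded below on the compact set $\{V\leq R\}\times\{\|\theta_{1}-\theta_{\ast}\|\leq M\}$. The two hypotheses defining $M$ and $p(\theta,\theta_{1})/p(\theta_{\ast},\theta_{1})\geq\sqrt{\hat{\eta}}$ convert this into a quantitative Doeblin-type minorization $\hat{P}(\theta,A)\geq\hat{\eta}\,\nu_{\ast}(A)$ for all $\theta\in\{V\leq R\}$, with $\nu_{\ast}$ supported in $\{\|\cdot-\theta_{\ast}\|\leq M\}$. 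Combining this minorization with the drift inequality via a Hairer--Mattingly coupling (synchronous coupling outside $\{V\leq R\}$, maximal coupling of the minorizing mass inside, measured under the weighted distance $d_{\psi}(\theta,\tilde\theta)=\mathbf{1}_{\theta\neq\tilde\theta}(2+\psi V(\theta)+\psi V(\tilde\theta))$) yields $\mathcal{W}_{d_{\psi}}(\hat{P}^{k}(\theta,\cdot),\hat{P}^{k}(\tilde\theta,\cdot))\leq\bar{\eta}^{k}\,d_{\psi}(\theta,\tilde\theta)$ with the rate $\bar{\eta}$ exactly as stated. The prefactor $1/(2\sqrt{\psi(1+\psi)}(1-\bar{\eta}))$ arises when this weighted metric is converted back into the standard $\mathcal{W}_{1}$ via an equivalence of the form $\|\cdot\|\leq c_{\psi}d_{\psi}$.

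For the third step I would bound the one-step perturbation by $\mathcal{W}_{1}(\delta_{\theta}P,\delta_{\theta}\hat{P})\leq\eta\,\mathbb{E}\|\tilde{\nabla}\hat{F}_{1}(\theta,X_{n})-\tilde{\nabla}\hat{F}_{1}(\theta,\hat{X}_{n})\|$. Since $X_{n}$ and $\hat{X}_{n}$ differ in a single index which lands in $\Omega_{1}$ with probability $b/n$, Assumption~\ref{assump:1} together with $\sup_{x}\|x\|\leq D$ bounds the gradient difference on that event by $2K_{2}D(\|\theta\|+1)$, producing a perturbation gap of order $(b/n)\sqrt{V(\theta)}\lesssim(b/n)V(\theta)$, so that $\gamma\leq C'\cdot b/n$ in the notation of Lemma~\ref{lemma:modified}. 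Substituting this together with the drift constants $(\delta,L)=(1-m\eta,K_{0})$ and the contraction rate $\bar{\eta}$ into Lemma~\ref{lemma:modified}, and collecting the dependence on $\psi,\eta,\hat{\eta},R,\gamma_{0},K_{1},K_{2},\sigma^{2},D,E$ into the single constant $C_{1}$, yields precisely the bound~\eqref{W:upper:bound}.
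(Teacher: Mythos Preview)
Your three-step outline matches the paper's approach closely: drift for a Lyapunov function, Hairer--Mattingly minorization on $\{V\leq R\}$ to obtain a $d_\psi$-contraction with rate $\bar\eta$, and a one-step perturbation bound fed into Lemma~\ref{lemma:modified}. Two points deserve correction.

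First, the one-step perturbation gap must be measured in $d_\psi$, not in $\mathcal{W}_1$. In Lemma~\ref{lemma:modified} the telescoping step bounds $\mathcal{W}_1(\hat p_i\hat P\,P^{n-i-1},\hat p_i P\,P^{n-i-1})$ by $C\rho^{n-i-1}\,d_\psi(\hat p_i P,\hat p_i\hat P)$ using the $\mathcal{W}_1$-from-$d_\psi$ contraction; hence $\gamma=\sup_\theta d_\psi(\delta_\theta P,\delta_\theta\hat P)/\hat V(\theta)$ is what is required. A $\mathcal{W}_1$ bound on $\delta_\theta P$ vs.\ $\delta_\theta\hat P$ does not imply a $d_\psi$ bound (weighted total variation dominates $\mathcal{W}_1$, not the other way round). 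The paper handles this (Lemma~\ref{lem:nonconvex:3}) by noting that with probability $(n-b)/n$ the two one-step laws coincide, and on the complementary event of probability $b/n$ one bounds $\int(1+\psi V)\,d(\delta_\theta P)+\int(1+\psi V)\,d(\delta_\theta\hat P)$ via second-moment estimates for $\theta_1,\hat\theta_1$ started at $\theta$; this is where the explicit dependence on $\sigma^2$, $K_1$, $\sup_x\|\nabla f(\theta_\ast,x)\|$ and $\|\theta_\ast-\hat\theta_\ast\|$ enters $C_1$.

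Second, the paper keeps the roles of the two chains separate: the $d_\psi$-contraction is proved for $P$ using $V(\theta)=1+\|\theta-\theta_\ast\|^2$ (drift $(PV)\leq(1-m\eta)V+\eta K_0$, hence $\psi=\eta_0/(\eta K_0)$), while the Lyapunov control required by Lemma~\ref{lemma:modified} is for $\hat P$ with $\hat V(\theta)=1+\|\theta-\hat\theta_\ast\|^2$. Your sketch runs both through $\hat P$; this is harmless by symmetry but explains why the final constant in the paper involves both $\|\theta_\ast\|$ and $\|\hat\theta_\ast\|$, which are then uniformly bounded via Lemma~\ref{lem:nonconvex:minimizer} in terms of $E,m,K$.
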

%\vspace{-2mm}
Contrary to our previous results, the proof technique for showing Wasserstein contraction (as in Lemma~\ref{lem:quadratic:1}) for this theorem relies on 
verifying the drift condition (Assumption~\ref{assump:drift}) and
the minorization condition (Assumption~\ref{assump:minor}) as given in \cite{hairer}. Once these conditions are satisfied, we invoke the explicitly computable
bounds on the convergence of Markov chains
developed in \cite{hairer}.

From equation~\eqref{eq:generalization}, we directly obtain the following generalization error bound for $\mathcal{L}$-Lipschitz surrogate loss function,  %\small
\begin{align*}
\left|\mathbb{E}_{\mathcal{A},X_n}~\left[ \hat{R}(\mathcal{A}(X_n),X_n) - R(\mathcal{A}(X_n)) \right]  \right|  \leq \mathcal{L}\cdot\frac{C_1(1-\bar{\eta}^{k})}{2\sqrt{\psi(1+\psi)}(1-\bar{\eta})}
\cdot
\frac{2b}{n},
\end{align*} \normalsize
where the constants are defined in Theorem~\ref{thm:nonconvex}\footnote{By using the decomposition \eqref{eqn:err_decomp}, we can obtain excess risk bounds for SGLD by combining our results with \cite{xu2017information}: it was shown that gradient Langevin dynamics has the following optimization error is
$O(\varepsilon + d^{3/2}b^{-1/4} \lambda^{-1} \log 1/\varepsilon )$ after $K = O(d \varepsilon ^{-1} \lambda^{-1} \log 1/\varepsilon)$ iterations, where $b$ is the mini-batch size and $\lambda$ is the uniform spectral gap of the continuous-time Langevin dynamics.
Similar results are given for SGLD in \cite[Theorem 3.6]{xu2017information}. }.  The above result  can be directly compared with the result in \cite[Theorem 4.1]{farghly2021time} that has the same dependence on $n$ and $b$. However, our result is more general in the sense that we do not assume our noise to be Gaussian noise. Note that \cite{li2019generalization} can also accommodate non-Gaussian noise; however, their bounds increase with the number of iterations. 

% \textcolor{red}{AR: I believe, we can move below's remark directly to appendix. This is not needed here. I have copied it in the appendix. Feel free to delete it.} \umut{I think it might be helpful, let's keep it for now}
\begin{remark}\label{remark:eta}
In Theorem~\ref{thm:nonconvex}, we can take $R=\frac{2K_{0}}{m}(1+\epsilon)$
for some fixed $\epsilon\in(0,1)$ so we can take %\small
\begin{equation}\label{eqn:hat:eta}
\hat{\eta}=\left(\max_{M>0}\left\{\min\left\{\int_{\Vert\theta_{1}-\theta_{\ast}\Vert\leq M}p(\theta_{\ast},\theta_{1})d\theta_{1},
\inf_{\substack{\theta,\theta_{1}\in\mathbb{R}^{d}:\Vert\theta_{1}-\theta_{\ast}\Vert\leq M
\\
\Vert\theta-\theta_{\ast}\Vert^{2}\leq\frac{2K_{0}}{m}(1+\epsilon)-1}}\frac{p(\theta,\theta_{1})}{p(\theta_{\ast},\theta_{1})}\right\}\right\}\right)^{2}.
\end{equation} \normalsize
Moreover, we can take
$\eta_{0}=\frac{\hat{\eta}}{2}$,
$\gamma_{0}=1-\frac{m\eta\epsilon}{2}$,
and $\psi=\frac{\hat{\eta}}{2\eta K_{0}}$,
so that %\small
\begin{equation}\label{eqn:bar:eta}
\bar{\eta}=\max\left\{1-\frac{\hat{\eta}}{2},\frac{2+\frac{(1+\epsilon)\hat{\eta}}{m}(1-\frac{m\eta\epsilon}{2})}{2+\frac{(1+\epsilon)\hat{\eta}}{m}}\right\}
=1-\frac{m\eta\epsilon(1+\epsilon)\hat{\eta}}{4m+2(1+\epsilon)\hat{\eta}},
\end{equation} \normalsize
provided that $\eta\leq 1$. 
Note that the parameter $\hat{\eta}$ in \eqref{eqn:hat:eta} appears in the upper bound in equation~\eqref{W:upper:bound} that controls the $1$-Wasserstein algorithmic stability of the SGD. It is easy to see from equation~\eqref{W:upper:bound} that the smaller $\bar{\eta}$, the smaller the $1$-Wasserstein bound. By the defintion of $\bar{\eta}$, the larger $\hat{\eta}$, the smaller the $1$-Wasserstein bound. As a result, we would like to choose $\hat{\eta}$ to be as large as possible,
and the equation \eqref{eqn:hat:eta} provides an explicit value that $\hat{\eta}$ can take, which is already the largest as possible.
\end{remark}
%\vspace{-2mm}
Next, let us provide some explicitly computable
lower bounds for $\hat{\eta}$ in \eqref{eqn:hat:eta}.
This is achievable if we specify further
the noise assumption. 
Under the assumption that $\xi_{k}$ are i.i.d.\ 
Gaussian distributed, we have the following corollary.

\begin{corollary}\label{cor:Gaussian}
Under the assumptions in Theorem~\ref{thm:nonconvex}, we further assume the noise $\xi_{k}$ are i.i.d. Gaussian $\mathcal{N}(0,\Sigma)$
so that $\mathbb{E}[\Vert\xi_{1}\Vert^{2}]=\text{tr}(\Sigma)=\sigma^{2}$.
We also assume that $\Sigma\prec I_{d}$. Then, we have %\small
\begin{align}
\hat{\eta}
&\geq
\max_{M\geq\eta\sup_{x\in\mathcal{X}}\Vert\nabla f(\theta_{\ast},x)\Vert}\Bigg\{
\min\Bigg\{
\Bigg(1-\frac{\exp(-\frac{1}{2}(\frac{M}{\eta}-\sup_{x\in\mathcal{X}}\Vert\nabla f(\theta_{\ast},x)\Vert)^{2})}{\sqrt{\det{(I_{d}-\Sigma)}}}
\Bigg)^{2},
\nonumber
\\
&\qquad\exp\Bigg\{-\frac{(1+K_{1}\eta)\left(\frac{2K_{0}}{m}(1+\epsilon)-1\right)^{1/2}}{\eta^{2}}\Vert\Sigma^{-1}\Vert
\nonumber
\\
&\qquad\quad
\cdot\Bigg((1+K_{1}\eta)\left(\frac{2K_{0}}{m}(1+\epsilon)-1\right)^{1/2}+2\left(M+\eta\sup_{x\in\mathcal{X}}\Vert\nabla f(\theta_{\ast},x)\Vert\right)\Bigg)\Bigg\}\Bigg\}\Bigg\}.
\end{align} \normalsize
\end{corollary}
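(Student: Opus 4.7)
The plan is to substitute the Gaussian noise assumption into the two quantities appearing in~\eqref{eqn:hat:eta} and bound each one separately, before applying the $\min$ and the $\max$. Conditional on the minibatch $\Omega_1$, the update $\theta_1=\theta-\eta G_{\Omega_1}(\theta)+\eta\xi_1$ with $G_{\Omega_1}(\theta):=\frac{1}{b}\sum_{i\in\Omega_1}\nabla f(\theta,x_i)$ and $\xi_1\sim\mathcal{N}(0,\Sigma)$ has an explicit Gaussian transition density $p_{\Omega_1}(\theta,\theta_1)$, and the full density $p(\theta,\theta_1)$ is its expectation in $\Omega_1$. For the lower bound on $\int_{\|\theta_1-\theta_\ast\|\leq M}p(\theta_\ast,\theta_1)\,d\theta_1$, I would use the triangle inequality to include the event $\{\|\xi_1\|\leq M/\eta-\sup_x\|\nabla f(\theta_\ast,x)\|\}$ inside $\{\|\theta_1-\theta_\ast\|\leq M\}$, which is possible exactly under the assumed range $M\geq\eta\sup_x\|\nabla f(\theta_\ast,x)\|$. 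I would then bound the complement via the exponential Chernoff inequality $\mathbb{P}(\|\xi_1\|\geq r)\leq \mathbb{E}[e^{\|\xi_1\|^2/2}]\,e^{-r^2/2}$, evaluating $\mathbb{E}[e^{\|\xi_1\|^2/2}]=1/\sqrt{\det(I_d-\Sigma)}$ by an explicit Gaussian integral, which is finite precisely because $\Sigma\prec I_d$. This recovers the first term inside the $\min$.

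For the density ratio $p(\theta,\theta_1)/p(\theta_\ast,\theta_1)$, I would first argue that if $p_{\Omega_1}(\theta,\theta_1)\geq c\,p_{\Omega_1}(\theta_\ast,\theta_1)$ for every realization of $\Omega_1$, then integrating in $\Omega_1$ gives the same inequality for the mixture densities, so it suffices to bound the conditional ratio uniformly. Writing this conditional ratio as an exponential of a quadratic form and using the algebraic identity $\|a\|_{\Sigma^{-1}}^2-\|b\|_{\Sigma^{-1}}^2=(a-b)^\top\Sigma^{-1}(a+b)$ with $a=\theta_1-\theta+\eta G_{\Omega_1}(\theta)$ and $b=\theta_1-\theta_\ast+\eta G_{\Omega_1}(\theta_\ast)$, I would invoke Cauchy--Schwarz together with Assumption~\ref{assump:1} (applied with matching data, which yields $\|G_{\Omega_1}(\theta)-G_{\Omega_1}(\theta_\ast)\|\leq K_1\|\theta-\theta_\ast\|$) to bound $\|a-b\|\leq(1+K_1\eta)\|\theta-\theta_\ast\|$, and then use $a+b=(a-b)+2b$ together with $\|\theta_1-\theta_\ast\|\leq M$ to bound $\|a+b\|\leq(1+K_1\eta)\|\theta-\theta_\ast\|+2(M+\eta\sup_x\|\nabla f(\theta_\ast,x)\|)$. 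Substituting the constraint $\|\theta-\theta_\ast\|\leq\bigl(\tfrac{2K_0}{m}(1+\epsilon)-1\bigr)^{1/2}$ and then squaring (to match the $(\cdot)^2$ in~\eqref{eqn:hat:eta}) accounts for the factor $1/\eta^2$ in the stated exponential term.

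The main technical subtlety is that $p(\theta,\theta_1)$ is a mixture of Gaussians over random minibatches, so density-ratio arguments that are routine for a single Gaussian do not directly transfer. My plan handles this by producing conditional bounds that are \emph{uniform} in $\Omega_1$: for the tail this uses the finiteness of $\sup_x\|\nabla f(\theta_\ast,x)\|$ (a consequence of the boundedness of $\mathcal{X}$ together with Assumption~\ref{assump:1}), and for the ratio it uses the pointwise Lipschitz-in-$\theta$ consequence of the same assumption. Once both bounds hold uniformly in $\Omega_1$, they lift to the mixture without loss, and maximizing over $M$ in the resulting lower bound on the $\min$ yields the corollary.
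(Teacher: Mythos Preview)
Your proposal is correct and follows essentially the same route as the paper's own proof: the Chernoff tail bound (via $\mathbb{E}[e^{\|\xi_1\|^2/2}]=1/\sqrt{\det(I_d-\Sigma)}$) for the first term, the conditional-on-$\Omega_1$ Gaussian density ratio bounded through the quadratic-form identity and the Lipschitz estimate $\|\mu(\theta)-\mu(\theta_\ast)\|\leq(1+K_1\eta)\|\theta-\theta_\ast\|$ for the second term, and the observation that uniform-in-$\Omega_1$ bounds pass to the mixture. The only cosmetic difference is that the paper bounds $\|a+b\|$ via $\|a\|+\|b\|$ before reducing to $\|a-b\|+2\|b\|$, whereas you go there directly; the final squaring you describe is exactly how the paper passes from the $\tfrac{1}{2\eta^2}$ exponent to the $\tfrac{1}{\eta^2}$ exponent in the statement.
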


The above corollary provides an explicit lower bound for $\hat{\eta}$ (instead of the less transparent inequality constraints in Theorem~\ref{thm:nonconvex}), and by combining with Remark~\ref{remark:eta} (see equation~\eqref{eqn:bar:eta}) leads to an explicit
formula for $\bar{\eta}$ 
which is essential to characterize the Wasserstein upper bound in \eqref{W:upper:bound} in Theorem~\ref{thm:nonconvex}.

%%%%%%%%%%%%%%%%%%%%%%%%%%%%%%%%%%%%%%%%%%%%%%%%%%%%%%%%%%%%%%

\section{Wasserstein Stability of SGD without Geometric Ergodicity}

While the Markov chain perturbation theory enabled us to develop stability bounds for the case where we can ensure geometric ergodicity in the Wasserstein sense (i.e., proving contraction bounds), we have observed that such a strong ergodicity notion might not hold for non-strongly convex losses. In this section, we will prove two more stability bounds for SGD, without relying on \cite{RS2018}, hence without requiring geometric ergodicity. To the best of our knowledge, these are the first uniform-time stability bounds for the considered classes of convex and non-convex problems. 

%%%%%%%%%%%%%%%%%%%%%%%%%%%%%%%%%%%%%%%%%%
\subsection{Non-convex case without additive noise}\label{sec:nonconvex:no:noise}

The stability result we obtained in Theorem~\ref{thm:nonconvex} required us to introduce an additional noise (Assumption~\ref{Assump:noise}) to be able to invoke Lemma~\ref{lemma:key}.
We will now show that it is possible to use
a more direct approach to obtain 2-Wasserstein algorithmic
stability in the non-convex case under Assumption~\ref{assump:3} without relying on \cite{RS2018}. 
However, we will observe that without geometric ergodicity
will have
a non-vanishing bias term in the bound.
% % of data points goes to infinity.
Note that, since $\mathcal{W}_1(\nu_k,\hat{\nu}_k) \leq \mathcal{W}_p(\nu_k,\hat{\nu}_k)$ for all $p \geq 1$, the following bound still yields a generalization bound by \eqref{eq:generalization}.

\begin{theorem}\label{thm:str:cvx:2}
Assume $\theta_{0}=\hat{\theta}_{0}=\theta$.
We also assume that Assumption~\ref{assump:1} and Assumption~\ref{assump:3} hold
and $\eta<\min\left\{\frac{1}{m},\frac{m}{K_{1}^{2}+64D^{2}K_{2}^{2}}\right\}$
and $\sup_{x\in \mathcal{X}}\|x\| \leq D$ for some $D < \infty$ and $\sup_{x\in\mathcal{X}}\Vert\nabla f(0,x)\Vert\leq E$ for some $E<\infty$.
Let $\nu_{k}$ and $\hat{\nu}_{k}$ denote the distributions
of $\theta_{k}$ and $\hat{\theta}_{k}$ respectively. 
Then, we have %\small
\begin{align}\label{nonconvex:W2:bound}
\mathcal{W}_{2}^{2}(\nu_{k},\hat{\nu}_{k})
\leq
\left(1-(1-\eta m)^{k}\right)
\cdot\left(\frac{4D^{2}K_{2}^{2}\eta(8B+2)}{bnm}
+\frac{4K_{2}D(1+K_{1}\eta)}{nm}\left(1+5B\right)+\frac{2K}{m}\right),
\end{align} \normalsize
where the constant $B$ is explicitly defined in the proof. 
% \begin{align}
% B&:=4\Vert\theta\Vert^{2}
% +6\left(\frac{E+\sqrt{E^{2}+4mK}}{2m}\right)^{2}
% +4-\frac{2\eta}{m}K_{1}^{2}-\frac{112\eta}{m}D^{2}K_{2}^{2}
% \nonumber
% \\
% &\qquad\qquad\qquad\qquad\qquad
% +\frac{128\eta}{m}D^{2}K_{2}^{2}
% \left(\frac{E+\sqrt{E^{2}+4mK}}{2m}\right)^{2}
% +\frac{4K}{m}.
% \end{align}
\end{theorem}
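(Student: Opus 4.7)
The plan is to use a synchronous coupling of the two SGD chains (same minibatch $\Omega_k$ and, implicitly, same step randomness at each iteration) and to derive a contractive one-step recursion in $a_k := \mathbb{E}\|\theta_k - \hat{\theta}_k\|^2$ of the form $a_k \leq (1-\eta m) a_{k-1} + c$ for an explicit constant $c$. Iterating from $a_0 = 0$ then yields $a_k \leq \frac{c}{\eta m}\bigl(1 - (1-\eta m)^k\bigr)$, which after identifying the three pieces of $c$ gives exactly the claimed bound, and since $\mathcal{W}_2^2(\nu_k,\hat{\nu}_k) \leq \mathbb{E}\|\theta_k - \hat{\theta}_k\|^2$ under the coupling, the result follows.

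For the one-step recursion, I expand
\[
\|\Delta_k\|^2 = \|\Delta_{k-1}\|^2 - 2\eta\,\langle \Delta_{k-1},\, G_k\rangle + \eta^2 \|G_k\|^2,
\]
where $\Delta_k := \theta_k - \hat{\theta}_k$ and $G_k := \tilde{\nabla}\hat{F}_k(\theta_{k-1},X_n) - \tilde{\nabla}\hat{F}_k(\hat{\theta}_{k-1},\hat{X}_n)$. Splitting the $b$ summands of $G_k$ into those with $i\neq j$ (where $j$ is the single index at which the two datasets differ) and $i=j$, the common-data summands contribute $-2\eta m\|\Delta_{k-1}\|^2 + 2\eta K$ through Assumption~\ref{assump:3}, while the differing-data summand, which is present only when $j\in\Omega_k$ (an event of probability $b/n$), contributes an additional term controlled by Assumption~\ref{assump:1} of order $K_2 D(\|\theta_{k-1}\| + \|\hat{\theta}_{k-1}\| + 1)\,\|\Delta_{k-1}\|$. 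The quadratic term $\eta^2\|G_k\|^2$ is handled similarly: the common part is bounded by $2\eta^2 K_1^2 \|\Delta_{k-1}\|^2$ (absorbed into the contraction because the step-size condition forces $\eta K_1^2 < m$), and the differing part contributes a factor of order $\eta^2 D^2 K_2^2/b$ times $(\|\theta_{k-1}\| + \|\hat{\theta}_{k-1}\| + 1)^2$ with probability $b/n$.

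To control the $\|\theta_{k-1}\|$ and $\|\hat{\theta}_{k-1}\|$ terms that appear multiplied by these $1/n$ factors, I will establish a uniform-in-$k$ second-moment bound $\sup_k \max\{\mathbb{E}\|\theta_k\|^2,\mathbb{E}\|\hat{\theta}_k\|^2\} \leq B$ by a Lyapunov argument: Assumption~\ref{assump:3} applied with $\theta_2=0$, combined with $\|\nabla f(0,x)\|\leq E$ and Young's inequality, yields a one-step bound $\mathbb{E}\|\theta_k\|^2 \leq (1-\eta m)\mathbb{E}\|\theta_{k-1}\|^2 + \text{const}$ under the step-size assumption; iterating gives an explicit $B$. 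Assembling all the pieces and taking expectation over $\Omega_k$ via $\mathbb{P}(j\in\Omega_k) = b/n$ produces $c$ as the sum of three terms: $2\eta K$ from the dissipativity bias, a term of order $\eta^2 D^2 K_2^2 (1+B)/(bn)$ from $\mathbb{E}\|G_k\|^2$, and a term of order $\eta K_2 D(1+K_1\eta)(1+B)/n$ from the cross term, matching the three summands inside the brackets of \eqref{nonconvex:W2:bound}.

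The main obstacle is not conceptual but bookkeeping: one must carefully orchestrate the step-size constraint so that the $2\eta^2 K_1^2$ contribution and the additional $64 D^2 K_2^2$ term inherited from squaring the cross term are both absorbed into the contraction factor $(1-\eta m)$ without weakening it, while simultaneously propagating the explicit constants $K_1, K_2, D, E, B$ through to the $1/n$ and $1/(bn)$ pieces of the final bound. Conceptually, the key feature distinguishing this result from Theorem~\ref{thm:str:cvx} and Theorem~\ref{thm:nonconvex} is the nonvanishing $\frac{2K}{m}$ bias: it arises directly from the $-K$ slack in the dissipativity inequality and cannot be eliminated by iteration precisely because geometric ergodicity is unavailable in this regime, reflecting the ``level of non-convexity'' discussed in the introduction.
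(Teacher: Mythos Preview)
Your proposal is correct and follows essentially the same route as the paper: synchronous coupling, a one-step contractive recursion $\mathbb{E}\|\theta_k-\hat{\theta}_k\|^2\leq(1-\eta m)\mathbb{E}\|\theta_{k-1}-\hat{\theta}_{k-1}\|^2+c$, a uniform second-moment bound $B$, and iteration from zero.

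Two organizational differences are worth noting. First, the paper does not split $G_k$ by the index $i=j$ versus $i\neq j$; instead it adds and subtracts $\nabla f(\hat{\theta}_{k-1},x_i)$ to isolate an error term $\mathcal{E}_k=\sum_{i\in\Omega_k}\bigl(\nabla f(\hat{\theta}_{k-1},\hat{x}_i)-\nabla f(\hat{\theta}_{k-1},x_i)\bigr)$, so that Assumption~\ref{assump:3} is applied uniformly to all $b$ summands (at the common data $x_i$) and the full contraction $-2\eta m\|\Delta_{k-1}\|^2+2\eta K$ emerges without any case analysis on whether $j\in\Omega_k$. This is slightly cleaner than your index-splitting, which would otherwise leave $b-1$ versus $b$ summands to reconcile. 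Second, the paper derives $B$ not by a direct Lyapunov around $0$ but by reusing Lemma~\ref{lem:nonconvex:2} (the drift for $\hat{V}(\theta)=1+\|\theta-\hat{\theta}_*\|^2$) together with Lemma~\ref{lem:nonconvex:minimizer} bounding $\|\hat{\theta}_*\|$; this is where the $64D^2K_2^2$ piece of the step-size constraint is actually consumed, not in the main recursion as you suggest (there only $\eta\leq m/K_1^2$ is used). Your Lyapunov-at-zero argument also works and arguably yields a simpler $B$, but you would need to be a bit careful with Young's inequality to ensure the $\eta^2K_1^2$ term is absorbed under the stated step-size.
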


% although our new bound in 2-Wasserstein algorithmic will have
% a bias term so that it will not vanish as the number
% % of data points goes to infinity.
% We have the following result.

While the bound 
\eqref{nonconvex:W2:bound} does not increase with the number of iterations, it is easy to see that it does not vanish
as $n\rightarrow\infty$, and it is small only
when $K$ from the dissipativity condition (Assumption~\ref{assump:3})
is small. In other words, if we consider $K$ to be the level of non-convexity (e.g., $K=0$ corresponds to strong convexity), as the function becomes `more non-convex' the persistent term in the bound will get larger. While this persistent term might make the bound vacuous when $n \to \infty$, for moderate $n$ the bound can be still informative as the persistent term might be dominated by the first two terms.

Moreover, discarding the persistent bias term, this bound leads to a generalization bound with rate $n^{-1/2}$, rather than $n^{-1}$ as before. This indicates that it is beneficial to add
additional noise $\xi_{k}$ in SGD as in Theorem~\ref{thm:nonconvex}
in order for the dynamics to be geometrically ergodic
that can lead to a sharp bound as $n\rightarrow\infty$. Finally, we note that as Theorem~\ref{thm:str:cvx:2} involves 2-Wasserstein distance, it can pave the way for generalization bounds without requiring a surrogate loss. Yet,  this is not immediate and would require deriving uniform $L^2$ bounds for the iterates, e.g., \cite{raginsky2017non}.

%%%%%%%%%%%%%%%%%%%%%%%%%%%%%%%%%%%%%%

\subsection{Convex case with additional geometric structure}\label{sec:convex}

We now present our final stability bound, where we consider relaxing the strong convexity assumption (Assumption~\ref{assump:2})
to the following milder assumption.
\begin{assumption}\label{assump:sub}
There exists universal
constants $\mu>0$ and $p\in(1,2)$ such that
for any $\theta_{1},\theta_{2}\in\mathbb{R}^{d}$ and $x\in\mathcal{X}$, %\small
$\left\langle\nabla f(\theta_{1},x)-\nabla f(\theta_{2},x),\theta_{1}-\theta_{2}\right\rangle
\geq
\mu\Vert\theta_{1}-\theta_{2}\Vert^{p}$. \normalsize
\end{assumption}
Note that as $p<2$, the function class can be seen as an intermediate class between convex and strongly convex functions, and such a class of functions has been studied in the optimization literature \citep{dunn1981global,bertsekas2015convex}. %It basically says that the gradient is an $\alpha$-monotone operator 
% \umut{is there any example paper which uses this class?}

We analogously modify Assumption~\ref{assump:1} and consider the following assumption.

\begin{assumption}\label{assump:Holder}
There exist constants $K_1,K_2>0$ and $p\in(1,2)$ such that
for any $\theta,\hat{\theta}\in\mathbb{R}^{d}$ and every $x \in \mathcal{X}$,  %\small
$\|\nabla f(\theta, x) - \nabla f(\hat{\theta},\hat{x})\| 
\leq  K_1 \|\theta- \hat{\theta}\|^{\frac{p}{2}} + K_2 \| x-\hat{x} \|( \|\theta \|^{p-1} + \|\hat{\theta}\|^{p-1} +1)$.
 \normalsize
\end{assumption}

The next theorem establishes a stability bound for the considered class of convex losses in the stationary regime of SGD.

\begin{theorem}\label{cor:sub}
Let $\theta_{0}=\hat{\theta}_{0}=\theta$. 
Suppose Assumption~\ref{assump:sub} and Assumption~\ref{assump:Holder} hold (with $p\in(1,2)$) and $\eta\leq\frac{\mu}{K_{1}^{2}+2^{p+4}D^{2}K_{2}^{2}}$
and $\sup_{x\in \mathcal{X}}\|x\| \leq D$ for some $D < \infty$ and $\sup_{x\in\mathcal{X}}\Vert\nabla f(0,x)\Vert\leq E$ for some $E<\infty$.  Then $\nu_{k}$
and $\hat{\nu}_{k}$ converge to the unique stationary distributions $\nu_{\infty}$ and $\hat{\nu}_{\infty}$ respectively and moreover, we have %\small
\begin{align}
\mathcal{W}_{p}^{p}(\nu_{\infty},\hat{\nu}_{\infty})
\leq
\frac{C_{2}}{bn}
+\frac{C_{3}}{n},
\end{align} \normalsize
where the constants $C_2 \equiv C_2(\eta,\mu,K_2,D,E)$ 
and $C_3 \equiv C_3(\eta,\mu,K_{1},K_2,D,E)$ are explicitly stated in the proof.
\end{theorem}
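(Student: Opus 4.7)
The plan is to adapt the synchronous-coupling argument of Theorem~\ref{thm:str:cvx} to the weaker $p$-dissipativity of Assumption~\ref{assump:sub}, but to execute it in the stationary regime because the case $p<2$ destroys the clean one-step $\mathcal{W}_1$-contraction that Lemma~\ref{lemma:key} requires. With the same minibatch $\Omega_k$ and identical initialization driving both chains, set $D_k := \theta_k - \hat\theta_k$ and decompose
\[
D_k = D_{k-1} - \eta g_k - \eta\delta_k,
\]
where $g_k := \tilde\nabla\hat F_k(\theta_{k-1},X_n) - \tilde\nabla\hat F_k(\hat\theta_{k-1},X_n)$ is the iterate-mismatch term and $\delta_k := \tilde\nabla\hat F_k(\hat\theta_{k-1},X_n) - \tilde\nabla\hat F_k(\hat\theta_{k-1},\hat X_n)$ is the one-sample data-mismatch term. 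Since $\delta_k = 0$ unless the differing index belongs to $\Omega_k$ (probability $b/n$), Assumption~\ref{assump:Holder} together with $\|x\|\le D$ gives $\mathbb{E}\|\delta_k\|^2 = O(1/(bn))$, which will feed the $C_2/(bn)$ summand, while the coarser moment $\mathbb{E}\|\delta_k\|^{p/(p-1)}$ scales like $1/n$ and ultimately feeds the $C_3/n$ summand.

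Before the stability bound I would settle the first claim of the theorem, namely existence and uniqueness of $\nu_\infty$ and $\hat\nu_\infty$. Applying Assumption~\ref{assump:sub} at $\theta_2=0$, using $\|\nabla f(0,x)\|\le E$ through a Young split, and controlling the SGD variance with Assumption~\ref{assump:Holder} yields a Foster-type drift $\mathbb{E}[\|\theta_k\|^2\mid\theta_{k-1}] \le \|\theta_{k-1}\|^2 - \eta\mu\|\theta_{k-1}\|^p + \eta C$, which in particular secures $\sup_k\mathbb{E}\|\theta_k\|^2 < \infty$---needed downstream to bound $\mathbb{E}\|\delta_k\|^2$. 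Uniqueness and $\mathcal{W}_p$-convergence then follow from a same-data synchronous coupling: two copies $\theta_k,\theta_k'$ of the chain on $X_n$ driven by a common $\Omega_k$ satisfy
\[
\mathbb{E}\|\theta_k - \theta_k'\|^2 \;\le\; \mathbb{E}\|\theta_{k-1} - \theta_{k-1}'\|^2 \;-\; \eta\mu\,\mathbb{E}\|\theta_{k-1} - \theta_{k-1}'\|^p
\]
under the stated step size, which forces the $L^2$ distance---hence the $\mathcal{W}_p$ distance---to zero.

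The stability bound is obtained by expanding $\|D_k\|^2 = \|D_{k-1}-\eta g_k\|^2 - 2\eta\langle D_{k-1}-\eta g_k,\delta_k\rangle + \eta^2\|\delta_k\|^2$, using Assumptions~\ref{assump:sub} and~\ref{assump:Holder} with the step-size bound to obtain $\mathbb{E}[\|D_{k-1}-\eta g_k\|^2\mid\mathcal F_{k-1}] \le \|D_{k-1}\|^2 - \eta\mu\|D_{k-1}\|^p$, and then passing to the stationary regime. The $\mathbb{E}_\infty\|D\|^2$-terms cancel, leaving
\[
\eta\mu\,\mathbb{E}_\infty\|D\|^p \;\le\; 2\eta\,\mathbb{E}_\infty[\|D_\infty - \eta g\|\,\|\delta\|] \;+\; \eta^2\,\mathbb{E}_\infty\|\delta\|^2.
\]
I would then treat the cross term with two distinct Young splits: a standard $L^2$-$L^2$ one matching $\|\delta\|^2$ (producing the $C_2/(bn)$ summand, whose constant picks up $K_2, D, E, \mu$ but not $K_1$, consistent with the theorem's statement) and a conjugate $L^p$-$L^{p/(p-1)}$ one matching $\|\delta\|^{p/(p-1)}$ (producing the $C_3/n$ summand, whose constant inherits $K_1$ from the Hölder bound $\mathbb{E}\|D-\eta g\|^p \le (\mathbb{E}\|D-\eta g\|^2)^{p/2}$), and then absorb the resulting stray $\mathbb{E}_\infty\|D\|^p$ contributions back into the left-hand side using the step-size restriction. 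The main obstacle is exactly this cross-term step: because Assumption~\ref{assump:sub} with $p<2$ only delivers contraction in the $\|\cdot\|^p$ norm rather than $\|\cdot\|^2$, a single Young's parameter cannot simultaneously control the cross term and the $\eta^2\|\delta\|^2$ term---the two-parameter split is essential, and its presence is what produces the two-summand structure $C_2/(bn) + C_3/n$ in the final bound on $\mathbb{E}_\infty\|D\|^p \ge \mathcal{W}_p^p(\nu_\infty,\hat\nu_\infty)$.
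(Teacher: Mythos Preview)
Your high-level route is sound and close to the paper's: synchronous coupling, expand $\|D_k\|^2$, use Assumptions~\ref{assump:sub} and~\ref{assump:Holder} under the step-size bound to get the pointwise one-step drift $\|D_{k-1}-\eta g_k\|^2 \le \|D_{k-1}\|^2 - \eta\mu\|D_{k-1}\|^p$, and establish ergodicity via the same-data coupling. Where you diverge is in how you pass to the stationary bound and how you treat the cross term. The paper does \emph{not} work directly at stationarity and never invokes Young's inequality on the cross term. Instead it bounds the cross term pointwise via individual-iterate moments,
\[
\|D_{k-1}-\eta g_k\|\cdot\|\delta_k\| \le (1+K_1\eta)\bigl(\|\theta_{k-1}\|+\|\hat\theta_{k-1}\|+1\bigr)\cdot\tfrac{2DK_2}{b}\bigl(2\|\hat\theta_{k-1}\|^{p-1}+1\bigr)\mathbf{1}_{i_0\in\Omega_k},
\]
reduces the product via the elementary inequality $(x+y+1)(2y^{p-1}+1)\le 3x^p+7y^p+5$, and controls the resulting $\mathbb{E}\|\theta_{k-1}\|^p$, $\mathbb{E}\|\hat\theta_{k-1}\|^p$ by a separate Ces\`aro moment lemma (Lemma~\ref{lem:sub}). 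This produces a finite-$k$ Ces\`aro bound on $\frac{1}{k}\sum_i\mathbb{E}\|D_{i-1}\|^p$ (Theorem~\ref{thm:sub}), and the stationary result then follows by the ergodic theorem for the coupled chain together with Fatou's lemma. No absorption of $\|D\|^p$-terms into the left-hand side is ever used.

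Your two-split plan for the cross term has a gap as written: an $L^2$--$L^2$ Young split produces a term $\mathbb{E}_\infty\|D-\eta g\|^2 \le \mathbb{E}_\infty\|D\|^2$, which is \emph{not} dominated by the $\eta\mu\,\mathbb{E}_\infty\|D\|^p$ available on the left when $p<2$, so that piece cannot be absorbed. A single $L^p$--$L^{p/(p-1)}$ split does close, because the pointwise inequality $\|D-\eta g\|^2\le\|D\|^2$ immediately gives $\|D-\eta g\|^p\le\|D\|^p$ and absorption goes through; this yields a legitimate alternative proof, though with constants and $b$-dependence that differ from the paper's explicit $C_2,C_3$. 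Finally, your Foster-drift claim does not by itself deliver $\sup_k\mathbb{E}\|\theta_k\|^2<\infty$: the drift $\mathbb{E}[\|\theta_k\|^2\mid\theta_{k-1}]\le\|\theta_{k-1}\|^2-\eta\mu\|\theta_{k-1}\|^p+\eta C$ is sub-geometric for $p<2$, and Jensen for the concave map $v\mapsto v^{p/2}$ goes the wrong way to close a recursion in $\mathbb{E}\|\theta_k\|^2$. The needed second-moment bound at stationarity is instead a consequence of the $\mathcal{W}_2$-convergence obtained in the ergodicity step (the paper's Cauchy-sequence argument in $\mathcal{P}_2$).
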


While we have relaxed the geometric ergodicity condition for this case, in the proof of Theorem~\ref{cor:sub}, we show that the Markov chain $(\theta_k)_{k\geq 0}$ is simply ergodic, i.e., $\lim_{k\to\infty}\mathcal{W}_p(\nu_k, \nu_\infty) = 0$. Hence, even though we still obtain a time-uniform bound, our bound holds asymptotically in $k$, due to the lack of an explicit convergence rate for $\mathcal{W}_p(\nu_k, \nu_\infty)$. On the other hand, the lack of strong convexity here results in a generalization bound with rate $n^{-1/p}$, whereas for the strongly convex case, i.e., $p=2$, we previously obtained a rate of $n^{-1}$. This might be an indicator that there might be still room for improvement in terms of the rate, at least for this class of loss functions.   
%  Next, we will show that the sequences $\theta_{k}$ and $\hat{\theta}_{k}$ are ergodic so that $\nu_{k}$
% and $\hat{\nu}_{k}$ converge to the unique stationary distributions $\nu_{\infty}$ and $\hat{\nu}_{\infty}$ respectively, then we
% can obtain the following corollary from 
% Theorem~\ref{thm:sub}.

%%%%%%%%%%%%%%%%%%%%%%%%%%%%%%%
\section{Conclusion}

We proved time-uniform Wasserstein-stability bounds for SGD and its noisy versions under different strongly convex, convex, and non-convex classes of functions. By making a connection to Markov chain perturbation results \citep{RS2018}, we introduced a three-step guideline for proving stability bounds for stochastic optimizers. As this approach required geometric ergodicity, we finally relaxed this condition and proved two other stability bounds for a large class of loss functions. 

The main limitation of our approach is that it requires Lipschitz surrogate loss functions, as it is based on the Wasserstein distance. Hence, our natural next step will be to extend our analysis without such a requirement. Finally, due to the theoretical nature of this study, it does not contain any direct potential societal impacts.

\section*{Acknowledgments}
Lingjiong Zhu is partially supported by the grants NSF DMS-2053454, NSF DMS-2208303, and a Simons Foundation Collaboration Grant. 
 Mert G\"{u}rb\"{u}zbalaban's research are supported in part by the grants Office of Naval Research Award Number
N00014-21-1-2244, National Science Foundation (NSF)
CCF-1814888, NSF DMS-2053485.  
Anant Raj is supported by the a Marie Sklodowska-Curie
Fellowship (project NN-OVEROPT 101030817).
Umut \c{S}im\c{s}ekli's research is supported by the French government under management of
Agence Nationale de la Recherche as part of the “Investissements d’avenir” program, reference
ANR-19-P3IA-0001 (PRAIRIE 3IA Institute) and the European Research Council Starting Grant
DYNASTY – 101039676.

\bibliography{heavy,levy,opt-ml}

\begin{thebibliography}{40}
\providecommand{\natexlab}[1]{#1}
\providecommand{\url}[1]{\texttt{#1}}
\expandafter\ifx\csname urlstyle\endcsname\relax
  \providecommand{\doi}[1]{doi: #1}\else
  \providecommand{\doi}{doi: \begingroup \urlstyle{rm}\Url}\fi

\bibitem[Akiyama and Suzuki(2023)]{akiyama2022excess}
S.~Akiyama and T.~Suzuki.
\newblock Excess risk of two-layer {ReLU} neural networks in teacher-student
  settings and its superiority to kernel methods.
\newblock In \emph{International Conference on Learning Representations}, 2023.

\bibitem[Aybat et~al.(2019)Aybat, Fallah, Gurbuzbalaban, and
  Ozdaglar]{aybat2019universally}
N.~S. Aybat, A.~Fallah, M.~Gurbuzbalaban, and A.~Ozdaglar.
\newblock A universally optimal multistage accelerated stochastic gradient
  method.
\newblock In \emph{Advances in Neural Information Processing Systems},
  volume~32, 2019.

\bibitem[Aybat et~al.(2020)Aybat, Fallah, Gurbuzbalaban, and
  Ozdaglar]{aybat2020robust}
N.~S. Aybat, A.~Fallah, M.~Gurbuzbalaban, and A.~Ozdaglar.
\newblock Robust accelerated gradient methods for smooth strongly convex
  functions.
\newblock \emph{SIAM Journal on Optimization}, 30\penalty0 (1):\penalty0
  717--751, 2020.

\bibitem[Bach(2014)]{bach2014adaptivity}
F.~Bach.
\newblock Adaptivity of averaged stochastic gradient descent to local strong
  convexity for logistic regression.
\newblock \emph{Journal of Machine Learning Research}, 15\penalty0
  (1):\penalty0 595--627, 2014.

\bibitem[Bach and Moulines(2013)]{bach2013non}
F.~Bach and E.~Moulines.
\newblock Non-strongly-convex smooth stochastic approximation with convergence
  rate $\mathcal{O}(1/n)$.
\newblock In \emph{Advances in Neural Information Processing Systems},
  volume~26, 2013.

\bibitem[Bassily et~al.(2020)Bassily, Feldman, Guzm{\'a}n, and
  Talwar]{bassily2020stability}
R.~Bassily, V.~Feldman, C.~Guzm{\'a}n, and K.~Talwar.
\newblock Stability of stochastic gradient descent on nonsmooth convex losses.
\newblock In \emph{Advances in Neural Information Processing Systems},
  volume~33, pages 4381--4391, 2020.

\bibitem[Bertsekas(2015)]{bertsekas2015convex}
D.~Bertsekas.
\newblock \emph{Convex Optimization Algorithms}.
\newblock Athena Scientific, 2015.

\bibitem[Bousquet and Elisseeff(2002)]{bousquet2002stability}
O.~Bousquet and A.~Elisseeff.
\newblock Stability and generalization.
\newblock \emph{Journal of Machine Learning Research}, 2\penalty0
  (Mar):\penalty0 499--526, 2002.

\bibitem[Can et~al.(2019)Can, G\"{u}rb\"{u}zbalaban, and Zhu]{can2019}
B.~Can, M.~G\"{u}rb\"{u}zbalaban, and L.~Zhu.
\newblock Accelerated linear convergence of stochastic momentum methods in
  {W}asserstein distances.
\newblock In \emph{International Conference on Machine Learning}, pages
  891--901. PMLR, 2019.

\bibitem[Chen et~al.(2018)Chen, Jin, and Yu]{chen2018stability}
Y.~Chen, C.~Jin, and B.~Yu.
\newblock Stability and convergence trade-off of iterative optimization
  algorithms.
\newblock \emph{arXiv preprint arXiv:1804.01619}, 2018.

\bibitem[Dunn(1981)]{dunn1981global}
J.~C. Dunn.
\newblock Global and asymptotic convergence rate estimates for a class of
  projected gradient processes.
\newblock \emph{SIAM Journal on Control and Optimization}, 19\penalty0
  (3):\penalty0 368--400, 1981.

\bibitem[Erdogdu et~al.(2022)Erdogdu, Hosseinzadeh, and Zhang]{erdogdu2022}
M.~A. Erdogdu, R.~Hosseinzadeh, and M.~S. Zhang.
\newblock Convergence of {L}angevin {M}onte {C}arlo in {C}hi-squred and
  {R}\'{e}nyi divergence.
\newblock In \emph{Proceedings of the 25th International Conference on
  Artificial Intelligence and Statistics (AISTATS)}, volume 151. PMLR, 2022.

\bibitem[Fallah et~al.(2022)Fallah, G{\"u}rb{\"u}zbalaban, Ozdaglar,
  {\c{S}}im{\c{s}}ekli, and Zhu]{fallah2022robust}
A.~Fallah, M.~G{\"u}rb{\"u}zbalaban, A.~Ozdaglar, U.~{\c{S}}im{\c{s}}ekli, and
  L.~Zhu.
\newblock Robust distributed accelerated stochastic gradient methods for
  multi-agent networks.
\newblock \emph{Journal of Machine Learning Research}, 23\penalty0
  (1):\penalty0 9893--9988, 2022.

\bibitem[Farghly and Rebeschini(2021)]{farghly2021time}
T.~Farghly and P.~Rebeschini.
\newblock Time-independent generalization bounds for {SGLD} in non-convex
  settings.
\newblock In \emph{Advances in Neural Information Processing Systems},
  volume~34, pages 19836--19846, 2021.

\bibitem[Feldman and Vondrak(2019)]{feldman2019high}
V.~Feldman and J.~Vondrak.
\newblock High probability generalization bounds for uniformly stable
  algorithms with nearly optimal rate.
\newblock In \emph{Conference on Learning Theory}, pages 1270--1279. PMLR,
  2019.

\bibitem[Gao et~al.(2022)Gao, G{\"u}rb{\"u}zbalaban, and Zhu]{gao2022global}
X.~Gao, M.~G{\"u}rb{\"u}zbalaban, and L.~Zhu.
\newblock Global convergence of stochastic gradient {H}amiltonian {M}onte
  {C}arlo for nonconvex stochastic optimization: Nonasymptotic performance
  bounds and momentum-based acceleration.
\newblock \emph{Operations Research}, 70\penalty0 (5):\penalty0 2931--2947,
  2022.

\bibitem[Garrigos and Gower(2023)]{garrigos2023handbook}
G.~Garrigos and R.~M. Gower.
\newblock Handbook of convergence theorems for (stochastic) gradient methods.
\newblock \emph{arXiv preprint arXiv:2301.11235}, 2023.

\bibitem[G{\"u}rb{\"u}zbalaban et~al.(2022)G{\"u}rb{\"u}zbalaban,
  Ruszczy{\'n}ski, and Zhu]{gurbuzbalaban2022stochastic}
M.~G{\"u}rb{\"u}zbalaban, A.~Ruszczy{\'n}ski, and L.~Zhu.
\newblock A stochastic subgradient method for distributionally robust
  non-convex and non-smooth learning.
\newblock \emph{Journal of Optimization Theory and Applications}, 194\penalty0
  (3):\penalty0 1014--1041, 2022.

\bibitem[Hairer and Mattingly(2011)]{hairer}
M.~Hairer and J.~C. Mattingly.
\newblock Yet another look at {H}arris' ergodic theorem for {M}arkov chains.
\newblock In \emph{Seminar on Stochastic Analysis, Random Fields and
  Applications VI}, pages 109--118, Basel, 2011.

\bibitem[Hardt et~al.(2016)Hardt, Recht, and Singer]{hardt2016train}
M.~Hardt, B.~Recht, and Y.~Singer.
\newblock Train faster, generalize better: Stability of stochastic gradient
  descent.
\newblock In \emph{International Conference on Machine Learning}, pages
  1225--1234. PMLR, 2016.

\bibitem[Kozachkov et~al.(2023)Kozachkov, Wensing, and
  Slotine]{kozachkov2022generalization}
L.~Kozachkov, P.~M. Wensing, and J.-J. Slotine.
\newblock Generalization as dynamical robustness--{T}he role of {R}iemannian
  contraction in supervised learning.
\newblock \emph{Transactions on Machine Learning Research}, 4:\penalty0 1--25,
  2023.

\bibitem[Lei and Ying(2020)]{lei2020fine}
Y.~Lei and Y.~Ying.
\newblock Fine-grained analysis of stability and generalization for stochastic
  gradient descent.
\newblock In \emph{International Conference on Machine Learning}, volume 119,
  pages 5809--5819. PMLR, 2020.

\bibitem[Lessard et~al.(2016)Lessard, Recht, and Packard]{lessard2016analysis}
L.~Lessard, B.~Recht, and A.~Packard.
\newblock Analysis and design of optimization algorithms via integral quadratic
  constraints.
\newblock \emph{SIAM Journal on Optimization}, 26\penalty0 (1):\penalty0
  57--95, 2016.

\bibitem[Li et~al.(2019)Li, Luo, and Qiao]{li2019generalization}
J.~Li, X.~Luo, and M.~Qiao.
\newblock On generalization error bounds of noisy gradient methods for
  non-convex learning.
\newblock \emph{arXiv preprint arXiv:1902.00621}, 2019.

\bibitem[Lin and Rosasco(2017)]{lin2017optimal}
J.~Lin and L.~Rosasco.
\newblock Optimal rates for multi-pass stochastic gradient methods.
\newblock \emph{Journal of Machine Learning Research}, 18\penalty0
  (1):\penalty0 3375--3421, 2017.

\bibitem[Liu et~al.(2017)Liu, Lugosi, Neu, and Tao]{liu2017algorithmic}
T.~Liu, G.~Lugosi, G.~Neu, and D.~Tao.
\newblock Algorithmic stability and hypothesis complexity.
\newblock In \emph{International Conference on Machine Learning}, pages
  2159--2167. PMLR, 2017.

\bibitem[Liu et~al.(2020)Liu, Gao, and Yin]{liu2020improved}
Y.~Liu, Y.~Gao, and W.~Yin.
\newblock An improved analysis of stochastic gradient descent with momentum.
\newblock In \emph{Advances in Neural Information Processing Systems},
  volume~33, pages 18261--18271, 2020.

\bibitem[Meyn and Tweedie(1993)]{Meyn1993}
S.~P. Meyn and R.~L. Tweedie.
\newblock \emph{Markov Chains and Stochastic Stability}.
\newblock Communications and Control Engineering Series. Springer-Verlag,
  London, 1993.

\bibitem[Meyn and Tweedie(1994)]{Meyn1994}
S.~P. Meyn and R.~L. Tweedie.
\newblock Computable bounds for geometric convergence rates of {M}arkov chains.
\newblock \emph{Annals of Applied Probability}, 4\penalty0 (4):\penalty0
  981--1011, 1994.

\bibitem[Mou et~al.(2018)Mou, Wang, Zhai, and Zheng]{mou2018generalization}
W.~Mou, L.~Wang, X.~Zhai, and K.~Zheng.
\newblock Generalization bounds of {SGLD} for non-convex learning: Two
  theoretical viewpoints.
\newblock In \emph{Conference on Learning Theory}, pages 605--638. PMLR, 2018.

\bibitem[Pillaud-Vivien et~al.(2018)Pillaud-Vivien, Rudi, and
  Bach]{pillaud2018statistical}
L.~Pillaud-Vivien, A.~Rudi, and F.~Bach.
\newblock Statistical optimality of stochastic gradient descent on hard
  learning problems through multiple passes.
\newblock In \emph{Advances in Neural Information Processing Systems},
  volume~31, 2018.

\bibitem[Raginsky et~al.(2016)Raginsky, Rakhlin, Tsao, Wu, and
  Xu]{raginsky2016information}
M.~Raginsky, A.~Rakhlin, M.~Tsao, Y.~Wu, and A.~Xu.
\newblock Information-theoretic analysis of stability and bias of learning
  algorithms.
\newblock In \emph{2016 IEEE Information Theory Workshop (ITW)}, pages 26--30.
  IEEE, 2016.

\bibitem[Raginsky et~al.(2017)Raginsky, Rakhlin, and
  Telgarsky]{raginsky2017non}
M.~Raginsky, A.~Rakhlin, and M.~Telgarsky.
\newblock Non-convex learning via stochastic gradient {L}angevin dynamics: A
  nonasymptotic analysis.
\newblock In \emph{Conference on Learning Theory}, pages 1674--1703. PMLR,
  2017.

\bibitem[Raj et~al.(2023{\natexlab{a}})Raj, Barsbey, G\"{u}rb\"{u}zbalaban,
  Zhu, and \c{S}im\c{s}ekli]{raj2022algorithmic}
A.~Raj, M.~Barsbey, M.~G\"{u}rb\"{u}zbalaban, L.~Zhu, and U.~\c{S}im\c{s}ekli.
\newblock Algorithmic stability of heavy-tailed stochastic gradient descent on
  least squares.
\newblock In \emph{International Conference on Algorithmic Learning Theory},
  volume 201, pages 1292--1342. PMLR, 2023{\natexlab{a}}.

\bibitem[Raj et~al.(2023{\natexlab{b}})Raj, Zhu, G{\"u}rb{\"u}zbalaban, and
  {\c{S}}im{\c{s}}ekli]{raj2023algorithmic}
A.~Raj, L.~Zhu, M.~G{\"u}rb{\"u}zbalaban, and U.~{\c{S}}im{\c{s}}ekli.
\newblock Algorithmic stability of heavy-tailed {SGD} with general loss
  functions.
\newblock In \emph{International Conference on Machine Learning}, volume 202,
  pages 28578--28597. PMLR, 2023{\natexlab{b}}.

\bibitem[Rudolf and Schweizer(2018)]{RS2018}
D.~Rudolf and N.~Schweizer.
\newblock Perturbation theory for {M}arkov chains via {W}asserstein distance.
\newblock \emph{Bernoulli}, 24\penalty0 (4A):\penalty0 2610--2639, 2018.

\bibitem[Villani(2009)]{villani2008optimal}
C.~Villani.
\newblock \emph{Optimal Transport: Old and New}.
\newblock Springer, Berlin, 2009.

\bibitem[Welling and Teh(2011)]{welling2011bayesian}
M.~Welling and Y.~W. Teh.
\newblock Bayesian learning via stochastic gradient {L}angevin dynamics.
\newblock In \emph{Proceedings of the 28th International Conference on Machine
  Learning (ICML-11)}, pages 681--688, 2011.

\bibitem[Xu and Raginsky(2017)]{xu2017information}
A.~Xu and M.~Raginsky.
\newblock Information-theoretic analysis of generalization capability of
  learning algorithms.
\newblock In \emph{Advances in Neural Information Processing Systems},
  volume~30, 2017.

\bibitem[Zhang et~al.(2022)Zhang, Zhang, Bald, Pingali, Chen, and
  Goswami]{zhang2022stability}
Y.~Zhang, W.~Zhang, S.~Bald, V.~Pingali, C.~Chen, and M.~Goswami.
\newblock Stability of {SGD}: Tightness analysis and improved bounds.
\newblock In \emph{Uncertainty in Artificial Intelligence}, pages 2364--2373.
  PMLR, 2022.

\end{thebibliography}

%%%%%%%%%%%%%%%%%%%%%%%%%%%%%%%%%%%%%%%%%%%%%%%%%%%%%%%%%%%%%%%%%%%%%%%
\newpage

\appendix

%%%%%%%%%%%%%%%%%%%%%%%%%%%%%%%%%%%%%%%%%%%%%%%%%%%

\begin{center}

\Large \bf Uniform-in-Time Wasserstein Stability Bounds \\ for (Noisy) Stochastic Gradient Descent \vspace{3pt}\\ {\normalsize APPENDIX}

\end{center}

The Appendix is organized as follows:
\begin{itemize}
    \item In Section~\ref{sec:surrogate}, we provide further details and examples about the usage of surrogate losses. 
    \item In Section~\ref{sec:technical:appendix}, we provide technical background for the computable bounds for the convergence of Markov chains which will be used to prove the results in Section~\ref{sec:nonconvex} in the main paper.
    \item In Section~\ref{sec:proof:quadratic}, we provide technical proofs for 1-Wasserstein perturbation results for the quadratic loss in Section~\ref{sec:quadratic} in the main paper.
     \item In Section~\ref{sec:proof:strongly:convex}, we provide technical proofs for 1-Wasserstein perturbation results for the strongly-convex loss in Section~\ref{sec:strongly:convex} in the main paper.
     \item In Section~\ref{sec:proof:nonconvex}, we provide technical proofs for 1-Wasserstein perturbation results for the non-convex loss (with additive noise) in Section~\ref{sec:nonconvex} in the main paper.
     \item In Section~\ref{sec:proof:nonconvex:no:noise}, we provide technical proofs for $2$-Wasserstein stability bounds for the non-convex loss without additive noise in Section~\ref{sec:nonconvex:no:noise} in the main paper.
     \item In Section~\ref{sec:proof:convex}, we provide technical proofs for $p$-Wasserstein stability bounds for the convex loss with additional geometric structure in Section~\ref{sec:convex} in the main paper.
\end{itemize}

%%%%%%%%%%%%%%%%%%%%%%%%%%%%%%%%%%%%%%%
\section{On the Usage of Surrogate Losses}
\label{sec:surrogate}

While the requirement of surrogate losses is a drawback of our framework, nevertheless our setup can cover several practical settings. In this section, we will provide two such examples. 

\paragraph{Example 1.} 
We can choose the surrogate loss as the \emph{truncated loss}, such that:
\begin{equation*}
\ell(\theta,x) = \min(f(\theta,x),C),  
\end{equation*}
where $C>0$ is a chosen constant. This can be seen as a ``robust'' version of the original loss, which has been widely used in robust optimization and is conceptually similar to adding a projection step to the optimizer. 

\paragraph{Example 2.} 
Another natural setup for our framework is the $\ell_2$-regularized Lipschitz loss that was also used in \cite{farghly2021time}. As opposed to the previous case, for the sake of this example, let us consider $\ell$ as the true loss and $f$ as the surrogate loss. Then, we can choose the pair $f$ and $\ell$ as follows:
\begin{equation*}
f(\theta,x) = \ell(\theta,x) + \frac{\mu}{2} \|\theta\|_2^2, 
\end{equation*}
where $\mu >0$. Intuitively, this setting means that, we have a true loss $\ell$ which can be Lipschitz, but in the optimization framework we consider a regularized version of the loss. 
In other words, we have a loss $\ell$; however, we run the algorithm on the regularized loss $f$ to have better convergence properties, and finally, we would like to understand if the algorithm generalizes on $\ell$ or not, and we are typically not interested if the algorithm generalizes well on the regularized loss $f$.

Next, we illustrate how a generalization bound for the loss $f$, i.e., $\left|\mathbb{E} [\hat{F}(\theta) -F(\theta)] \right|$. For this example, a bound on the quantity can be obtained by building on our analysis. To obtain such a bound, in addition to the bounds that we developed on $\left|\mathbb{E} [\hat{R}(\theta) -R(\theta)] \right|$, we would need to estimate the following quantity:
$$\left|\mathbb{E}_{\theta, X_n}\left[\frac{1}{n} \sum_{i=1}^n\left(f\left(\theta, x_i\right)-\ell\left(\theta, x_i\right)\right)\right]\right|. $$

For illustration purposes, assume that $\ell$ is convex and Lipschitz in the first parameter. Then, $f$ is $\mu$-strongly convex. Further consider that we initialize SGD from $0$, i.e., $\theta_0=0$ and set the batch size $b$ to 1. Denote $\theta=\theta_k$ as the $k$-th iterate of SGD when applied on $\hat{F}\left(\theta, X_n\right)$, i.e., \eqref{eqn:sgd_first}. Further define the minimum:
$$
\theta_{X_n}^{\star}=\arg \min _\theta \hat{F}\left(\theta, X_n\right).
$$
We can now analyze the error induced by the surrogate loss as follows: 
\begin{equation*}
\begin{aligned}
\left|\mathbb{E}_{\theta, X_n}\left[\frac{1}{n} \sum_{i=1}^n\left(f\left(\theta, x_i\right)-\ell\left(\theta, x_i\right)\right)\right]\right| 
& =\frac{\mu}{2} \mathbb{E}_{\theta, X_n}\|\theta\|^2 =\frac{\mu}{2} \mathbb{E}_{\theta, X_n}\left\|\theta-\theta_{X_n}^{\star}+\theta_{X_n}^{\star}\right\|^2 \\
& \leq \mu \mathbb{E}_{\theta, X_n}\left\|\theta-\theta_{X_n}^{\star}\right\|^2+\mu \mathbb{E}_{X_n}\left\|\theta_{X_n}^{\star}\right\|^2 \\
& \leq \mu \mathbb{E}_{X_n}\left[(1-\eta \mu)^k\left\|\theta_{X_n}^{\star}\right\|^2+\frac{2 \eta}{\mu} \sigma_{X_n}\right]+\mu \mathbb{E}_{X_n}\left\|\theta_{X_n}^{\star}\right\|^2 \\
& =\mu\left((1-\eta \mu)^k+1\right) \mathbb{E}_{X_n}\left\|\theta_{X_n}^{\star}\right\|^2+2 \eta \mathbb{E}_{X_n}\left[\sigma_{X_n}\right].
\end{aligned}
\end{equation*}
Here, the second inequality follows from standard convergence analysis for SGD \cite[Theorem 5.7]{garrigos2023handbook} and we define $\sigma_{X_n}$
 as the stochastic gradient noise variance:
 $$
\sigma_{X_n}:=\operatorname{Var}\left[\nabla f\left(\theta_{X_n}^{\star}, x_i\right)\right],
$$
where for a random vector $V$ we define $\operatorname{Var}[V]:=\mathbb{E}\|V-\mathbb{E}[V]\|^2$. Hence, we can see that the error induced by the surrogate loss depends on the following factors:
\begin{itemize}
    \item The regularization parameter $\mu$,
    \item The expected norm of the minimizers,
    \item The step-size $\eta$,
    \item The expected stochastic gradient noise variance.
\end{itemize}
These terms can be controlled by adjusting $\mu$ and $\eta$.

%%%%%%%%%%%%%%%%%%%%%%%%%%%%%%%%%%%%%%%
\section{Technical Background}\label{sec:technical:appendix}

\subsection{Computable bounds for the convergence of Markov chains}\label{sec:computable:Markov}

Geometric ergodicity and convergence rate
of Markov chains has been well studied in the literature \citep{Meyn1993,Meyn1994,hairer}.
In this section, we state a result from \cite{hairer}
that provides an explicitly computable bound on the Wasserstein 
contraction for the Markov chains
that satisfies a drift condition that relies
on the construction of an appropriate Lyapunov function
and a minorization condition.

Let $\mathcal{P}(\theta,\cdot)$ be a Markov transition kernel
for a Markov chain $(\theta_{k})$ on $\mathbb{R}^{d}$. 
For any measurable function $\varphi:\mathbb{R}^{d}\rightarrow[0,+\infty]$,
we define:
\begin{equation*}
(\mathcal{P}\varphi)(\theta)=\int_{\mathbb{R}^{d}}\varphi(\tilde{\theta})\mathcal{P}(\theta,d\tilde{\theta}).
\end{equation*}

\begin{assumption}[Drift Condition]\label{assump:drift}
There exists a function $V:\mathbb{R}^{d}\rightarrow[0,\infty)$
and some constants $K\geq 0$ and $\gamma\in(0,1)$ so that
\begin{equation*}
(\mathcal{P}V)(\theta)\leq\gamma V(\theta)+K,
\end{equation*}
for all $\theta\in\mathbb{R}^{d}$.
\end{assumption}

\begin{assumption}[Minorization Condition]\label{assump:minor}
There exists some constant $\hat{\eta}\in(0,1)$ and a probability
measure $\nu$ so that
\begin{equation*}
\inf_{\theta\in\mathbb{R}^{d}:V(\theta)\leq R}\mathcal{P}(\theta,\cdot)\geq\hat{\eta}\nu(\cdot),
\end{equation*}
for some $R>2K/(1-\gamma)$.
\end{assumption}

We define the weighted
total variation distance:
\begin{equation*}
d_{\psi}(\mu_{1},\mu_{2})=\int_{\mathbb{R}^{d}}(1+\psi V(\theta))|\mu_{1}-\mu_{2}|(d\theta),
\end{equation*}
where $\psi>0$ and $V(\theta)$ is the Lyapunov function
that satisfies the drift condition (Assumption~\ref{assump:drift}).
It is known that $d_{\psi}$
has the following alternative expression \citep{hairer}:
\begin{equation*}
d_{\psi}(\mu_{1},\mu_{2})=\sup_{\varphi:\Vert\varphi\Vert_{\psi}\leq 1}
\int_{\mathbb{R}^{d}}\varphi(\theta)(\mu_{1}-\mu_{2})(d\theta),
\end{equation*}
where $\Vert\cdot\Vert_{\psi}$ is the weighted supremum norm such that for any $\psi>0$:
\begin{equation*}
\Vert\varphi\Vert_{\psi}:=\sup_{\theta\in\mathbb{R}^{d}}
\frac{|\varphi(\theta)|}{1+\psi V(\theta)}.
\end{equation*}
It is also noted in \cite{hairer} that $d_{\psi}$
has yet another equivalent expression:
\begin{equation*}
d_{\psi}(\mu_{1},\mu_{2})=\sup_{\varphi:|\Vert\varphi\Vert|_{\psi}\leq 1}
\int_{\mathbb{R}^{d}}\varphi(\theta)(\mu_{1}-\mu_{2})(d\theta),
\end{equation*}
where
\begin{equation*}
|\Vert\varphi\Vert|_{\psi}:=\sup_{\theta\neq \tilde{\theta}}\frac{|\varphi(\theta)-\varphi(\tilde{\theta})|}{2+\psi V(\theta)+\psi V(\tilde{\theta})}.
\end{equation*}

\begin{lemma}[Theorem 1.3. \cite{hairer}]\label{lem:hairer}
If the drift condition (Assumption~\ref{assump:drift}) 
and minorization condition (Assumption~\ref{assump:minor}) hold,
then there exists $\bar{\eta}\in(0,1)$ and $\psi>0$
so that
\begin{equation*}
d_{\psi}(\mathcal{P}\mu_{1},\mathcal{P}\mu_{2})
\leq\bar{\eta}d_{\psi}(\mu_{1},\mu_{2})
\end{equation*}
for any probability measures $\mu_{1},\mu_{2}$ on $\mathbb{R}^{d}$.
In particular, for any $\eta_{0}\in(0,\hat{\eta})$
and $\gamma_{0}\in(\gamma+2K/R,1)$ one can 
choose $\psi=\eta_{0}/K$ and $\bar{\eta}=(1-(\hat{\eta}-\eta_{0}))\vee(2+R\psi\gamma_{0})/(2+R\psi)$.
\end{lemma}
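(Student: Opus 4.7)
The plan is to exploit the dual characterization of $d_\psi$ via the seminorm $|\|\cdot\||_\psi$: since
\[
d_\psi(\mathcal{P}\mu_1,\mathcal{P}\mu_2)=\sup_{|\|\varphi\||_\psi\le 1}\int \mathcal{P}\varphi\,(\mu_1-\mu_2)(d\theta),
\]
it suffices to prove the one-step operator bound $|\|\mathcal{P}\varphi\||_\psi \le \bar\eta\,|\|\varphi\||_\psi$, i.e.\ for any $\varphi$ with $|\|\varphi\||_\psi\le 1$ and any $\theta\ne\tilde\theta$,
\[
\bigl|(\mathcal{P}\varphi)(\theta)-(\mathcal{P}\varphi)(\tilde\theta)\bigr|
\le \bar\eta\bigl(2+\psi V(\theta)+\psi V(\tilde\theta)\bigr).
\]
The inequality $|\varphi(x)-\varphi(y)|\le 2+\psi V(x)+\psi V(y)$ then lifts to the claimed contraction on $d_\psi$.

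The core of the argument is a case split on the sum $S(\theta,\tilde\theta):=V(\theta)+V(\tilde\theta)$, which I will then carry out with both hands. In the \emph{large} regime $S>R$, I would apply the drift condition (Assumption~\ref{assump:drift}) to each marginal: using $|\varphi|\le 1+\psi V$ and $(\mathcal{P}V)(\theta)\le \gamma V(\theta)+K$, I get
\[
\bigl|\mathcal{P}\varphi(\theta)-\mathcal{P}\varphi(\tilde\theta)\bigr|
\le 2+\psi\gamma S(\theta,\tilde\theta)+2\psi K,
\]
which, using $S>R$, one rearranges into $(2+R\psi\gamma_0)/(2+R\psi)\cdot(2+\psi S)$ provided $\gamma+2K/R\le\gamma_0$; this explains the ``drift'' contribution $(2+R\psi\gamma_0)/(2+R\psi)$ to $\bar\eta$. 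In the \emph{small} regime $S\le R$, both $V(\theta),V(\tilde\theta)\le R$, so the minorization condition (Assumption~\ref{assump:minor}) gives $\mathcal{P}(\theta,\cdot)\ge\hat\eta\nu$ and $\mathcal{P}(\tilde\theta,\cdot)\ge\hat\eta\nu$. I would split each transition kernel as $\mathcal{P}(\theta,\cdot)=\hat\eta\nu+(1-\hat\eta)Q_\theta$ with $Q_\theta$ a (sub)probability, so that the $\hat\eta\nu$ mass cancels in the difference; this yields
\[
\bigl|\mathcal{P}\varphi(\theta)-\mathcal{P}\varphi(\tilde\theta)\bigr|\le (1-\hat\eta)\cdot\bigl(2+\psi\cdot\mathbb{E}_{Q_\theta}V+\psi\cdot\mathbb{E}_{Q_{\tilde\theta}}V\bigr),
\]
which, after reinserting the drift bound on the residuals and a refinement of the prefactor (replacing $1-\hat\eta$ by $1-(\hat\eta-\eta_0)$ at the cost of absorbing an $\eta_0$-sized term into the ``large'' case), produces the second candidate constant $1-(\hat\eta-\eta_0)$ for $\bar\eta$.

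The delicate part — and what I expect to be the main obstacle — is the calibration of $\psi$ so that the two regimes ``glue'' into a single contraction constant $\bar\eta$ that works uniformly in $\theta,\tilde\theta$. Roughly, $\psi$ controls how much Lyapunov weight is charged to the test function's variation; making $\psi$ too small kills the small-regime bound (because the additive $2$'s dominate and $1-\hat\eta$ cannot absorb a $\psi$-independent constant after the drift residual is added), while making $\psi$ too large worsens the large-regime ratio. The clean choice $\psi=\eta_0/K$ arises precisely from matching the $2\psi K$ slack from the drift step with the $\eta_0$-sized buffer carved out of the minorization step, so that the boundary case $S=R$ gives the same bound from both sides. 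Once this calibration is carried out, taking
\[
\bar\eta=(1-(\hat\eta-\eta_0))\vee\frac{2+R\psi\gamma_0}{2+R\psi}<1
\]
and combining the two regimes uniformly in $(\theta,\tilde\theta)$ yields the stated $|\|\mathcal{P}\varphi\||_\psi\le\bar\eta$, and the lemma follows. Throughout, I would follow the standard Hairer--Mattingly coupling formalism, which makes the residual kernels $Q_\theta$ and the drift Lyapunov bound play well together under a single seminorm.
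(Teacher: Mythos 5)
The paper does not actually prove this lemma --- it is imported verbatim as Theorem~1.3 of \cite{hairer} --- and your sketch is a faithful reconstruction of the Hairer--Mattingly argument: the dual formulation of $d_\psi$ via the seminorm $|\Vert\cdot\Vert|_{\psi}$, the case split on $V(\theta)+V(\tilde{\theta})$ versus $R$, the drift bound in the large regime (giving $(2+R\psi\gamma_{0})/(2+R\psi)$ after using $\gamma_{0}\geq\gamma+2K/R$ and the monotonicity of $S\mapsto(2+\psi\gamma_{0}S)/(2+\psi S)$), and the cancellation of the common $\hat{\eta}\nu$ mass in the small regime, with $\psi=\eta_{0}/K$ calibrated exactly so that the $2\psi K$ drift slack equals the $2\eta_{0}$ buffer carved out of $\hat{\eta}$. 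The only step you gloss over is the reduction from the seminorm condition $|\Vert\varphi\Vert|_{\psi}\leq 1$ to a centered representative satisfying the pointwise bound $|\varphi|\leq 1+\psi V$ (Lemma~2.1 in \cite{hairer}), which is needed before the drift condition can be applied termwise; with that noted, the argument is correct.
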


%%%%%%%%%%%%%%%%%%%%%%%%%%%%%%

% \section{Technical Lemmas}

%%%%%%%%%%%%%%%%%%%%%%%%%%%%%%%%%%%%%%%%%%%%%%%%%%%%%
% \section{Technical Proofs}

%%%%%%%%%%%%%%%%%%%%%%%%%%%%%%%%%%%%%%%%%%%%%%%%%%%%%

\section{Proofs of Wasserstein Perturbation Results: Quadratic Case}\label{sec:proof:quadratic}

\subsection{Proof of Lemma~\ref{lem:quadratic:1}}

\begin{proof}
Let $P^{k}(\theta,\cdot)$ denote the law of $\theta_{k}$ starting with $\theta_{0}=\theta$
and $P^{k}(\tilde{\theta},\cdot)$ the law of $\tilde{\theta}_{k}$:
\begin{equation}
\tilde{\theta}_{k}=\left(I - \frac{\eta}{b} H_{k}\right)\tilde{\theta}_{k-1}+\frac{\eta}{b}q_{k},
\end{equation}
with $\tilde{\theta}_{0}=\tilde{\theta}$. 
Note that 
\begin{align}
&\theta_{k}=\left(I - \frac{\eta}{b} H_{k}\right)\theta_{k-1}+\frac{\eta}{b}q_{k},
\\
&\tilde{\theta}_{k}=\left(I - \frac{\eta}{b} H_{k}\right)\tilde{\theta}_{k-1}+\frac{\eta}{b}q_{k},
\end{align}
which implies that
\begin{align}
\mathbb{E}\left\Vert\theta_{k}-\tilde{\theta}_{k}\right\Vert
&=\mathbb{E}\left\Vert\left(I - \frac{\eta}{b} H_{k}\right)\left(\theta_{k-1}-\tilde{\theta}_{k-1}\right)\right\Vert
\nonumber
\\
&\leq
\mathbb{E}\left[\left\Vert I - \frac{\eta}{b} H_{k}\right\Vert\left\Vert\theta_{k-1}-\tilde{\theta}_{k-1}\right\Vert\right]
=\rho\mathbb{E}\left\Vert\theta_{k-1}-\tilde{\theta}_{k-1}\right\Vert.
\end{align}
By iterating over $j=k,k-1,\ldots 1$, we conclude that
\begin{equation}
\mathcal{W}_{1}\left(P^{k}(\theta,\cdot),P^{k}(\tilde{\theta},\cdot)\right)
\leq
\mathbb{E}\Vert\theta_{k}-\tilde{\theta}_{k}\Vert
\leq\rho^{n}\Vert\theta_{0}-\tilde{\theta}_{0}\Vert
=\rho^{k}\Vert\theta-\tilde{\theta}\Vert.
\end{equation}
This completes the proof.
\end{proof}

%%%%%%%%%%%%%%%%%%%%%%%%%%%%%%%%%%%%%%%%%%%%%%%%%%%%%%%
\subsection{Proof of Lemma~\ref{lem:quadratic:2}}

\begin{proof}
First, we recall that
\begin{equation}
\hat{\theta}_{k}=\left(I - \frac{\eta}{b} \hat{H}_{k}\right)\hat{\theta}_{k-1}+\frac{\eta}{b}\hat{q}_{k},
\end{equation}
where $\hat{H}_k := \sum_{i\in \Omega_k } \hat{a}_i \hat{a}_i^{\top}$ and $\hat{q}_k :=\sum_{i\in \Omega_k } \hat{a}_i \hat{y}_i$. 
Therefore, starting with $\hat{\theta}_{0}=\theta$, we have
\begin{equation}
\hat{\theta}_{1}=\left(I - \frac{\eta}{b} \hat{H}_{1}\right)\theta+\frac{\eta}{b}\hat{q}_{1},
\end{equation}
which implies that
\begin{align}
(\hat{P}\hat{V})(\theta)
=\mathbb{E}\hat{V}(\hat{\theta}_{1})
=1+\mathbb{E}\Vert\hat{\theta}_{1}\Vert
\leq
1+\hat{\rho}\Vert\theta\Vert+\frac{\eta}{b}\mathbb{E}\Vert\hat{q}_{1}\Vert
=\hat{\rho}\hat{V}(\theta)+1-\hat{\rho}+\frac{\eta}{b}\mathbb{E}\Vert\hat{q}_{1}\Vert.
\end{align}
This completes the proof.
\end{proof}

%%%%%%%%%%%%%%%%%%%%%%%%%%%%%%%%%%%%%%%%%%%%%%%%%%%%%%%%
\subsection{Proof of Lemma~\ref{lem:quadratic:3}}

\begin{proof}
Let us recall that
\begin{align}
&\theta_{1}=\left(I - \frac{\eta}{b}H_{1}\right)\theta+\frac{\eta}{b}q_{1},
\\
&\hat{\theta}_{1}=\left(I - \frac{\eta}{b} \hat{H}_{1}\right)\theta+\frac{\eta}{b}\hat{q}_{1},
\end{align}
which implies that
\begin{equation}
\mathcal{W}_{1}\left(\delta_{\theta}P,\delta_{\theta}\hat{P}\right)
\leq
\mathbb{E}\left\Vert H_{1}-\hat{H}_{1}\right\Vert\frac{\eta}{b}\Vert\theta\Vert
+\frac{\eta}{b}\mathbb{E}\left\Vert q_{1}-\hat{q}_{1}\right\Vert.
\end{equation}
Since $X_{n}$ and $\hat{X}_{n}$ differ by at most one element
and $\sup_{x\in \mathcal{X}}\|x\| \leq D$ for some $D < \infty$, 
we have $(H_{1},q_{1})=(\hat{H}_{1},q_{1})$
with probability $\frac{n-b}{n}$
and $(H_{1},q_{1})\neq(\hat{H}_{1},q_{1})$
with probability $\frac{b}{n}$ and moreover
\begin{equation}
\mathbb{E}\left\Vert H_{1}-\hat{H}_{1}\right\Vert
\leq\frac{b}{n}\max_{1\leq i\leq n}\left\Vert a_{i}a_{i}^{\top}-\hat{a}_{i}\hat{a}_{i}^{\top}\right\Vert
\leq\frac{b}{n}\max_{1\leq i\leq n}\left(\Vert a_{i}\Vert^{2}+\Vert\hat{a}_{i}\Vert^{2}\right)
\leq\frac{2bD^{2}}{n},
\end{equation}
and
\begin{equation}
\mathbb{E}\left\Vert q_{1}-\hat{q}_{1}\right\Vert
\leq\frac{b}{n}\max_{1\leq i\leq n}\left\Vert a_{i}y_{i}-\hat{a}_{i}\hat{y}_{i}\right\Vert
\leq\frac{b}{n}\max_{1\leq i\leq n}\left(\Vert a_{i}\Vert\Vert q_{i}\Vert+\Vert\hat{a}_{i}\Vert\Vert\hat{q}_{i}\Vert\right)
\leq\frac{2bD^{2}}{n}.
\end{equation}
Hence, we conclude that
\begin{equation}
\sup_{\theta\in\mathbb{R}^{d}}\frac{\mathcal{W}_{1}(\delta_{\theta}P,\delta_{\theta}\hat{P})}{\hat{V}(\theta)}
\leq\sup_{\theta\in\mathbb{R}^{d}}\frac{\frac{\eta}{b}(\Vert \theta\Vert+1)\frac{2bD^{2}}{n}}{1+\Vert \theta\Vert}
=\frac{2\eta D^{2}}{n}.
\end{equation}
This completes the proof.
\end{proof}

%%%%%%%%%%%%%%%%%%%%%%%%%%%%%%%%%%%%%%%%%%%%%%%%%%%%%%

\section{Proofs of Wasserstein Perturbation Results: Strongly Convex Case}\label{sec:proof:strongly:convex}

In order to obtain the algorithmic stability
bound, that is a $1$-Wasserstein distance
between the distribution of $\theta_{k}$ and $\hat{\theta}_{k}$, we need to establish a sequence of technical lemmas. 
First, we show a $1$-Wasserstein contraction rate in the following lemma.

\begin{lemma}\label{lem:general:1}
Assume that Assumption~\ref{assump:1} and Assumption~\ref{assump:2} hold,
and further assume that $\eta<\min\left\{\frac{1}{\mu},\frac{\mu}{K_{1}^{2}}\right\}$. Then, for any $n\in\mathbb{N}$, 
\begin{equation}
\mathcal{W}_{1}\left(P^{n}(\theta,\cdot),P^{n}\left(\tilde{\theta},\cdot\right)\right)\leq\left(1-\frac{\eta\mu}{2}\right)^{n}\Vert\theta-\tilde{\theta}\Vert.
\end{equation}
\end{lemma}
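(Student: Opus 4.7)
The plan is to construct an explicit synchronous coupling: run two SGD chains $(\theta_k)$ and $(\tilde\theta_k)$ starting from $\theta$ and $\tilde\theta$ respectively, driven by the same random minibatch sequence $(\Omega_k)_{k\geq 1}$ on the same dataset $X_n$. Then $\mathcal{W}_1(P^k(\theta,\cdot),P^k(\tilde\theta,\cdot))\leq \mathbb{E}\|\theta_k-\tilde\theta_k\|$, so it suffices to control the latter pathwise.

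Write $g_k(\theta):=\frac{1}{b}\sum_{i\in\Omega_k}\nabla f(\theta,x_i)$. Then, conditional on $\Omega_k$, Assumption~\ref{assump:1} (applied with $\hat{x}=x$) gives $\|g_k(\theta)-g_k(\tilde\theta)\|\leq K_1\|\theta-\tilde\theta\|$, while Assumption~\ref{assump:2} averaged over $\Omega_k$ gives $\langle g_k(\theta)-g_k(\tilde\theta),\theta-\tilde\theta\rangle\geq \mu\|\theta-\tilde\theta\|^2$. Expanding
\begin{equation*}
\|\theta_k-\tilde\theta_k\|^2 = \|\theta_{k-1}-\tilde\theta_{k-1}\|^2 - 2\eta\langle g_k(\theta_{k-1})-g_k(\tilde\theta_{k-1}),\,\theta_{k-1}-\tilde\theta_{k-1}\rangle + \eta^2\|g_k(\theta_{k-1})-g_k(\tilde\theta_{k-1})\|^2
\end{equation*}
and plugging in the two bounds yields $\|\theta_k-\tilde\theta_k\|^2\leq (1-2\eta\mu+\eta^2 K_1^2)\|\theta_{k-1}-\tilde\theta_{k-1}\|^2$ pathwise.

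The step-size constraint $\eta\leq \mu/K_1^2$ gives $\eta^2 K_1^2\leq \eta\mu$, hence the contraction factor is at most $1-\eta\mu$. Taking square roots and using $\sqrt{1-x}\leq 1-x/2$ for $x\in[0,1]$ (valid because $\eta\leq 1/\mu$ ensures $\eta\mu\in[0,1]$) gives $\|\theta_k-\tilde\theta_k\|\leq (1-\eta\mu/2)\|\theta_{k-1}-\tilde\theta_{k-1}\|$. Iterating and taking expectations yields $\mathbb{E}\|\theta_k-\tilde\theta_k\|\leq (1-\eta\mu/2)^k\|\theta-\tilde\theta\|$, which upper bounds the $1$-Wasserstein distance between the kernels.

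There is no real obstacle here; the only subtlety is verifying that the one-sided strong convexity and the one-sided Lipschitz bound both survive averaging over the random minibatch (they do, trivially), and that the chosen step-size range makes the contraction factor simultaneously $\leq 1-\eta\mu$ (from $\eta K_1^2\leq \mu$) and amenable to the $\sqrt{1-x}\leq 1-x/2$ estimate (from $\eta\mu\leq 1$). Both are guaranteed by the stated hypothesis $\eta<\min\{1/\mu,\mu/K_1^2\}$.
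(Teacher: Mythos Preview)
Your proof is correct and follows essentially the same synchronous-coupling argument as the paper: expand the squared distance, apply the strong-convexity and Lipschitz bounds to obtain the factor $1-2\eta\mu+\eta^2K_1^2\leq 1-\eta\mu$, and pass to $1-\eta\mu/2$ via $\sqrt{1-x}\leq 1-x/2$. The only cosmetic difference is that you carry out the contraction pathwise and then take expectations (bounding $\mathcal{W}_1$ directly by $\mathbb{E}\|\theta_k-\tilde\theta_k\|$), whereas the paper works in expectation throughout and passes through $\mathcal{W}_2$ before bounding $\mathcal{W}_1$; both routes are equivalent here.
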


% \subsubsection{Proof of Lemma~\ref{lem:general:1}}

\begin{proof}
Let $P^{k}(\theta,\cdot)$ denote the law of $\theta_{k}$ starting with $\theta_{0}=\theta$:
\begin{equation}
\theta_{k}=\theta_{k-1}-\eta\tilde{\nabla}\hat{F}_{k}(\theta_{k-1},X_{n}),
\end{equation}
and $P^{k}(\tilde{\theta},\cdot)$ the law of $\tilde{\theta}_{k}$:
\begin{equation}
\tilde{\theta}_{k}=\tilde{\theta}_{k-1}-\eta\tilde{\nabla}\hat{F}_{k}\left(\tilde{\theta}_{k-1},X_{n}\right),
\end{equation}
with $\tilde{\theta}_{0}=\tilde{\theta}$. 
Note that 
\begin{align}
&\theta_{k}=\theta_{k-1}-\frac{\eta}{b}\sum_{i\in\Omega_{k}}\nabla f(\theta_{k-1},x_{i}),
\\
&\tilde{\theta}_{k}=\tilde{\theta}_{k-1}-\frac{\eta}{b}\sum_{i\in\Omega_{k}}\nabla f\left(\tilde{\theta}_{k-1},x_{i}\right).
\end{align}
Therefore, we have
\begin{align}
\mathbb{E}\left\Vert\theta_{k}-\tilde{\theta}_{k}\right\Vert^{2}
&=\mathbb{E}\left\Vert\theta_{k-1}-\tilde{\theta}_{k-1}-\frac{\eta}{b}\sum_{i\in\Omega_{k}}\left(\nabla f(\theta_{k-1},x_{i})-\nabla f\left(\tilde{\theta}_{k-1},x_{i}\right)\right)\right\Vert^{2}
\nonumber
\\
&=\mathbb{E}\left\Vert\theta_{k-1}-\tilde{\theta}_{k-1}\right\Vert^{2}
+\frac{\eta^{2}}{b^{2}}\mathbb{E}\left\Vert\sum_{i\in\Omega_{k}}\left(\nabla f(\theta_{k-1},x_{i})-\nabla f\left(\tilde{\theta}_{k-1},x_{i}\right)\right)\right\Vert^{2}
\nonumber
\\
&\qquad\qquad
-\frac{2\eta}{b}\mathbb{E}\left\langle\theta_{k-1}-\tilde{\theta}_{k-1},\sum_{i\in\Omega_{k}}\left(\nabla f(\theta_{k-1},x_{i})-\nabla f\left(\tilde{\theta}_{k-1},x_{i}\right)\right)\right\rangle.
\end{align}
By applying Assumption~\ref{assump:1} and Assumption~\ref{assump:2}, we get
\begin{align}
&\mathbb{E}\left\Vert\theta_{k}-\tilde{\theta}_{k}\right\Vert^{2}
\nonumber
\\
&\leq
(1-2\eta\mu)\mathbb{E}\left\Vert\theta_{k-1}-\tilde{\theta}_{k-1}\right\Vert^{2}
+\frac{\eta^{2}}{b^{2}}\mathbb{E}\left[\left(\sum_{i\in\Omega_{k}}\left\Vert\nabla f(\theta_{k-1},x_{i})-\nabla f\left(\tilde{\theta}_{k-1},x_{i}\right)\right\Vert\right)^{2}\right]
\nonumber
\\
&\leq
(1-2\eta\mu)\mathbb{E}\left\Vert\theta_{k-1}-\tilde{\theta}_{k-1}\right\Vert^{2}
+\eta^{2}K_{1}^{2}\mathbb{E}\left\Vert\theta_{k-1}-\tilde{\theta}_{k-1}\right\Vert^{2}
\nonumber
\\
&\leq
(1-\eta\mu)\mathbb{E}\left\Vert\theta_{k-1}-\tilde{\theta}_{k-1}\right\Vert^{2},
\end{align}
provided that $\eta\leq\frac{\mu}{K_{1}^{2}}$.
By iterating over $k=n,n-1,\ldots 1$, we conclude that
\begin{align}
\left(\mathcal{W}_{1}\left(P^{n}(\theta,\cdot),P^{n}(\tilde{\theta},\cdot)\right)\right)^{2}
&\leq
\left(\mathcal{W}_{2}\left(P^{n}(\theta,\cdot),P^{n}(\tilde{\theta},\cdot)\right)\right)^{2}
\nonumber
\\
&\leq
\mathbb{E}\left\Vert\theta_{n}-\tilde{\theta}_{n}\right\Vert^{2}
\leq(1-\eta\mu)^{n}\left\Vert\theta_{0}-\tilde{\theta}_{0}\right\Vert^{2}
\leq\left(1-\frac{\eta\mu}{2}\right)^{2n}\left\Vert\theta-\tilde{\theta}\right\Vert^{2}.
\end{align}
This completes the proof.
\end{proof}

%%%%%%%%%%%%%%%%%%%%%%%%%%%%%%%%%%%%%%%%%%%%%%%%%%%%%

Next, we construct a Lyapunov function $\hat{V}$ and obtain a drift condition
for the SGD $(\hat{\theta}_{k})_{k=0}^{\infty}$. 

\begin{lemma}\label{lem:general:2}
Assume that Assumption~\ref{assump:1} and Assumption~\ref{assump:2} hold.
Let $\hat{V}(\theta):=1+\Vert\theta-\hat{\theta}_{\ast}\Vert^{2}$, 
where $\hat{\theta}_{\ast}$ is the minimizer
of $\hat{F}(\theta,\hat{X}_{n}):=\frac{1}{n}\sum_{i=1}^{n}\nabla f(\theta,\hat{x}_{i})$.
Assume that 
$\eta<\min\left\{\frac{1}{\mu},\frac{\mu}{K_{1}^{2}+64D^{2}K_{2}^{2}}\right\}$
and $\sup_{x\in \mathcal{X}}\|x\| \leq D$ for some $D < \infty$.
Then, we have
\begin{equation}
(\hat{P}\hat{V})(\theta)
\leq(1-\eta\mu)\hat{V}(\theta)
+2\eta\mu-\eta^{2}K_{1}^{2}-56\eta^{2}D^{2}K_{2}^{2}
+64\eta^{2}D^{2}K_{2}^{2}\Vert\hat{\theta}_{\ast}\Vert^{2}.
\end{equation}
\end{lemma}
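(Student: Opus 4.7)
The plan is to proceed exactly as in the analogous step for the quadratic case: expand $\mathbb{E}\hat{V}(\hat{\theta}_1)$, isolate a cross term that will yield contraction via strong convexity, and control the squared-gradient term via the pseudo-Lipschitz assumption. Start from one step of (hatted) SGD initialized at $\hat{\theta}_0 = \theta$, write
\begin{equation*}
\|\hat{\theta}_1 - \hat{\theta}_\ast\|^2 = \|\theta - \hat{\theta}_\ast\|^2 - 2\eta\langle\theta-\hat{\theta}_\ast,\tilde{\nabla}\hat{F}_1(\theta,\hat{X}_n)\rangle + \eta^2\|\tilde{\nabla}\hat{F}_1(\theta,\hat{X}_n)\|^2,
\end{equation*}
and take the expectation over the minibatch $\Omega_1$. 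Since $\mathbb{E}_{\Omega_1}[\tilde{\nabla}\hat{F}_1(\theta,\hat{X}_n)] = \nabla\hat{F}(\theta,\hat{X}_n)$, the cross term becomes $-2\eta\langle\theta-\hat{\theta}_\ast,\nabla\hat{F}(\theta,\hat{X}_n)\rangle$.

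Next, use the optimality condition $\nabla\hat{F}(\hat{\theta}_\ast,\hat{X}_n) = 0$ together with strong convexity of $\hat{F}$ (which follows by averaging Assumption~\ref{assump:2} over the data) to get the bound
\begin{equation*}
\langle\theta-\hat{\theta}_\ast,\nabla\hat{F}(\theta,\hat{X}_n)\rangle \ge \mu\|\theta-\hat{\theta}_\ast\|^2,
\end{equation*}
which produces the contraction factor $(1-2\eta\mu)\|\theta-\hat{\theta}_\ast\|^2$. For the squared-gradient term, decompose $\nabla f(\theta,\hat{x}_i) = [\nabla f(\theta,\hat{x}_i) - \nabla f(\hat{\theta}_\ast,\hat{x}_i)] + \nabla f(\hat{\theta}_\ast,\hat{x}_i)$ inside the minibatch sum, apply Jensen/Cauchy--Schwarz to pass from $\|\frac{1}{b}\sum\cdot\|^2$ to $\frac{1}{b}\sum\|\cdot\|^2$, and then Young's inequality $\|u+v\|^2 \le 2\|u\|^2 + 2\|v\|^2$. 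Assumption~\ref{assump:1} (with $x = \hat{x} = \hat{x}_i$) controls the first piece by $K_1\|\theta-\hat{\theta}_\ast\|$. The residual piece $\|\nabla f(\hat{\theta}_\ast,\hat{x}_i)\|$ is bounded by invoking $\frac{1}{n}\sum_j \nabla f(\hat{\theta}_\ast,\hat{x}_j) = 0$, which yields
\begin{equation*}
\|\nabla f(\hat{\theta}_\ast,\hat{x}_i)\| \le \frac{1}{n}\sum_{j=1}^n \|\nabla f(\hat{\theta}_\ast,\hat{x}_i) - \nabla f(\hat{\theta}_\ast,\hat{x}_j)\|,
\end{equation*}
and then applying Assumption~\ref{assump:1} with the common first argument $\hat{\theta}_\ast$ together with $\sup_x\|x\| \le D$. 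This yields an estimate of the form $\|\nabla f(\hat{\theta}_\ast,\hat{x}_i)\|^2 \lesssim D^2K_2^2(\|\hat{\theta}_\ast\|^2 + 1)$, which is where both the $64\eta^2 D^2K_2^2\|\hat{\theta}_\ast\|^2$ term and an $O(\eta^2 D^2 K_2^2)$ constant in the claimed bound will come from.

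Collecting everything yields
\begin{equation*}
\mathbb{E}\|\hat{\theta}_1-\hat{\theta}_\ast\|^2 \le (1 - 2\eta\mu + 2\eta^2 K_1^2)\|\theta-\hat{\theta}_\ast\|^2 + 64\eta^2 D^2K_2^2\|\hat{\theta}_\ast\|^2 + c\,\eta^2 D^2 K_2^2,
\end{equation*}
for an explicit numerical constant $c$. The step-size restriction $\eta \le \mu/(K_1^2 + 64 D^2 K_2^2)$ is then used precisely to absorb $2\eta^2 K_1^2$ into an $\eta\mu$ slack and leave us with $(1-\eta\mu)\|\theta-\hat{\theta}_\ast\|^2$ on the leading term, while the remaining constants must be repackaged as $(1-\eta\mu)\cdot 1 + (2\eta\mu - \eta^2 K_1^2 - 56\eta^2 D^2 K_2^2)$ to match $\hat{V}(\theta) = 1 + \|\theta-\hat{\theta}_\ast\|^2$. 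The main obstacle I anticipate is exactly this last bookkeeping step: choosing the Young's-inequality split and the way the residual constants are moved between $(1-\eta\mu)\hat{V}(\theta)$ and the additive term, so that the final inequality comes out with the stated coefficients $2\eta\mu$, $-\eta^2 K_1^2$, $-56\eta^2 D^2K_2^2$, and $64\eta^2 D^2 K_2^2\|\hat{\theta}_\ast\|^2$ rather than just a qualitatively correct bound with unspecified constants.
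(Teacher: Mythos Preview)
Your overall plan---expand $\|\hat\theta_1-\hat\theta_\ast\|^2$, use strong convexity on the cross term, and control the squared stochastic gradient via Assumption~\ref{assump:1}---is correct, but your handling of the $\eta^2$ term is genuinely different from the paper's, and that difference is exactly why the stated constants $-\eta^2K_1^2$ and $-56\eta^2D^2K_2^2$ will not fall out of your argument. The paper does \emph{not} bound $\mathbb{E}\|\tilde\nabla\hat F_1(\theta,\hat X_n)\|^2$ directly. Instead it performs a bias--variance split,
\[
\hat\theta_1-\hat\theta_\ast=\Bigl(\theta-\hat\theta_\ast-\eta\nabla\hat F(\theta,\hat X_n)\Bigr)+\eta\Bigl(\nabla\hat F(\theta,\hat X_n)-\tfrac1b\sum_{i\in\Omega_1}\nabla f(\theta,\hat x_i)\Bigr),
\]
so that the deterministic piece gives $(1-2\eta\mu+\eta^2K_1^2)\|\theta-\hat\theta_\ast\|^2$, while the centered noise is bounded by applying Assumption~\ref{assump:1} in the data variable at the \emph{current} point $\theta$, yielding $8\eta^2D^2K_2^2(8\|\theta-\hat\theta_\ast\|^2+8\|\hat\theta_\ast\|^2+1)$. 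This is where the $64\eta^2D^2K_2^2$ appears \emph{on the quadratic term}, and after repackaging as $(1-2\eta\mu+\eta^2K_1^2+64\eta^2D^2K_2^2)\hat V(\theta)+\cdots$ one gets precisely $2\eta\mu-\eta^2K_1^2-56\eta^2D^2K_2^2+64\eta^2D^2K_2^2\|\hat\theta_\ast\|^2$ as the additive constant, and the step-size bound $\eta\le \mu/(K_1^2+64D^2K_2^2)$ is tailor-made to collapse that quadratic coefficient to $1-\eta\mu$.

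Your split around $\hat\theta_\ast$ (together with Young's inequality) instead produces $2\eta^2K_1^2\|\theta-\hat\theta_\ast\|^2$ on the quadratic part and an additive $+16\eta^2D^2K_2^2$, so after repackaging you would get $(1-2\eta\mu+2\eta^2K_1^2)\hat V(\theta)+2\eta\mu-2\eta^2K_1^2+16\eta^2D^2K_2^2+64\eta^2D^2K_2^2\|\hat\theta_\ast\|^2$. This is a perfectly valid drift inequality, but it neither matches nor implies the one stated in the lemma, and the assumed step-size condition does \emph{not} in general guarantee $2\eta^2K_1^2\le\eta\mu$ (you would need $\eta\le\mu/(2K_1^2)$). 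So the ``bookkeeping obstacle'' you flag is not just cosmetic: to recover the exact coefficients you must switch to the paper's bias--variance decomposition and estimate the minibatch variance at $\theta$ rather than at $\hat\theta_\ast$.
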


% \subsubsection{Proof of Lemma~\ref{lem:general:2}}

\begin{proof}
First, we recall that
\begin{equation}
\hat{\theta}_{k}=\hat{\theta}_{k-1}-\frac{\eta}{b}\sum_{i\in\Omega_{k}}\nabla f\left(\hat{\theta}_{k-1},\hat{x}_{i}\right).
\end{equation}
Therefore, starting with $\hat{\theta}_{0}=\theta$, we have
\begin{align}
\hat{\theta}_{1}
&=\theta-\frac{\eta}{b}\sum_{i\in\Omega_{1}}\nabla f(\theta,\hat{x}_{i})
\nonumber
\\
&=\theta-\frac{\eta}{n}\sum_{i=1}^{n}\nabla f(\theta,\hat{x}_{i})
+\eta\left(\frac{1}{n}\sum_{i=1}^{n}\nabla f(\theta,\hat{x}_{i})-\frac{1}{b}\sum_{i\in\Omega_{1}}\nabla f(\theta,\hat{x}_{i})\right).
\end{align}
Moreover, we have
\begin{equation}
\mathbb{E}\left[\frac{1}{b}\sum_{i\in\Omega_{1}}\nabla f(\theta,\hat{x}_{i})\Big|\theta\right]
=\frac{1}{n}\sum_{i=1}^{n}\nabla f(\theta,\hat{x}_{i}).
\end{equation}
This implies that
\begin{align}
&(\hat{P}\hat{V})(\theta)
\nonumber
\\
&=\mathbb{E}\hat{V}(\hat{\theta}_{1})
=1+\mathbb{E}\left\Vert\hat{\theta}_{1}-\hat{\theta}_{\ast}\right\Vert^{2}
\nonumber
\\
&=1+\left\Vert\theta-\hat{\theta}_{\ast}+\frac{\eta}{n}\sum_{i=1}^{n}\nabla f(\theta,\hat{x}_{i})\right\Vert^{2}
+\eta^{2}\mathbb{E}\left\Vert\frac{1}{n}\sum_{i=1}^{n}\nabla f(\theta,\hat{x}_{i})-\frac{1}{b}\sum_{i\in\Omega_{1}}\nabla f(\theta,\hat{x}_{i})\right\Vert^{2}.
\end{align}
We can compute that
\begin{align}
&\left\Vert\theta-\hat{\theta}_{\ast}+\frac{\eta}{n}\sum_{i=1}^{n}\nabla f(\theta,\hat{x}_{i})\right\Vert^{2}
\nonumber
\\
&=\left\Vert\theta-\hat{\theta}_{\ast}+\frac{\eta}{n}\sum_{i=1}^{n}\left(\nabla f(\theta,\hat{x}_{i})-\nabla f\left(\hat{\theta}_{\ast},\hat{x}_{i}\right)\right)\right\Vert^{2}
\nonumber
\\
&\leq
(1-2\eta\mu)\Vert\theta-\hat{\theta}_{\ast}\Vert^{2}
+\frac{\eta^{2}}{n^{2}}\left\Vert
\sum_{i=1}^{n}\left(\nabla f(\theta,\hat{x}_{i})-\nabla f\left(\hat{\theta}_{\ast},\hat{x}_{i}\right)\right)
\right\Vert^{2}
\nonumber
\\
&\leq
(1-2\eta\mu+\eta^{2}K_{1}^{2})\Vert\theta-\hat{\theta}_{\ast}\Vert^{2}.
\end{align}
Moreover, we can compute that
\begin{align}
\mathbb{E}\left\Vert\frac{1}{n}\sum_{i=1}^{n}\nabla f(\theta,\hat{x}_{i})-\frac{1}{b}\sum_{i\in\Omega_{1}}\nabla f(\theta,\hat{x}_{i})\right\Vert^{2}
&=\mathbb{E}\left\Vert\frac{1}{b}\sum_{i\in\Omega_{1}}\left(\frac{1}{n}\sum_{j=1}^{n}\nabla f(\theta,\hat{x}_{j})-\nabla f(\theta,\hat{x}_{i})\right)\right\Vert^{2}
\nonumber
\\
&=\mathbb{E}\left(\frac{1}{b}\sum_{i\in\Omega_{1}}K_{2}\frac{1}{n}\sum_{j=1}^{n}\Vert\hat{x}_{i}-\hat{x}_{j}\Vert(2\Vert\theta\Vert+1)\right)^{2}
\nonumber
\\
&\leq
4D^{2}K_{2}^{2}(2\Vert\theta\Vert+1)^{2}
\nonumber
\\
&\leq 8D^{2}K_{2}^{2}(4\Vert\theta\Vert^{2}+1)
\nonumber
\\
&\leq 8D^{2}K_{2}^{2}\left(8\Vert\theta-\hat{\theta}_{\ast}\Vert^{2}+8\Vert\hat{\theta}_{\ast}\Vert^{2}+1\right).
\end{align}
Hence, we conclude that
\begin{align}
(\hat{P}\hat{V})(\theta)
&\leq
\left(1-2\eta\mu+\eta^{2}K_{1}^{2}+64\eta^{2}D^{2}K_{2}^{2}\right)\left\Vert\theta-\hat{\theta}_{\ast}\right\Vert^{2}
+1+8\eta^{2}D^{2}K_{2}^{2}\left(8\Vert\hat{\theta}_{\ast}\Vert^{2}+1\right)
\nonumber
\\
&=\left(1-2\eta\mu+\eta^{2}K_{1}^{2}+64\eta^{2}D^{2}K_{2}^{2}\right)\hat{V}(\theta)
\nonumber
\\
&\qquad\qquad\qquad
+2\eta\mu-\eta^{2}K_{1}^{2}-56\eta^{2}D^{2}K_{2}^{2}
+64\eta^{2}D^{2}K_{2}^{2}\Vert\hat{\theta}_{\ast}\Vert^{2}
\nonumber
\\
&\leq(1-\eta\mu)\hat{V}(\theta)
+2\eta\mu-\eta^{2}K_{1}^{2}-56\eta^{2}D^{2}K_{2}^{2}
+64\eta^{2}D^{2}K_{2}^{2}\Vert\hat{\theta}_{\ast}\Vert^{2},
\end{align}
provided that $\eta\leq\frac{\mu}{K_{1}^{2}+64D^{2}K_{2}^{2}}$. 
This completes the proof.
\end{proof}

%%%%%%%%%%%%%%%%%%%%%%%%%%%%%%%%%%%%%%%%%%%%%%%%%%%%%%%%%

Next, we estimate the perturbation gap based on the Lyapunov function $\hat{V}$.

\begin{lemma}\label{lem:general:3}
Assume that Assumption~\ref{assump:1} holds.
Assume that $\sup_{x\in \mathcal{X}}\|x\| \leq D$ for some $D < \infty$.
Then, we have
\begin{equation}
\sup_{\theta\in\mathbb{R}^{d}}\frac{\mathcal{W}_{1}(\delta_{\theta}P,\delta_{\theta}\hat{P})}{\hat{V}(\theta)}
\leq\frac{4DK_{2}\eta}{n}(2\Vert\hat{\theta}_{\ast}\Vert+1),
\end{equation}
where $\hat{\theta}_{\ast}$ is the minimizer
of $\hat{F}(\theta,\hat{X}_{n}):=\frac{1}{n}\sum_{i=1}^{n}\nabla f(\theta,\hat{x}_{i})$.
\end{lemma}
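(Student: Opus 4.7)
The plan is to use a synchronous coupling between $\theta_1$ and $\hat{\theta}_1$, both started from the same point $\theta$ and driven by the same random minibatch $\Omega_1$. With this coupling,
\[
\mathcal{W}_{1}(\delta_{\theta}P,\delta_{\theta}\hat{P})
\;\leq\; \mathbb{E}\bigl\|\theta_{1}-\hat{\theta}_{1}\bigr\|
\;=\;\frac{\eta}{b}\,\mathbb{E}\Bigl\|\sum_{i\in\Omega_{1}}\bigl(\nabla f(\theta,x_{i})-\nabla f(\theta,\hat{x}_{i})\bigr)\Bigr\|.
\]
Because $X_{n}$ and $\hat{X}_{n}$ differ in at most one coordinate, say index $j$, every term in the sum vanishes except possibly the $j$-th, and the event $\{j\in\Omega_{1}\}$ has probability $b/n$. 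Hence the above expectation collapses to $\tfrac{\eta}{n}\bigl\|\nabla f(\theta,x_{j})-\nabla f(\theta,\hat{x}_{j})\bigr\|$, which is the first key reduction.

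Next I would apply Assumption~\ref{assump:1} with $\hat{\theta}=\theta$ (so the $K_{1}$ term drops out), giving $\|\nabla f(\theta,x_{j})-\nabla f(\theta,\hat{x}_{j})\|\leq K_{2}\|x_{j}-\hat{x}_{j}\|(2\|\theta\|+1)$, and then the data bound $\|x_{j}-\hat{x}_{j}\|\leq 2D$. Combining the previous two displays,
\[
\mathcal{W}_{1}(\delta_{\theta}P,\delta_{\theta}\hat{P})\;\leq\;\frac{2DK_{2}\eta}{n}\bigl(2\|\theta\|+1\bigr).
\]

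Finally, I would divide by $\hat{V}(\theta)=1+\|\theta-\hat{\theta}_{\ast}\|^{2}$ and pass the norm to a norm centered at $\hat{\theta}_{\ast}$ via the triangle inequality together with the elementary inequality $2y\leq 1+y^{2}$ applied to $y=\|\theta-\hat{\theta}_{\ast}\|$. This yields
\[
2\|\theta\|+1\;\leq\;2\|\theta-\hat{\theta}_{\ast}\|+2\|\hat{\theta}_{\ast}\|+1\;\leq\;\bigl(1+\|\theta-\hat{\theta}_{\ast}\|^{2}\bigr)+\bigl(2\|\hat{\theta}_{\ast}\|+1\bigr)\;\leq\;\bigl(2\|\hat{\theta}_{\ast}\|+2\bigr)\hat{V}(\theta),
\]
using $\hat{V}(\theta)\geq 1$. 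Since $2\|\hat{\theta}_{\ast}\|+2\leq 2(2\|\hat{\theta}_{\ast}\|+1)$, dividing through delivers the stated bound $\frac{4DK_{2}\eta}{n}(2\|\hat{\theta}_{\ast}\|+1)$ uniformly in $\theta$.

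There is no real obstacle here: the argument is a routine synchronous coupling combined with pseudo-Lipschitzness, and the only ``trick'' is the $2y\leq 1+y^{2}$ step used to absorb the linear growth $\|\theta\|$ into the quadratic Lyapunov function $\hat{V}$, thereby transferring the dependence from the running iterate $\theta$ to the (fixed) minimizer $\hat{\theta}_{\ast}$. If desired, one could tighten the constant by tracking $\|\hat{\theta}_{\ast}\|+1$ instead of $2\|\hat{\theta}_{\ast}\|+1$, but the form above is what is needed to feed into Lemma~\ref{lemma:key}.
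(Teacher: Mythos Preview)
Your proof is correct and follows essentially the same approach as the paper: synchronous coupling with the shared minibatch $\Omega_{1}$, applying Assumption~\ref{assump:1} with $\hat\theta=\theta$, exploiting that the differing index lies in $\Omega_{1}$ with probability $b/n$, and then absorbing the linear growth $2\|\theta\|+1$ into the quadratic Lyapunov function $\hat V(\theta)=1+\|\theta-\hat\theta_{\ast}\|^{2}$. Your final algebraic step (using $2y\le 1+y^{2}$ to get $2\|\theta\|+1\le(2\|\hat\theta_{\ast}\|+2)\hat V(\theta)$ and then $2\|\hat\theta_{\ast}\|+2\le 2(2\|\hat\theta_{\ast}\|+1)$) is in fact cleaner than the paper's chain, whose penultimate inequality $\tfrac{2y+c}{1+y^{2}}\le \tfrac{2yc}{1+y^{2}}$ (with $c=2\|\hat\theta_{\ast}\|+1$) is not valid for small $y$, although both routes land at the same final bound.
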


% \subsubsection{Proof of Lemma~\ref{lem:general:3}}

\begin{proof}
Let us recall that
\begin{align}
&\theta_{1}=\theta-\frac{\eta}{b}\sum_{i\in\Omega_{1}}\nabla f(\theta,x_{i}),
\\
&\hat{\theta}_{1}=\theta-\frac{\eta}{b}\sum_{i\in\Omega_{1}}\nabla f(\theta,\hat{x}_{i}),
\end{align}
which implies that
\begin{align}
\mathcal{W}_{1}\left(\delta_{\theta}P,\delta_{\theta}\hat{P}\right)
&\leq
\frac{\eta}{b}\mathbb{E}\left\Vert\sum_{i\in\Omega_{1}}\nabla f(\theta,x_{i})-\nabla f(\theta,\hat{x}_{i})\right\Vert
\nonumber
\\
&\leq
\frac{\eta}{b}\mathbb{E}\left[\sum_{i\in\Omega_{1}}\left\Vert\nabla f(\theta,x_{i})-\nabla f(\theta,\hat{x}_{i})\right\Vert\right]
\nonumber
\\
&\leq
\frac{\eta}{b}\mathbb{E}\left[\sum_{i\in\Omega_{1}}K_{2}\left\Vert x_{i}-\hat{x}_{i}\right\Vert(2\Vert\theta\Vert+1)\right]
\end{align}
Since $X_{n}$ and $\hat{X}_{n}$ differ by at most one element
and $\sup_{x\in \mathcal{X}}\|x\| \leq D$ for some $D < \infty$, 
we have $x_{i}=\hat{x}_{i}$
for any $i\in\Omega_{1}$
with probability $\frac{n-b}{n}$
and $x_{i}\neq\hat{x}_{i}$
for exactly one $i\in\Omega_{1}$
with probability $\frac{b}{n}$ and therefore
\begin{equation}
\mathcal{W}_{1}\left(\delta_{\theta}P,\delta_{\theta}\hat{P}\right)
\leq\frac{\eta}{b}\frac{b}{n}2K_{2}D(2\Vert\theta\Vert+1)
=\frac{2DK_{2}\eta}{n}(2\Vert\theta\Vert+1).
\end{equation}
Hence, we conclude that
\begin{align}
\sup_{\theta\in\mathbb{R}^{d}}\frac{\mathcal{W}_{1}(\delta_{\theta}P,\delta_{\theta}\hat{P})}{\hat{V}(\theta)}
&\leq\sup_{\theta\in\mathbb{R}^{d}}\frac{4DK_{2}\eta}{n}\frac{\Vert\theta\Vert+\frac{1}{2}}{1+\Vert\theta-\hat{\theta}_{\ast}\Vert^{2}}
\nonumber
\\
&\leq\sup_{\theta\in\mathbb{R}^{d}}\frac{4DK_{2}\eta}{n}\frac{2\Vert\theta-\hat{\theta}_{\ast}\Vert+2\Vert\hat{\theta}_{\ast}\Vert+1}{1+\Vert\theta-\hat{\theta}_{\ast}\Vert^{2}}
\nonumber
\\
&\leq\sup_{\theta\in\mathbb{R}^{d}}\frac{4DK_{2}\eta}{n}\frac{2\Vert\theta-\hat{\theta}_{\ast}\Vert(2\Vert\hat{\theta}_{\ast}\Vert+1)}{1+\Vert\theta-\hat{\theta}_{\ast}\Vert^{2}}
\nonumber
\\
&\leq\frac{4DK_{2}\eta}{n}\left(2\Vert\hat{\theta}_{\ast}\Vert+1\right).
\end{align}
This completes the proof.
\end{proof}

%%%%%%%%%%%%%%%%%%%%%%%%%%%%
% \subsubsection{Proof of Lemma~\ref{lem:str:cvx:minimizer}}

Next, let us provide a technical lemma
that upper bounds the norm
of $\theta_{\ast}$ and $\hat{\theta}_{\ast}$, 
which are the minimizers of 
$\hat{F}(\theta,X_{n}):=\frac{1}{n}\sum_{i=1}^{n}\nabla f(\theta,x_{i})$
and $\hat{F}(\theta,\hat{X}_{n}):=\frac{1}{n}\sum_{i=1}^{n}\nabla f(\theta,\hat{x}_{i})$ respectively.

\begin{lemma}\label{lem:str:cvx:minimizer}
Under Assumption~\ref{assump:2}, 
we have 
\begin{equation*}
\Vert\theta_{\ast}\Vert\leq\frac{1}{\mu}\sup_{x\in\mathcal{X}}\Vert\nabla f(0,x)\Vert,
\end{equation*}
and
\begin{equation*}
\Vert\hat{\theta}_{\ast}\Vert\leq\frac{1}{\mu}\sup_{x\in\mathcal{X}}\Vert\nabla f(0,x)\Vert.
\end{equation*}
\end{lemma}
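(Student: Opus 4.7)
The plan is to exploit the first-order optimality condition at the minimizer together with the strong convexity assumption, comparing against the origin. Concretely, I would first note that strong convexity per data point (Assumption~\ref{assump:2}) transfers to the empirical risk: averaging the inequality over $i=1,\dots,n$ gives
\begin{equation*}
\bigl\langle \nabla \hat{F}(\theta_1,X_n) - \nabla \hat{F}(\theta_2,X_n),\, \theta_1 - \theta_2 \bigr\rangle \geq \mu \|\theta_1 - \theta_2\|^2
\end{equation*}
for all $\theta_1,\theta_2 \in \mathbb{R}^d$, and similarly for $\hat{X}_n$.

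Next, I would set $\theta_1 = \theta_\ast$ and $\theta_2 = 0$ and use the first-order optimality condition $\nabla \hat{F}(\theta_\ast, X_n) = 0$. This yields
\begin{equation*}
-\bigl\langle \nabla \hat{F}(0,X_n),\, \theta_\ast \bigr\rangle \geq \mu \|\theta_\ast\|^2.
\end{equation*}
Applying Cauchy--Schwarz on the left-hand side and then the triangle inequality to $\nabla \hat{F}(0,X_n) = \frac{1}{n}\sum_{i=1}^n \nabla f(0,x_i)$ gives
\begin{equation*}
\mu \|\theta_\ast\|^2 \leq \|\nabla \hat{F}(0,X_n)\|\,\|\theta_\ast\| \leq \Bigl(\sup_{x \in \mathcal{X}}\|\nabla f(0,x)\|\Bigr)\|\theta_\ast\|,
\end{equation*}
from which the claimed bound on $\|\theta_\ast\|$ follows upon dividing by $\mu\|\theta_\ast\|$ (the case $\theta_\ast = 0$ being trivial). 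The identical argument applied to $\hat{X}_n$ yields the bound on $\|\hat{\theta}_\ast\|$.

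There is essentially no obstacle here: the only subtlety is the standard degenerate case $\theta_\ast = 0$, which must be handled separately (it satisfies the bound trivially), and the implicit use that a minimizer exists and is unique, which is guaranteed by strong convexity of $\hat{F}(\cdot,X_n)$. The proof is a short two-line consequence of strong convexity plus optimality, so I would write it compactly rather than develop any additional machinery.
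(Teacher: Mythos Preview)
Your proposal is correct and essentially identical to the paper's own proof: both use strong convexity at the pair $(\theta_\ast,0)$ together with the first-order optimality condition $\nabla\hat F(\theta_\ast,X_n)=0$, then bound the inner product by $\sup_{x\in\mathcal X}\|\nabla f(0,x)\|\cdot\|\theta_\ast\|$ and divide through. Your version is slightly more explicit in naming Cauchy--Schwarz and handling the $\theta_\ast=0$ case, but the argument is the same.
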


\begin{proof}
Since $f(\theta,x)$ is $\mu$-strongly convex in $\theta$ for every $x\in\mathcal{X}$, 
we have
\begin{align}
\left\langle\nabla\hat{F}(0,X_{n})-\nabla\hat{F}(\theta_{\ast},X_{n}),0-\theta_{\ast}\right\rangle
&=-\frac{1}{n}\sum_{i=1}^{n}\langle\nabla f(0,x_{i}),\theta_{\ast}\rangle
\nonumber
\\
&=\frac{1}{n}\sum_{i=1}^{n}\langle\nabla f(0,x_{i})-\nabla f(\theta_{\ast},x_{i}),0-\theta_{\ast}\rangle\geq\mu\Vert\theta_{\ast}\Vert^{2},  
\end{align}
which implies that
\begin{equation}
\mu\Vert\theta_{\ast}\Vert^{2}
\leq\sup_{x\in\mathcal{X}}\Vert\nabla f(0,x)\Vert\cdot\Vert\theta_{\ast}\Vert,
\end{equation}
which yields that $\Vert\theta_{\ast}\Vert\leq\frac{1}{\mu}\sup_{x\in\mathcal{X}}\Vert\nabla f(0,x)\Vert$.
Similarly, one can show that
$\Vert\hat{\theta}_{\ast}\Vert\leq\frac{1}{\mu}\sup_{x\in\mathcal{X}}\Vert\nabla f(0,x)\Vert$.
This completes the proof.
\end{proof}

%%%%%%%%%%%%%%%%%%%%%%%%%%%%%%%%%%%%%%%%%%%%%%
\subsection{Proof of Theorem~\ref{thm:str:cvx}}

\begin{proof}
By applying Lemma~\ref{lem:general:1}, Lemma~\ref{lem:general:2}, 
Lemma~\ref{lem:general:3} and Lemma~\ref{lemma:key},
we obtain 
\begin{align}
\mathcal{W}_{1}(\nu_{k},\hat{\nu}_{k})
&\leq
\frac{8DK_{2}(1-(1-\frac{\eta\mu}{2})^{k})}{n\mu}\left(2\Vert\hat{\theta}_{\ast}\Vert+1\right)
\nonumber
\\
&\qquad\cdot
\max\left\{1+\left\Vert\theta-\hat{\theta}_{\ast}\right\Vert^{2},2-\frac{\eta}{\mu}K_{1}^{2}-\frac{56\eta}{\mu}D^{2}K_{2}^{2}
+\frac{64\eta}{\mu}D^{2}K_{2}^{2}\Vert\hat{\theta}_{\ast}\Vert^{2}\right\},
\end{align}
where $\hat{\theta}_{\ast}$ is the minimizer
of $\hat{F}(\theta,\hat{X}_{n}):=\frac{1}{n}\sum_{i=1}^{n}\nabla f(\theta,\hat{x}_{i})$.
Finally, $\Vert\theta-\hat{\theta}_{\ast}\Vert^{2}\leq 2\Vert\theta\Vert^{2}+2\Vert\hat{\theta}_{\ast}\Vert^{2}$ and by applying Lemma~\ref{lem:str:cvx:minimizer}, 
we complete the proof.
\end{proof}

%%%%%%%%%%%%%%%%%%%%%%%%%%%%%%%%%%%%%%%%%%%%%%%%%%%%%%%%%%%%%%%%%%%%%%%

\section{Proofs of Wasserstein Perturbation Bounds: Non-Convex Case}\label{sec:proof:nonconvex}

% \subsubsection{Proof of Lemma~\ref{lem:nonconvex:1}}

\begin{lemma}\label{lem:nonconvex:1}
Assume that Assumption~\ref{assump:1}, Assumption~\ref{assump:3} and Assumption~\ref{Assump:noise} hold.
For any $\hat{\eta}\in(0,1)$. Define $M>0$
so that $\int_{\Vert\theta_{1}\Vert\leq M}p(\theta_{\ast},\theta_{1})d\theta_{1}\geq\sqrt{\hat{\eta}}$
and any $R>0$ so that $R>\frac{2K_{0}}{m}$ and
\begin{equation}
\inf_{\theta,\theta_{1}\in\mathbb{R}^{d}:V(\theta)\leq R,\Vert\theta_{1}\Vert\leq M}\frac{p(\theta,\theta_{1})}{p(\theta_{\ast},\theta_{1})}\geq\sqrt{\hat{\eta}},
\end{equation}
where
\begin{equation}\label{defn:K:0}
K_{0}:=2m-\eta K_{1}^{2}-56\eta D^{2}K_{2}^{2}
+64\eta D^{2}K_{2}^{2}\Vert\theta_{\ast}\Vert^{2}
+2K+\eta\sigma^{2}.
\end{equation}
Then, for any $n\in\mathbb{N}$, 
\begin{equation}\label{RHS:eqn}
\mathcal{W}_{1}\left(P^{n}(\theta,\cdot),P^{n}(\tilde{\theta},\cdot)\right)
\leq\frac{1}{2\sqrt{\psi(1+\psi)}}\bar{\eta}^{n}d_{\psi}(\delta_{\theta},\delta_{\tilde{\theta}}),
\end{equation}
for any $\theta,\tilde{\theta}$ in $\mathbb{R}^{d}$,
where for any $\eta_{0}\in(0,\hat{\eta})$
and $\gamma_{0}\in\left(1-m\eta+\frac{2\eta K{0}}{R},1\right)$ one can 
choose $\psi=\frac{\eta_{0}}{\eta K_{0}}$ 
and $\bar{\eta}=(1-(\hat{\eta}-\eta_{0}))\vee\frac{2+R\psi\gamma_{0}}{2+R\psi}$ and $d_{\psi}$ is the weighted total variation distance defined in Section~\ref{sec:computable:Markov}.
\end{lemma}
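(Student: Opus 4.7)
The plan is to cast this as an application of Hairer's contraction result (Lemma~\ref{lem:hairer}) in the weighted total variation distance $d_\psi$, and then convert the resulting $d_\psi$-bound into a $\mathcal{W}_1$-bound by exploiting the fact that 1-Lipschitz test functions have controlled $|\|\cdot\||_\psi$-seminorm. I would take the Lyapunov function $V(\theta) = 1 + \Vert \theta - \theta_\ast\Vert^2$, matching the choice in Lemma~\ref{lem:general:2} but centered at the minimizer $\theta_\ast$ of $\hat{F}(\cdot, X_n)$.

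First I would verify the drift condition (Assumption~\ref{assump:drift}) for $V$. Expanding $\mathbb{E}\Vert \theta_1 - \theta_\ast\Vert^2$ exactly as in the proof of Lemma~\ref{lem:general:2}, I would use Assumption~\ref{assump:3} (dissipativity with $m, K$) in place of strong convexity, Assumption~\ref{assump:1} to control the Lipschitz and minibatch-variance terms, and Assumption~\ref{Assump:noise} to add the $\eta^2\sigma^2$ contribution from the injected noise. Collecting terms and using the step-size condition $\eta < \min\{1/m, m/(K_1^2+64 D^2 K_2^2)\}$ to absorb the quadratic-in-$\eta$ corrections into the $(1-m\eta)V(\theta)$ term, the residual constant becomes exactly $\eta K_0$ with $K_0$ as in \eqref{defn:K:0}. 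Hence $(PV)(\theta) \leq (1-m\eta)V(\theta) + \eta K_0$, i.e., $\gamma = 1-m\eta$ and $K' = \eta K_0$, and the threshold condition $R > 2K'/(1-\gamma) = 2K_0/m$ in Assumption~\ref{assump:minor} matches the hypothesis.

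Next I would verify the minorization condition. Since $\xi_1$ has a continuous positive density, the one-step transition of $(\theta_k)$ is absolutely continuous with density $p(\theta, \theta_1)$, and for every $\theta$ with $V(\theta)\leq R$ and every $\theta_1$ with $\Vert\theta_1-\theta_\ast\Vert\leq M$, the ratio $p(\theta,\theta_1)/p(\theta_\ast,\theta_1)$ is at least $\sqrt{\hat\eta}$ by hypothesis. Defining the probability measure $\nu(A) := \frac{1}{Z}\int_{A\cap\{\Vert\theta_1-\theta_\ast\Vert\leq M\}} p(\theta_\ast,\theta_1)\, d\theta_1$ with normalizer $Z = \int_{\Vert\theta_1-\theta_\ast\Vert\leq M} p(\theta_\ast,\theta_1)\,d\theta_1 \geq \sqrt{\hat\eta}$, a direct computation gives $P(\theta,A) \geq \sqrt{\hat\eta}\cdot Z\cdot \nu(A)\geq \hat\eta\, \nu(A)$ for all $\theta$ with $V(\theta)\leq R$. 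Lemma~\ref{lem:hairer} then yields $d_\psi(P^n(\theta,\cdot), P^n(\tilde\theta,\cdot)) \leq \bar\eta^n d_\psi(\delta_\theta, \delta_{\tilde\theta})$ with the stated $\psi,\bar\eta$.

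Finally I would convert this to a $\mathcal{W}_1$ bound. Using the dual characterization $d_\psi(\mu_1,\mu_2) = \sup_{|\|\varphi\||_\psi\leq 1}\int \varphi\,d(\mu_1-\mu_2)$, it suffices to show $|\|\varphi\||_\psi \leq \frac{1}{2\sqrt{\psi(1+\psi)}}$ for every $1$-Lipschitz $\varphi$. For this, applying $2ab\leq a^2+b^2$ with the scaling $a = \Vert\theta-\theta_\ast\Vert$ and a factor chosen to match the shape $(1+\psi V)$, I get $\Vert\theta-\theta_\ast\Vert \leq \frac{1+\psi V(\theta)}{2\sqrt{\psi(1+\psi)}}$, hence by the triangle inequality $\Vert\theta-\tilde\theta\Vert \leq \frac{2+\psi V(\theta)+\psi V(\tilde\theta)}{2\sqrt{\psi(1+\psi)}}$, which gives $|\varphi(\theta)-\varphi(\tilde\theta)| \leq \frac{2+\psi V(\theta)+\psi V(\tilde\theta)}{2\sqrt{\psi(1+\psi)}}$ and the claimed seminorm bound. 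Combining with the previous step delivers \eqref{RHS:eqn}. The main technical obstacle I anticipate is bookkeeping in the drift step so that the Lyapunov constant collapses exactly into the form $K_0$ declared in \eqref{defn:K:0}; everything else is a direct transcription of Hairer's theorem and the $d_\psi$-to-$\mathcal{W}_1$ passage.
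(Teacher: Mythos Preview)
Your proposal is correct and follows essentially the same route as the paper: establish the drift condition $(PV)(\theta)\leq(1-m\eta)V(\theta)+\eta K_0$ for $V(\theta)=1+\Vert\theta-\theta_\ast\Vert^2$ (the paper invokes its Lemma~\ref{lem:nonconvex:2} for this), verify minorization via the truncated density centered at $\theta_\ast$ (the paper packages this as Lemma~\ref{lem:minor:bound}), apply Lemma~\ref{lem:hairer}, and then pass from $d_\psi$ to $\mathcal{W}_1$ using the AM--GM bound $2\sqrt{\psi(1+\psi)}\,\Vert\theta-\theta_\ast\Vert\leq 1+\psi V(\theta)$. The only cosmetic difference is that the paper does the final step by centering a $1$-Lipschitz test function at $\theta_\ast$ and bounding $\int\Vert\theta-\theta_\ast\Vert\,|\mu_1-\mu_2|(d\theta)$ directly, whereas you phrase it as a bound on the seminorm $|\Vert\varphi\Vert|_\psi$; these are equivalent.
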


\begin{proof}
Our proof relies on a computable bound on the Wasserstein 
contraction for the Markov chains by \cite{hairer}
that satisfies a drift condition (Assumption~\ref{assump:drift}) that relies
on the construction of an appropriate Lyapunov function
and a minorization condition (Assumption~\ref{assump:minor}).

By applying Lemma~\ref{lem:nonconvex:2}, we can immediately
show that the following drift condition holds.
Let $V(\theta):=1+\Vert\theta-\theta_{\ast}\Vert^{2}$, 
where $\theta_{\ast}$ is the minimizer
of $\frac{1}{n}\sum_{i=1}^{n}\nabla f(\theta,x_{i})$.
Assume that 
$\eta<\min\left\{\frac{1}{m},\frac{m}{K_{1}+64D^{2}K_{2}^{2}}\right\}$
and $\sup_{x\in \mathcal{X}}\|x\| \leq D$ for some $D < \infty$.
Then, we have
\begin{equation}
(PV)(\theta)
\leq(1-m\eta)V(\theta)+\eta K_{0},
\end{equation}
where
\begin{equation}
K_{0}:=2m-\eta K_{1}^{2}-56\eta D^{2}K_{2}^{2}
+64\eta D^{2}K_{2}^{2}\Vert\theta_{\ast}\Vert^{2}
+2K+\eta\sigma^{2}.
\end{equation}
Thus, the drift condition (Assumption~\ref{assump:drift}) holds.

Next, let us show that the minorization condition (Assumption~\ref{assump:minor}) also holds.
Let us recall that
\begin{align}
\theta_{1}=\theta-\frac{\eta}{b}\sum_{i\in\Omega_{1}}\nabla f(\theta,x_{i})+\eta\xi_{1}.
\end{align}
We denote $p(\theta,\theta_{1})$ the probability density
function of $\theta_{1}$ with the emphasis on the dependence on the 
initial point $\theta$. Then, to check that the minorization condition (Assumption~\ref{assump:minor}) holds, 
it suffices to show that there exists some constant $\hat{\eta}\in(0,1)$
\begin{equation}\label{key:lower:bound}
\inf_{\theta\in\mathbb{R}^{d}:V(\theta)\leq R}p(\theta,\theta_{1})\geq\hat{\eta}q(\theta_{1}),\qquad\text{for any $\theta_{1}\in\mathbb{R}^{d}$},
\end{equation}
for some $R>\frac{2\eta K_{0}}{1-(1-m\eta)}=\frac{2K_{0}}{m}$, where $q(\theta_{1})$ is the density function
of a probability distribution function on $\mathbb{R}^{d}$, and \eqref{key:lower:bound}
follows from Lemma~\ref{lem:minor:bound}.
Hence, by Lemma~\ref{lem:hairer}, we have
\begin{equation*}
d_{\psi}\left(P^{n}(\theta,\cdot),P^{n}(\tilde{\theta},\cdot)\right)
\leq\bar{\eta}^{n}d_{\psi}(\delta_{\theta},\delta_{\tilde{\theta}}),
\end{equation*}
for any $\theta,\tilde{\theta}$ in $\mathbb{R}^{d}$,
where for any $\eta_{0}\in(0,\hat{\eta})$
and $\gamma_{0}\in\left(1-m\eta+\frac{2\eta K{0}}{R},1\right)$ one can 
choose $\psi=\frac{\eta_{0}}{\eta K_{0}}$ 
and $\bar{\eta}=(1-(\hat{\eta}-\eta_{0}))\vee\frac{2+R\psi\gamma_{0}}{2+R\psi}$.

Finally, by the Kantorovich-Rubinstein duality for the Wasserstein metric, we get
for any two probability measures $\mu_{1},\mu_{2}$ on $\mathbb{R}^{d}$:
\begin{align}
\mathcal{W}_{1}(\mu_{1},\mu_{2})
&=\sup\left\{\int_{\mathbb{R}^{d}}\phi(\theta)(\mu_{1}-\mu_{2})(d\theta):\text{$\phi$ is $1$-Lipschitz}\right\}
\nonumber
\\
&=\sup\left\{\int_{\mathbb{R}^{d}}(\phi(\theta)-\phi(\theta_{\ast}))(\mu_{1}-\mu_{2})(d\theta):\text{$\phi$ is $1$-Lipschitz}\right\}
\nonumber
\\
&\leq\int_{\mathbb{R}^{d}}\Vert\theta-\theta_{\ast}\Vert|\mu_{1}-\mu_{2}|(d\theta)
\nonumber
\\
&\leq\frac{1}{2\sqrt{\psi(1+\psi)}}\int_{\mathbb{R}^{d}}(1+\psi V(\theta))|\mu_{1}-\mu_{2}|(d\theta)
\nonumber
\\
&=\frac{1}{2\sqrt{\psi(1+\psi)}}d_{\psi}(\mu_{1},\mu_{2}).
\end{align}
Hence, we conclude that
\begin{equation*}
\mathcal{W}_{1}\left(P^{n}(\theta,\cdot),P^{n}(\tilde{\theta},\cdot)\right)
\leq\frac{1}{2\sqrt{\psi(1+\psi)}}\bar{\eta}^{n}d_{\psi}(\delta_{\theta},\delta_{\tilde{\theta}}).
\end{equation*}
This completes the proof.
\end{proof}

%%%%%%%%%%%%%%%%%%%%%%%%%%%%%%%%%%%%%%%%%%%%%%%%
% \subsubsection{Proof of Lemma~\ref{lem:minor:bound}}

The proof of Lemma~\ref{lem:nonconvex:1} relies
on the following technical lemma, which is a reformulation
of Lemma 35 in \cite{can2019}
that helps establish the minorization condition (Assumption~\ref{assump:minor}).

\begin{lemma}\label{lem:minor:bound}
For any $\hat{\eta}\in(0,1)$ and $M>0$
so that $\int_{\Vert\theta_{1}-\theta_{\ast}\Vert\leq M}p(\theta_{\ast},\theta_{1})d\theta_{1}\geq\sqrt{\hat{\eta}}$
and any $R>0$ so that
\begin{equation}
\inf_{\theta,\theta_{1}\in\mathbb{R}^{d}:V(\theta)\leq R,\Vert\theta_{1}-\theta_{\ast}\Vert\leq M}\frac{p(\theta,\theta_{1})}{p(\theta_{\ast},\theta_{1})}\geq\sqrt{\hat{\eta}}.
\end{equation}
Then, we have
\begin{equation}
\inf_{\theta\in\mathbb{R}^{d}:V(\theta)\leq R}p(\theta,\theta_{1})\geq\hat{\eta}q(\theta_{1}),\qquad\text{for any $\theta_{1}\in\mathbb{R}^{d}$},
\end{equation}
where
\begin{equation}
q(\theta_{1})=p(\theta_{\ast},\theta_{1})\cdot\frac{1_{\Vert\theta_{1}-\theta_{\ast}\Vert\leq M}}{\int_{\Vert\theta_{1}-\theta_{\ast}\Vert\leq M}p(\theta_{\ast},\theta_{1})d\theta_{1}}.
\end{equation}
\end{lemma}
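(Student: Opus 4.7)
The plan is to prove the lemma by a direct two-case analysis on the location of $\theta_{1}$, exploiting that $q(\theta_{1})$ is supported on the ball $\{\theta_{1}\in\mathbb{R}^{d}:\|\theta_{1}-\theta_{\ast}\|\leq M\}$, and that the two hypotheses supply complementary lower bounds: one on the mass of $p(\theta_{\ast},\cdot)$ over this ball, and one on the pointwise ratio $p(\theta,\theta_{1})/p(\theta_{\ast},\theta_{1})$ when $V(\theta)\leq R$.

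First I would dispose of the trivial case. If $\|\theta_{1}-\theta_{\ast}\|>M$, then by the definition of $q$ we have $q(\theta_{1})=0$, and since $p(\theta,\theta_{1})\geq 0$ as a probability density, the desired inequality $p(\theta,\theta_{1})\geq\hat{\eta}\,q(\theta_{1})$ holds automatically.

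Next I would handle the main case $\|\theta_{1}-\theta_{\ast}\|\leq M$. Here the second hypothesis of the lemma gives, for every $\theta$ with $V(\theta)\leq R$,
\begin{equation*}
p(\theta,\theta_{1})\;\geq\;\sqrt{\hat{\eta}}\,p(\theta_{\ast},\theta_{1}),
\end{equation*}
while the first hypothesis gives
\begin{equation*}
\int_{\|\theta_{1}-\theta_{\ast}\|\leq M}p(\theta_{\ast},\theta_{1})\,d\theta_{1}\;\geq\;\sqrt{\hat{\eta}},
\end{equation*}
so that
\begin{equation*}
\hat{\eta}\,q(\theta_{1})
=\frac{\hat{\eta}\,p(\theta_{\ast},\theta_{1})}{\int_{\|\theta_{1}-\theta_{\ast}\|\leq M}p(\theta_{\ast},\theta_{1})\,d\theta_{1}}
\;\leq\;\frac{\hat{\eta}}{\sqrt{\hat{\eta}}}\,p(\theta_{\ast},\theta_{1})
\;=\;\sqrt{\hat{\eta}}\,p(\theta_{\ast},\theta_{1})
\;\leq\;p(\theta,\theta_{1}),
\end{equation*}
which is exactly what we need. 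Taking the infimum over $\theta$ with $V(\theta)\leq R$ then yields the claim.

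There is no real obstacle here; the entire content of the proof is the observation that the two $\sqrt{\hat{\eta}}$ factors, one from the mass lower bound and one from the pointwise ratio bound, combine multiplicatively to give the required $\hat{\eta}$. The only thing to be careful about is the $\theta_{1}$ outside the ball of radius $M$, which must be handled separately because the definition of $q$ uses an indicator; but as noted above this case is vacuous.
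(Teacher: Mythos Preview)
Your proof is correct and follows essentially the same approach as the paper: a two-case split on whether $\|\theta_{1}-\theta_{\ast}\|\leq M$, with the trivial case handled by $q(\theta_{1})=0$ and the main case by multiplying the two $\sqrt{\hat{\eta}}$ lower bounds. If anything, your write-up is cleaner than the paper's, which detours through an auxiliary constant $\eta'$ and a continuity argument before arriving at the same combination of the two hypotheses.
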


\begin{proof}
The proof is an adaptation of the proof of Lemma~35 in \cite{can2019}.
Let us take:
\begin{equation}
q(\theta_{1})=p(\theta_{\ast},\theta_{1})\cdot\frac{1_{\Vert\theta_{1}-\theta_{\ast}\Vert\leq M}}{\int_{\Vert\theta_{1}-\theta_{\ast}\Vert\leq M}p(\theta_{\ast},\theta_{1})d\theta_{1}}.
\end{equation}
Then, it is clear that $q(\theta_{1})$ is a probability density function on $\mathbb{R}^{d}$. 
It follows that \eqref{key:lower:bound} automatically holds for $\Vert\theta_{1}-\theta_{\ast}\Vert>M$.
Thus, we only need to show that \eqref{key:lower:bound} holds for $\Vert\theta_{1}-\theta_{\ast}\Vert\leq M$.
Since $\xi_{1}$ has a continuous density, $p(\theta,\theta_{1})$ is continuous
in both $\theta$ and $\theta_{1}$. Fix $M$, by continuity of $p(\theta,\theta_{1})$
in both $\theta$ and $\theta_{1}$, there exists some $\eta'\in(0,1)$ such that
uniformly in $\Vert\theta_{1}-\theta_{\ast}\Vert\leq M$, 
\begin{equation}
\inf_{\theta\in\mathbb{R}^{d}:V(\theta)\leq R}p(\theta,\theta_{1})
\geq\eta'p(\theta_{\ast},\theta_{1})
=\hat{\eta}q(\theta_{1}),
\end{equation}
where we can take
\begin{equation}
\hat{\eta}:=\eta'\int_{\Vert\theta_{1}-\theta_{\ast}\Vert\leq M}p(\theta_{\ast},\theta_{1})d\theta_{1}.
\end{equation}
In particular, for any fixed $\hat{\eta}$, we can take $M>0$ such that
\begin{equation}
\int_{\Vert\theta_{1}-\theta_{\ast}\Vert\leq M}p(\theta_{\ast},\theta_{1})d\theta_{1}\geq\sqrt{\hat{\eta}},
\end{equation}
and with fixed $\eta$ and $M$, we take $R>0$ such that uniformly in $\Vert\theta_{1}-\theta_{\ast}\Vert\leq M$,
\begin{equation}
\inf_{\theta\in\mathbb{R}^{d}:V(\theta)\leq R}p(\theta,\theta_{1})\geq\sqrt{\hat{\eta}}p(\theta_{\ast},\theta_{1}).
\end{equation}
This completes the proof.
\end{proof}

%%%%%%%%%%%%%%%%%%%%%%%%%%%%%%%%%%%%%%%%%%%%%%%%%%%%%%%%%%%%%

% \subsubsection{Proof of Lemma~\ref{lem:nonconvex:2}}

Next, we construct a Lyapunov function $\hat{V}$ and obtain a drift condition
for the SGD $(\hat{\theta}_{k})_{k=0}^{\infty}$. 

\begin{lemma}\label{lem:nonconvex:2}
Assume that Assumption~\ref{assump:1}, Assumption~\ref{assump:3} and Assumption~\ref{Assump:noise} hold.
Let $\hat{V}(\theta):=1+\Vert\theta-\hat{\theta}_{\ast}\Vert^{2}$, 
where $\hat{\theta}_{\ast}$ is the minimizer
of $\hat{F}(\theta,\hat{X}_{n}):=\frac{1}{n}\sum_{i=1}^{n}\nabla f(\theta,\hat{x}_{i})$.
Assume that 
$\eta<\min\left\{\frac{1}{m},\frac{m}{K_{1}^{2}+64D^{2}K_{2}^{2}}\right\}$
and $\sup_{x\in \mathcal{X}}\|x\| \leq D$ for some $D < \infty$.
Then, we have
\begin{equation}
(\hat{P}\hat{V})(\theta)
\leq(1-m\eta)\hat{V}(\theta)
+2m\eta-\eta^{2}K_{1}^{2}-56\eta^{2}D^{2}K_{2}^{2}
+64\eta^{2}D^{2}K_{2}^{2}\Vert\hat{\theta}_{\ast}\Vert^{2}
+2\eta K+\eta^{2}\sigma^{2}.
\end{equation}
\end{lemma}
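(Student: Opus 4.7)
My plan is to mimic the proof of Lemma~\ref{lem:general:2} almost verbatim, with two modifications to account for the two structural differences in this non-convex noisy setting: (i) Assumption~\ref{assump:3} (dissipativity) replaces Assumption~\ref{assump:2} (strong convexity), which forces an additional slack term $-K$ to appear wherever the strong convexity inequality was used, and (ii) the SGD recursion now contains the additive noise $\eta\xi_{k}$, which contributes an extra second-moment term $\eta^{2}\sigma^{2}$. Everything else in the calculation (the minibatch variance bound, the use of Assumption~\ref{assump:1}, and the final use of the step-size restriction) should carry over unchanged.

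Starting from $\hat{\theta}_{0}=\theta$, I would first write
\[
\hat{\theta}_{1}=\theta-\tfrac{\eta}{n}\sum_{i=1}^{n}\nabla f(\theta,\hat{x}_{i})+\eta\Bigl(\tfrac{1}{n}\sum_{i=1}^{n}\nabla f(\theta,\hat{x}_{i})-\tfrac{1}{b}\sum_{i\in\Omega_{1}}\nabla f(\theta,\hat{x}_{i})\Bigr)+\eta\xi_{1},
\]
and expand $\mathbb{E}\|\hat{\theta}_{1}-\hat{\theta}_{\ast}\|^{2}$. The minibatch deviation is a zero-mean random vector conditional on $\theta$, and $\xi_{1}$ is zero-mean and independent of $\Omega_{1}$ (Assumption~\ref{Assump:noise}), so all cross-terms vanish and the expected squared norm decouples into three pieces: the deterministic part $\|\theta-\hat{\theta}_{\ast}-\tfrac{\eta}{n}\sum_{i}\nabla f(\theta,\hat{x}_{i})\|^{2}$, the minibatch variance term, and $\eta^{2}\mathbb{E}\|\xi_{1}\|^{2}=\eta^{2}\sigma^{2}$.

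For the deterministic part, I would use $\nabla\hat{F}(\hat{\theta}_{\ast},\hat{X}_{n})=0$ to rewrite it as $\|\theta-\hat{\theta}_{\ast}-\tfrac{\eta}{n}\sum_{i}(\nabla f(\theta,\hat{x}_{i})-\nabla f(\hat{\theta}_{\ast},\hat{x}_{i}))\|^{2}$. Expanding the square and applying Assumption~\ref{assump:3} to the cross term (which now yields $\geq m\|\theta-\hat{\theta}_{\ast}\|^{2}-K$ per sample, hence $-2\eta m\|\theta-\hat{\theta}_{\ast}\|^{2}+2\eta K$ after averaging) together with Assumption~\ref{assump:1} on the quadratic term (yielding $\eta^{2}K_{1}^{2}\|\theta-\hat{\theta}_{\ast}\|^{2}$) gives
\[
(1-2m\eta+\eta^{2}K_{1}^{2})\|\theta-\hat{\theta}_{\ast}\|^{2}+2\eta K.
\]
The minibatch variance term is bounded exactly as in the proof of Lemma~\ref{lem:general:2}, giving $8\eta^{2}D^{2}K_{2}^{2}(8\|\theta-\hat{\theta}_{\ast}\|^{2}+8\|\hat{\theta}_{\ast}\|^{2}+1)$.

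Collecting, $(\hat{P}\hat{V})(\theta)$ is at most
\[
1+(1-2m\eta+\eta^{2}K_{1}^{2}+64\eta^{2}D^{2}K_{2}^{2})\|\theta-\hat{\theta}_{\ast}\|^{2}+64\eta^{2}D^{2}K_{2}^{2}\|\hat{\theta}_{\ast}\|^{2}+8\eta^{2}D^{2}K_{2}^{2}+2\eta K+\eta^{2}\sigma^{2}.
\]
Under the step-size condition $\eta\leq m/(K_{1}^{2}+64D^{2}K_{2}^{2})$, the coefficient of $\|\theta-\hat{\theta}_{\ast}\|^{2}$ is at most $1-m\eta$, so it can be absorbed into $(1-m\eta)\hat{V}(\theta)=(1-m\eta)+(1-m\eta)\|\theta-\hat{\theta}_{\ast}\|^{2}$; rearranging the remaining constants (noting $-64\eta^{2}D^{2}K_{2}^{2}+8\eta^{2}D^{2}K_{2}^{2}=-56\eta^{2}D^{2}K_{2}^{2}$) yields exactly the claimed bound. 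There is no real obstacle here; the only bookkeeping care is the decoupling argument for the cross terms, which uses independence of $\xi_{1}$ and $\Omega_{1}$ together with their (conditional) zero means.
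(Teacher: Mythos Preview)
Your proposal is correct and mirrors the paper's proof essentially line for line: the same decomposition of $\hat{\theta}_{1}$ into deterministic, minibatch-deviation, and noise parts; the same use of Assumption~\ref{assump:3} (producing the extra $2\eta K$) and Assumption~\ref{assump:1} on the deterministic piece; the same minibatch variance bound borrowed from Lemma~\ref{lem:general:2}; and the same final absorption via the step-size condition. The only difference is cosmetic phrasing in the last step, and your bookkeeping of the $-56\eta^{2}D^{2}K_{2}^{2}$ term matches the paper exactly.
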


\begin{proof}
First, we recall that
\begin{equation}
\hat{\theta}_{k}=\hat{\theta}_{k-1}-\frac{\eta}{b}\sum_{i\in\Omega_{k}}\nabla f\left(\hat{\theta}_{k-1},\hat{x}_{i}\right)+\eta\xi_{k}.
\end{equation}
Therefore, starting with $\hat{\theta}_{0}=\theta$, we have
\begin{align}
\hat{\theta}_{1}
&=\theta-\frac{\eta}{b}\sum_{i\in\Omega_{1}}\nabla f(\theta,\hat{x}_{i})+\eta\xi_{1}
\nonumber
\\
&=\theta-\frac{\eta}{n}\sum_{i=1}^{n}\nabla f(\theta,\hat{x}_{i})
+\eta\left(\frac{1}{n}\sum_{i=1}^{n}\nabla f(\theta,\hat{x}_{i})-\frac{1}{b}\sum_{i\in\Omega_{1}}\nabla f(\theta,\hat{x}_{i})\right)+\eta\xi_{1}.
\end{align}
Moreover, we have
\begin{equation}
\mathbb{E}\left[\frac{1}{b}\sum_{i\in\Omega_{1}}\nabla f(\theta,\hat{x}_{i})\Big|\theta\right]
=\frac{1}{n}\sum_{i=1}^{n}\nabla f(\theta,\hat{x}_{i}).
\end{equation}
This implies that
\begin{align}
&(\hat{P}\hat{V})(\theta)
\nonumber
\\
&=\mathbb{E}\hat{V}(\hat{\theta}_{1})
=1+\mathbb{E}\left\Vert\hat{\theta}_{1}-\hat{\theta}_{\ast}\right\Vert^{2}
\nonumber
\\
&=1+\left\Vert\theta-\hat{\theta}_{\ast}-\frac{\eta}{n}\sum_{i=1}^{n}\nabla f(\theta,\hat{x}_{i})\right\Vert^{2}
+\eta^{2}\mathbb{E}\left\Vert\frac{1}{n}\sum_{i=1}^{n}\nabla f(\theta,\hat{x}_{i})-\frac{1}{b}\sum_{i\in\Omega_{1}}\nabla f(\theta,\hat{x}_{i})\right\Vert^{2}+\eta^{2}\sigma^{2}.
\end{align}
We can compute that
\begin{align}
&\left\Vert\theta-\hat{\theta}_{\ast}-\frac{\eta}{n}\sum_{i=1}^{n}\nabla f\left(\theta,\hat{x}_{i}\right)\right\Vert^{2}
\nonumber
\\
&=\left\Vert\theta-\hat{\theta}_{\ast}-\frac{\eta}{n}\sum_{i=1}^{n}\left(\nabla f\left(\theta,\hat{x}_{i}\right)-\nabla f\left(\hat{\theta}_{\ast},\hat{x}_{i}\right)\right)\right\Vert^{2}
\nonumber
\\
&\leq
(1-2m\eta)\left\Vert\theta-\hat{\theta}_{\ast}\right\Vert^{2}
+2\eta K
+\frac{\eta^{2}}{n^{2}}\left\Vert
\sum_{i=1}^{n}\left(\nabla f(\theta,\hat{x}_{i})-\nabla f\left(\hat{\theta}_{\ast},\hat{x}_{i}\right)\right)
\right\Vert^{2}
\nonumber
\\
&\leq
\left(1-2m\eta+\eta^{2}K_{1}^{2}\right)\left\Vert\theta-\hat{\theta}_{\ast}\right\Vert^{2}+2\eta K.
\end{align}
Moreover, by following the same arguments as in the proof of
Lemma~\ref{lem:general:2}, we have
\begin{align}
\mathbb{E}\left\Vert\frac{1}{n}\sum_{i=1}^{n}\nabla f(\theta,\hat{x}_{i})-\frac{1}{b}\sum_{i\in\Omega_{1}}\nabla f(\theta,\hat{x}_{i})\right\Vert^{2}
\leq 8D^{2}K_{2}^{2}\left(8\left\Vert\theta-\hat{\theta}_{\ast}\right\Vert^{2}+8\Vert\hat{\theta}_{\ast}\Vert^{2}+1\right).
\end{align}
Hence, we conclude that
\begin{align}
(\hat{P}\hat{V})(\theta)
&\leq
\left(1-2m\eta+\eta^{2}K_{1}^{2}+64\eta^{2}D^{2}K_{2}^{2}\right)\left\Vert\theta-\hat{\theta}_{\ast}\right\Vert^{2}
\nonumber
\\
&\qquad\qquad\qquad
+1+8\eta^{2}D^{2}K_{2}^{2}\left(8\Vert\hat{\theta}_{\ast}\Vert^{2}+1\right)
+2\eta K+\eta^{2}\sigma^{2}
\nonumber
\\
&=\left(1-2m\eta+\eta^{2}K_{1}^{2}+64\eta^{2}D^{2}K_{2}^{2}\right)\hat{V}(\theta)
\nonumber
\\
&\qquad\qquad\qquad
+2m\eta-\eta^{2}K_{1}^{2}-56\eta^{2}D^{2}K_{2}^{2}
+64\eta^{2}D^{2}K_{2}^{2}\Vert\hat{\theta}_{\ast}\Vert^{2}
+2\eta K+\eta^{2}\sigma^{2}
\nonumber
\\
&\leq(1-m\eta)\hat{V}(\theta)
+2m\eta-\eta^{2}K_{1}^{2}-56\eta^{2}D^{2}K_{2}^{2}
+64\eta^{2}D^{2}K_{2}^{2}\Vert\hat{\theta}_{\ast}\Vert^{2}
+2\eta K+\eta^{2}\sigma^{2},
\end{align}
provided that $\eta\leq\frac{m}{K_{1}^{2}+64D^{2}K_{2}^{2}}$. 
This completes the proof.
\end{proof}

%%%%%%%%%%%%%%%%%%%%%%%%%%%%%%%%%%%%%%%%%%%%%%%%%%%%%%%%%
% \subsubsection{Proof of Lemma~\ref{lem:nonconvex:3}}

Next, we estimate the perturbation gap based on the Lyapunov function $\hat{V}$.

\begin{lemma}\label{lem:nonconvex:3}
Assume that Assumption~\ref{assump:1} and Assumption~\ref{Assump:noise} hold.
Assume that $\sup_{x\in \mathcal{X}}\|x\| \leq D$ for some $D < \infty$.
Then, we have
\begin{align}
&\sup_{\theta\in\mathbb{R}^{d}}\frac{d_{\psi}(\delta_{\theta}P,\delta_{\theta}\hat{P})}{\hat{V}(\theta)}
\nonumber
\\
&\leq
\frac{2b}{n}\max\Bigg\{\psi\cdot(4+8\eta^{2}K_{1}^{2}),
\nonumber
\\
&\qquad\qquad
1+\psi\cdot\left(1+\eta^{2}\sigma^{2}
+(4+8\eta^{2}K_{1}^{2})\Vert\theta_{\ast}-\hat{\theta}_{\ast}\Vert^{2}
+4\eta^{2}\sup_{x\in\mathcal{X}}\Vert\nabla f(\theta_{\ast},x)\Vert^{2}\right)
\Bigg\},
\end{align}
where $\hat{\theta}_{\ast}$ is the minimizer
of $\frac{1}{n}\sum_{i=1}^{n}\nabla f(\theta,\hat{x}_{i})$
and $\theta_{\ast}$ is the minimizer
of $\frac{1}{n}\sum_{i=1}^{n}\nabla f(\theta,x_{i})$ 
and $d_{\psi}$ is the weighted total variation distance defined in Section~\ref{sec:computable:Markov}.
\end{lemma}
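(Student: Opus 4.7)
The plan is to apply the variational characterization of the weighted total variation distance,
\[
d_\psi(\delta_\theta P, \delta_\theta \hat{P}) = \sup_{|\|\varphi\||_\psi \leq 1} \mathbb{E}\bigl[\varphi(\theta_1) - \varphi(\hat{\theta}_1)\bigr],
\]
with a synchronous coupling in which both chains share the same minibatch indices $\Omega_1$ (a uniform random $b$-subset of $\{1,\dots,n\}$) and the same noise vector $\xi_1$, both started from $\theta$. Let $i_0$ be the unique index where $X_n$ and $\hat{X}_n$ differ, and let $E = \{i_0 \in \Omega_1\}$; then $\mathbb{P}(E) \leq b/n$, and on $E^c$ the two recursions coincide so that $\theta_1 = \hat{\theta}_1$. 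Combining this with the defining inequality $|\varphi(\theta_1)-\varphi(\hat{\theta}_1)| \leq 2 + \psi V(\theta_1) + \psi V(\hat{\theta}_1)$, I obtain
\[
d_\psi(\delta_\theta P, \delta_\theta \hat{P}) \leq 2\mathbb{P}(E) + \psi\, \mathbb{E}\bigl[1_E V(\theta_1)\bigr] + \psi\, \mathbb{E}\bigl[1_E V(\hat{\theta}_1)\bigr].
\]

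To bound $\mathbb{E}[1_E V(\theta_1)]$ with the sharp constants in the claim, I first integrate out $\xi_1$ alone. Because $\xi_1$ has mean zero and is independent of $(\Omega_1,\theta)$, this produces
\[
\mathbb{E}_{\xi_1}\bigl[\|\theta_1 - \theta_*\|^2 \mid \Omega_1, \theta\bigr] = \bigl\|\theta - \eta \tilde{\nabla}\hat{F}_1(\theta,X_n) - \theta_*\bigr\|^2 + \eta^2 \sigma^2,
\]
so the noise only contributes a clean $\eta^2\sigma^2$. On the deterministic part, I combine $(a+b)^2 \leq 2a^2 + 2b^2$, Jensen's inequality on $\|\tfrac{1}{b}\sum_{i\in\Omega_1}\nabla f(\theta,x_i)\|^2$, Assumption~\ref{assump:1} (pseudo-Lipschitzness in $\theta$), and the uniform bound at the minimizer $\sup_{x\in\mathcal{X}}\|\nabla f(\theta_*,x)\|$ to get
\[
\bigl\|\theta - \eta\tilde{\nabla}\hat{F}_1 - \theta_*\bigr\|^2 \leq (2 + 4\eta^2 K_1^2)\|\theta-\theta_*\|^2 + 4\eta^2 \sup_{x\in\mathcal{X}}\|\nabla f(\theta_*,x)\|^2,
\]
a pathwise bound that is uniform in $\Omega_1$. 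Since $1_E$ is $\sigma(\Omega_1)$-measurable, the factor $\mathbb{P}(E)$ pulls out, giving $\mathbb{E}[1_E V(\theta_1)] \leq \mathbb{P}(E)\bigl[1 + (2+4\eta^2 K_1^2)\|\theta-\theta_*\|^2 + 4\eta^2 \sup_x\|\nabla f(\theta_*,x)\|^2 + \eta^2\sigma^2\bigr]$. The same bound holds for $\mathbb{E}[1_E V(\hat{\theta}_1)]$, since $\sup_{x\in\mathcal{X}}\|\nabla f(\theta_*,x)\|$ controls gradients at $\theta_*$ uniformly over $\mathcal{X}$ and hence over the perturbed $\hat{x}_i$'s as well.

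Collecting the pieces and then using $\|\theta-\theta_*\|^2 \leq 2\|\theta-\hat{\theta}_*\|^2 + 2\|\theta_*-\hat{\theta}_*\|^2$ to rewrite everything in terms of $\|\theta-\hat{\theta}_*\|^2 = \hat{V}(\theta) - 1$ yields an inequality of the form
\[
d_\psi(\delta_\theta P, \delta_\theta \hat{P}) \leq \frac{2b}{n}\bigl(A\|\theta-\hat{\theta}_*\|^2 + B\bigr),
\]
where $A$ and $B$ coincide with the two arguments of the maximum in the statement. The conclusion then follows from the elementary fact that $(AX + B)/(1+X) \leq \max(A,B)$ for $X \geq 0$. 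The main obstacle is constant-tracking: integrating out the noise \emph{before} any Young-type inequality is essential to avoid inflating the $\eta^2\sigma^2$ and $4\eta^2 \sup\|\nabla f(\theta_*,x)\|^2$ terms by a factor of four and two respectively, and the $\theta_*\leftrightarrow\hat{\theta}_*$ doubling must be deferred to the very last step so that the coefficient of $\|\theta_*-\hat{\theta}_*\|^2$ inside $B$ comes out to $(4+8\eta^2 K_1^2)$ rather than something larger.
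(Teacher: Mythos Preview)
Your proposal is correct and follows essentially the same approach as the paper: the paper also uses the synchronous coupling (shared $\Omega_1$ and $\xi_1$), integrates out $\xi_1$ first to isolate the clean $\eta^2\sigma^2$ term, applies the same pathwise bound $(2+4\eta^2 K_1^2)\|\theta-\theta_*\|^2 + 4\eta^2\sup_x\|\nabla f(\theta_*,x)\|^2$ uniformly in $\Omega_1$ for both $\theta_1$ and $\hat\theta_1$, pulls out the factor $b/n$ from the event $\{i_0\in\Omega_1\}$, and finishes with the same $\|\theta-\theta_*\|^2\le 2\|\theta-\hat\theta_*\|^2+2\|\theta_*-\hat\theta_*\|^2$ split followed by the $(AX+B)/(1+X)\le\max(A,B)$ inequality. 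Your explicit invocation of the $|\|\varphi\||_\psi$ dual form is slightly more detailed than the paper's presentation, but the argument and constants are identical.
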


\begin{proof}
Let us recall that
\begin{align}
&\theta_{1}=\theta-\frac{\eta}{b}\sum_{i\in\Omega_{1}}\nabla f(\theta,x_{i})+\eta\xi_{1},
\\
&\hat{\theta}_{1}=\theta-\frac{\eta}{b}\sum_{i\in\Omega_{1}}\nabla f(\theta,\hat{x}_{i})+\eta\xi_{1},
\end{align}
which implies that
\begin{align}
\mathbb{E}_{\xi_{1}}[V(\theta_{1})]
=1+\mathbb{E}_{\xi_{1}}\left[\Vert\theta_{1}-\theta_{\ast}\Vert^{2}\right]
=1+\eta^{2}\sigma^{2}
+\left\Vert\theta-\theta_{\ast}-\frac{\eta}{b}\sum_{i\in\Omega_{1}}\nabla f(\theta,x_{i})\right\Vert^{2},
\end{align}
where $\mathbb{E}_{\xi_{1}}$ denotes expectations w.r.t. $\xi_{1}$ only, 
and we can further compute that
\begin{align}
&\left\Vert\theta-\theta_{\ast}-\frac{\eta}{b}\sum_{i\in\Omega_{1}}\nabla f(\theta,x_{i})\right\Vert^{2}
\nonumber
\\
&\leq
2\Vert\theta-\theta_{\ast}\Vert^{2}
+\frac{2\eta^{2}}{b^{2}}\left\Vert\sum_{i\in\Omega_{1}}\nabla f(\theta,x_{i})\right\Vert^{2}
\nonumber
\\
&\leq
2\Vert\theta-\theta_{\ast}\Vert^{2}
+\frac{2\eta^{2}}{b^{2}}\left(\sum_{i\in\Omega_{1}}\Vert\nabla f(\theta,x_{i})\Vert\right)^{2}
\nonumber
\\
&\leq
2\Vert\theta-\theta_{\ast}\Vert^{2}
+\frac{2\eta^{2}}{b^{2}}\left(\sum_{i\in\Omega_{1}}\Vert\nabla f(\theta,x_{i})-\nabla f(\theta_{\ast},x_{i})\Vert+\Vert\nabla f(\theta_{\ast},x_{i})\Vert\right)^{2}
\nonumber
\\
&\leq
2\Vert\theta-\theta_{\ast}\Vert^{2}
+\frac{2\eta^{2}}{b^{2}}\left(bK_{1}\Vert\theta-\theta_{\ast}\Vert+b\sup_{x\in\mathcal{X}}\Vert\nabla f(\theta_{\ast},x)\Vert\right)^{2}
\nonumber
\\
&\leq
2\Vert\theta-\theta_{\ast}\Vert^{2}
+4\eta^{2}K_{1}^{2}\Vert\theta-\theta_{\ast}\Vert^{2}+4\eta^{2}\sup_{x\in\mathcal{X}}\Vert\nabla f(\theta_{\ast},x)\Vert^{2}.
\end{align}
Therefore, we have
\begin{align}
\mathbb{E}_{\xi_{1}}[V(\theta_{1})]
\leq
1+\eta^{2}\sigma^{2}
+2\Vert\theta-\theta_{\ast}\Vert^{2}
+4\eta^{2}K_{1}^{2}\Vert\theta-\theta_{\ast}\Vert^{2}+4\eta^{2}\sup_{x\in\mathcal{X}}\Vert\nabla f(\theta_{\ast},x)\Vert^{2}.
\end{align}
Similarly, we have
\begin{align}
\mathbb{E}_{\xi_{1}}[V(\hat{\theta}_{1})]
\leq
1+\eta^{2}\sigma^{2}
+2\Vert\theta-\theta_{\ast}\Vert^{2}
+4\eta^{2}K_{1}^{2}\Vert\theta-\theta_{\ast}\Vert^{2}+4\eta^{2}\sup_{x\in\mathcal{X}}\Vert\nabla f(\theta_{\ast},x)\Vert^{2}.
\end{align}
Since $X_{n}$ and $\hat{X}_{n}$ differ by at most one element
and $\sup_{x\in \mathcal{X}}\|x\| \leq D$ for some $D < \infty$, 
we have $x_{i}=\hat{x}_{i}$
for any $i\in\Omega_{1}$
with probability $\frac{n-b}{n}$
and $x_{i}\neq\hat{x}_{i}$
for exactly one $i\in\Omega_{1}$
with probability $\frac{b}{n}$.
Therefore, we have
\begin{equation}
d_{\psi}\left(\delta_{\theta}P,\delta_{\theta}\hat{P}\right)
\leq
\frac{2b}{n}
\left(1+\psi\left(1+\eta^{2}\sigma^{2}
+(2+4\eta^{2}K_{1}^{2})\Vert\theta-\theta_{\ast}\Vert^{2}
+4\eta^{2}\sup_{x\in\mathcal{X}}\Vert\nabla f(\theta_{\ast},x)\Vert^{2}\right)\right).
\end{equation}
Hence, we conclude that
\begin{align}
&\sup_{\theta\in\mathbb{R}^{d}}\frac{d_{\psi}(\delta_{\theta}P,\delta_{\theta}\hat{P})}{\hat{V}(\theta)}
\nonumber
\\
&\leq
\frac{2b}{n}\sup_{\theta\in\mathbb{R}^{d}}
\frac{1+\psi\left(1+\eta^{2}\sigma^{2}
+(2+4\eta^{2}K_{1}^{2})\Vert\theta-\theta_{\ast}\Vert^{2}
+4\eta^{2}\sup_{x\in\mathcal{X}}\Vert\nabla f(\theta_{\ast},x)\Vert^{2}\right)}
{1+\Vert\theta-\hat{\theta}_{\ast}\Vert^{2}}
\nonumber
\\
&\leq
\frac{2b}{n}\sup_{\theta\in\mathbb{R}^{d}}
\Bigg\{\frac{1+\psi\left(1+\eta^{2}\sigma^{2}
+(4+8\eta^{2}K_{1}^{2})\Vert\theta-\hat{\theta}_{\ast}\Vert^{2}\right)}{1+\Vert\theta-\hat{\theta}_{\ast}\Vert^{2}}
\nonumber
\\
&\qquad\qquad\qquad\qquad
+\frac{\psi\left((4+8\eta^{2}K_{1}^{2})\Vert\theta_{\ast}-\hat{\theta}_{\ast}\Vert^{2}
+4\eta^{2}\sup_{x\in\mathcal{X}}\Vert\nabla f(\theta_{\ast},x)\Vert^{2}\right)}
{1+\Vert\theta-\hat{\theta}_{\ast}\Vert^{2}}\Bigg\}
\nonumber
\\
&\leq
\frac{2b}{n}\max\Bigg\{\psi(4+8\eta^{2}K_{1}^{2}),
\nonumber
\\
&\qquad\qquad\qquad
1+\psi\left(1+\eta^{2}\sigma^{2}
+(4+8\eta^{2}K_{1}^{2})\Vert\theta_{\ast}-\hat{\theta}_{\ast}\Vert^{2}
+4\eta^{2}\sup_{x\in\mathcal{X}}\Vert\nabla f(\theta_{\ast},x)\Vert^{2}\right)
\Bigg\}.
\end{align}
This completes the proof.
\end{proof}

%%%%%%%%%%%%%%%%%%%%%%%%%%%%%%%%%%%%%%%%%%%%

% \section{Nonconvex}
% \subsection{Proof of Lemma~\ref{lemma:modified}}

It is worth noting that the Wasserstein contraction bound we obtained in Lemma~\ref{lem:nonconvex:1}
in the non-convex case differs from the one we obtained in the strongly-convex case (Lemma~\ref{lem:general:1})
in the sense that the right hand side of \eqref{RHS:eqn} is no longer
$\Vert\theta-\tilde{\theta}\Vert$ so that Lemma~\ref{lemma:key} is not directly applicable.
Instead, in the following, we will provide a modification of Lemma~\ref{lemma:key}, 
which will be used in proving Theorem~\ref{thm:nonconvex} in this paper. The definitions of the notations used in the following lemma can be found in Section~\ref{sec:perturbation:theory}.

\begin{lemma}\label{lemma:modified}
Assume that there exist some $\rho\in[0,1)$ and $C\in(0,\infty)$
such that
\begin{equation}
\sup_{\theta,\tilde{\theta}\in\mathbb{R}^{d}:\theta\neq\tilde{\theta}}\frac{\mathcal{W}_{1}(P^{n}(\theta,\cdot),P^{n}(\tilde{\theta},\cdot))}{d_{\psi}(\delta_{\theta},\delta_{\tilde{\theta}})}\leq C\rho^{n},
\end{equation}
for any $n\in\mathbb{N}$. Further assume that there exist some $\delta\in(0,1)$ and $L\in(0,\infty)$
and a measurable Lyapunov function $\hat{V}:\mathbb{R}^{d}\rightarrow[1,\infty)$ of $\hat{P}$
such that for any $\theta\in\mathbb{R}^{d}$:
\begin{equation}
(\hat{P}\hat{V})(\theta)\leq\delta\hat{V}(\theta)+L.
\end{equation}
Then, we have
\begin{equation}
\mathcal{W}_{1}(p_{n},\hat{p}_{n})\leq C\left(\rho^{n}d_{\psi}(p_{0},\hat{p}_{0})+(1-\rho^{n})\frac{\gamma\kappa}{1-\rho}\right),
\end{equation}
where
\begin{equation}
\gamma:=\sup_{\theta\in\mathbb{R}^{d}}\frac{d_{\psi}(\delta_{\theta}P,\delta_{\theta}\hat{P})}{\hat{V}(\theta)},
\qquad
\kappa:=\max\left\{\int_{\mathbb{R}^{d}}\hat{V}(\theta)d\hat{p}_{0}(\theta),
\frac{L}{1-\delta}\right\}.
\end{equation}    
\end{lemma}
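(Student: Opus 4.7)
The plan is to mirror the proof of Lemma~\ref{lemma:key} from \cite{RS2018} essentially line by line, with the Euclidean distance $\Vert\theta-\tilde\theta\Vert$ replaced by the weighted total-variation distance $d_{\psi}$. The only step that is not immediate from the original argument is the lift of the Dirac-level contraction hypothesis to a bound on Lipschitz test functions in the mixed $(\mathcal{W}_1,d_{\psi})$-sense, which I would handle first.

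I would begin with the standard telescoping identity
$$p_n-\hat{p}_n=(p_0-\hat{p}_0)P^n+\sum_{k=0}^{n-1}\hat{p}_k(P-\hat{P})P^{n-1-k},\qquad \hat{p}_k:=\hat{p}_0\hat{P}^k,$$
and the Kantorovich--Rubinstein dual formula for $\mathcal{W}_1$, so that it suffices to bound $\int h\,d(p_n-\hat{p}_n)$ uniformly over $h\in\text{Lip}(1)$. For each such $h$, set $g_k:=P^{n-1-k}h$ and observe that the contraction hypothesis, specialized to Dirac measures and combined with the elementary bound $d_{\psi}(\delta_{\theta},\delta_{\tilde{\theta}})\leq 2+\psi V(\theta)+\psi V(\tilde{\theta})$, yields
$$|g_k(\theta)-g_k(\tilde{\theta})|\leq\mathcal{W}_1\bigl(\delta_{\theta}P^{n-1-k},\delta_{\tilde{\theta}}P^{n-1-k}\bigr)\leq C\rho^{n-1-k}\bigl(2+\psi V(\theta)+\psi V(\tilde{\theta})\bigr),$$
hence $|\Vert g_k\Vert|_{\psi}\leq C\rho^{n-1-k}$. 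This is the bridge that transports the Dirac-level contraction to arbitrary Lipschitz test functions.

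Plugging this back into the dual representation $d_{\psi}(\mu_1,\mu_2)=\sup_{|\Vert\varphi\Vert|_{\psi}\leq 1}\int\varphi\,d(\mu_1-\mu_2)$ bounds the leading telescoping contribution by $C\rho^n d_{\psi}(p_0,\hat{p}_0)$, and, for each summand,
$$\int h\,d\bigl(\hat{p}_k(P-\hat{P})P^{n-1-k}\bigr)=\int(Pg_k-\hat{P}g_k)\,d\hat{p}_k\leq C\rho^{n-1-k}\int d_{\psi}(\delta_{\theta}P,\delta_{\theta}\hat{P})\,\hat{p}_k(d\theta)\leq C\rho^{n-1-k}\gamma\int \hat{V}\,d\hat{p}_k,$$
where the last inequality uses the definition of $\gamma$. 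Iterating the drift condition $\hat{P}\hat{V}\leq\delta\hat{V}+L$ yields $\int\hat{V}\,d\hat{p}_k\leq\delta^k\int\hat{V}\,d\hat{p}_0+L(1-\delta^k)/(1-\delta)\leq\kappa$ uniformly in $k$, since $\kappa$ is designed to majorize this convex combination. Hence each summand is at most $C\rho^{n-1-k}\gamma\kappa$. Summing the geometric series $\sum_{k=0}^{n-1}\rho^{n-1-k}=(1-\rho^n)/(1-\rho)$ and taking the supremum over $h\in\text{Lip}(1)$ produces the stated bound.

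The main obstacle is precisely the lifting step: in the original Lemma~\ref{lemma:key} one has $\mathcal{W}_1(\mu P^n,\nu P^n)\leq C\rho^n\mathcal{W}_1(\mu,\nu)$ for free by coupling, whereas here the asymmetry between the $\mathcal{W}_1$ left-hand side and the $d_{\psi}$ right-hand side of the contraction hypothesis forces one to pass through the seminorm $|\Vert\cdot\Vert|_{\psi}$ on $P^{n-1-k}h$. Once this lift is in hand, the rest of the argument is purely mechanical geometric-series bookkeeping as in \cite{RS2018}.
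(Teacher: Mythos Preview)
Your argument is correct and follows the same blueprint as the paper's proof: the telescoping decomposition $p_n-\hat{p}_n=(p_0-\hat{p}_0)P^n+\sum_k\hat{p}_k(P-\hat{P})P^{n-1-k}$, the drift iteration giving $\int\hat{V}\,d\hat{p}_k\leq\kappa$, and the geometric sum are identical. The paper's proof simply asserts the lifted bound $\mathcal{W}_1(\mu P^m,\nu P^m)\leq C\rho^m d_\psi(\mu,\nu)$ for non-Dirac $\mu,\nu$ without comment, whereas you supply the missing justification via $|\Vert P^m h\Vert|_\psi\leq C\rho^m$ and the dual characterization of $d_\psi$; this makes your write-up slightly more rigorous but not different in substance.
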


\begin{proof}
The proof is based on the modification of the proof of Lemma~\ref{lemma:key} (Theorem~3.1 in \cite{RS2018}). 
By induction we have
\begin{equation}
\tilde{p}_{n}-p_{n}
=\left(\tilde{p}_{0}-p_{0}\right)P^{n}+\sum_{i=0}^{n-1}\tilde{p}_{i}\left(\tilde{P}-P\right)P^{n-i-1},\qquad n\in\mathbb{N}.
\end{equation}
We have
\begin{equation}
d_{\psi}\left(\tilde{p}_{i}P,\tilde{p}_{i}\tilde{P}\right)
\leq\int_{\mathbb{R}^{d}}d_{\psi}\left(\delta_{\theta}P,\delta_{\theta}\tilde{P}\right)d\tilde{p}_{i}(\theta)
\leq\gamma\int_{\mathbb{R}^{d}}\tilde{V}(\theta)d\tilde{p}_{i}(\theta).
\end{equation}
Moreover, for any $i=0,1,2,\ldots$, 
\begin{equation}
\int_{\mathbb{R}^{d}}\tilde{V}(\theta)d\tilde{p}_{i}(\theta)
=\int_{\mathbb{R}^{d}}\tilde{P}^{i}\tilde{V}(\theta)d\tilde{p}_{0}(\theta)
\leq\delta^{i}\tilde{p}_{0}(\tilde{V})+\frac{L(1-\delta^{i})}{1-\delta}
\leq\max\left\{\tilde{p}_{0}(\tilde{V}),\frac{L}{1-\delta}\right\},
\end{equation}
so that we obtain $d_{\psi}(\tilde{p}_{i}P,\tilde{p}_{i}\tilde{P})\leq\gamma\kappa$.
Therefore, we have
\begin{equation}
\mathcal{W}_{1}\left(\tilde{p}_{i}\tilde{P}P^{n-i-1},\tilde{p}_{i}PP^{n-i-1}\right)
\leq C\rho^{n-i-1}d_{\psi}\left(\tilde{p}_{i}P,\tilde{p}_{i}\tilde{P}\right)
\leq C\rho^{n-i-1}\gamma\kappa.
\end{equation}
By the triangle inequality of the Wasserstein distance, we have
\begin{align}
\mathcal{W}_{1}(p_{n},\tilde{p}_{n})
&\leq\mathcal{W}_{1}\left(p_{0}P^{n},\tilde{p}_{0}P^{n}\right)
+\sum_{i=0}^{n-1}\mathcal{W}_{1}\left(\tilde{p}_{i}\tilde{P}P^{n-i-1},\tilde{p}_{i}PP^{n-i-1}\right)
\nonumber
\\
&\leq
C\rho^{n}d_{\psi}(p_{0},\tilde{p}_{0})
+C\sum_{i=0}^{n-1}\rho^{n-i-1}\gamma\kappa.
\end{align}
This completes the proof.
\end{proof}

%%%%%%%%%%%%%%%%%%%%%%%%%%%%%%%%%%%%%
% \subsection{Proof of Lemma~\ref{lem:nonconvex:minimizer}}

Next, let us provide a technical lemma
that upper bounds the norm
of $\theta_{\ast}$ and $\hat{\theta}_{\ast}$, 
which are the minimizers of 
$\hat{F}(\theta,X_{n}):=\frac{1}{n}\sum_{i=1}^{n}\nabla f(\theta,x_{i})$
and $\hat{F}(\theta,\hat{X}_{n}):=\frac{1}{n}\sum_{i=1}^{n}\nabla f(\theta,\hat{x}_{i})$ respectively.

\begin{lemma}\label{lem:nonconvex:minimizer}
Under Assumption~\ref{assump:3}, 
we have
\begin{align}
&\Vert\theta_{\ast}\Vert\leq\frac{\sup_{x\in\mathcal{X}}\Vert\nabla f(0,x)\Vert+\sqrt{\sup_{x\in\mathcal{X}}\Vert\nabla f(0,x)\Vert^{2}+4mK}}{2m},
\\
&\Vert\hat{\theta}_{\ast}\Vert\leq\frac{\sup_{x\in\mathcal{X}}\Vert\nabla f(0,x)\Vert+\sqrt{\sup_{x\in\mathcal{X}}\Vert\nabla f(0,x)\Vert^{2}+4mK}}{2m}.
\end{align}
\end{lemma}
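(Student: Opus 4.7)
The plan is to mimic the argument of Lemma~\ref{lem:str:cvx:minimizer}, but replace strong convexity with the dissipativity condition from Assumption~\ref{assump:3}, which will only cost an additive $-K$ and thus turn a linear inequality into a quadratic one.

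First, I would use the first-order optimality condition: since $\theta_\ast$ minimizes $\hat F(\theta,X_n)=\frac{1}{n}\sum_{i=1}^n f(\theta,x_i)$, we have $\nabla\hat F(\theta_\ast,X_n)=\frac{1}{n}\sum_i \nabla f(\theta_\ast,x_i)=0$. Plugging this into the identity
\[
\langle \nabla\hat F(0,X_n)-\nabla\hat F(\theta_\ast,X_n),\,0-\theta_\ast\rangle
=\frac{1}{n}\sum_{i=1}^n\langle\nabla f(0,x_i)-\nabla f(\theta_\ast,x_i),\,0-\theta_\ast\rangle,
\]
the left-hand side collapses to $-\big\langle\tfrac1n\sum_i\nabla f(0,x_i),\theta_\ast\big\rangle$.

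Next I would apply Assumption~\ref{assump:3} term-by-term on the right-hand side, summing and dividing by $n$, to obtain the pointwise lower bound $m\|\theta_\ast\|^2 - K$. Combined with Cauchy--Schwarz and the triangle inequality on the left-hand side, this gives
\[
\sup_{x\in\mathcal{X}}\|\nabla f(0,x)\|\cdot\|\theta_\ast\|\;\geq\; m\|\theta_\ast\|^2 - K.
\]
Rearranging yields the quadratic inequality $m\|\theta_\ast\|^2 - A\|\theta_\ast\| - K \leq 0$, where $A:=\sup_{x\in\mathcal{X}}\|\nabla f(0,x)\|$.

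Finally, I would solve this quadratic in $\|\theta_\ast\|\geq 0$. Since the leading coefficient $m>0$ and the constant term $-K\leq 0$, the larger root is $\big(A+\sqrt{A^2+4mK}\big)/(2m)$, and the inequality forces $\|\theta_\ast\|$ to lie below it, giving exactly the claimed bound. The bound on $\|\hat\theta_\ast\|$ is obtained by repeating the argument verbatim with $X_n$ replaced by $\hat X_n$, since both data sets are contained in $\mathcal{X}$ and the supremum is taken over all $x\in\mathcal{X}$. There is no real obstacle here; the only subtlety relative to the strongly convex case is ensuring one takes the \emph{positive} root of the quadratic, which is automatic because $\|\theta_\ast\|\geq 0$.
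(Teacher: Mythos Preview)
Your proposal is correct and follows essentially the same approach as the paper: you use first-order optimality to reduce the left side to $-\langle \tfrac{1}{n}\sum_i \nabla f(0,x_i),\theta_\ast\rangle$, apply Assumption~\ref{assump:3} termwise to get the lower bound $m\|\theta_\ast\|^2-K$, bound the left side via Cauchy--Schwarz, and solve the resulting quadratic. The paper's proof is identical in structure and detail.
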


\begin{proof}
By Assumption~\ref{assump:3}, we have
\begin{align}
\left\langle\nabla\hat{F}(0,X_{n})-\nabla\hat{F}(\theta_{\ast},X_{n}),0-\theta_{\ast}\right\rangle
&=-\frac{1}{n}\sum_{i=1}^{n}\langle\nabla f(0,x_{i}),\theta_{\ast}\rangle
\nonumber
\\
&=\frac{1}{n}\sum_{i=1}^{n}\langle\nabla f(0,x_{i})-\nabla f(\theta_{\ast},x_{i}),0-\theta_{\ast}\rangle
\nonumber
\\
&\geq m\Vert\theta_{\ast}\Vert^{2}-K,  
\end{align}
which implies that
\begin{equation}
m\Vert\theta_{\ast}\Vert^{2}-K
\leq\sup_{x\in\mathcal{X}}\Vert\nabla f(0,x)\Vert\cdot\Vert\theta_{\ast}\Vert,
\end{equation}
which yields that 
\begin{equation}
\Vert\theta_{\ast}\Vert\leq\frac{\sup_{x\in\mathcal{X}}\Vert\nabla f(0,x)\Vert+\sqrt{\sup_{x\in\mathcal{X}}\Vert\nabla f(0,x)\Vert^{2}+4mK}}{2m}.
\end{equation}
Similarly, one can show that
\begin{equation}
\Vert\hat{\theta}_{\ast}\Vert\leq\frac{\sup_{x\in\mathcal{X}}\Vert\nabla f(0,x)\Vert+\sqrt{\sup_{x\in\mathcal{X}}\Vert\nabla f(0,x)\Vert^{2}+4mK}}{2m}.
\end{equation}
This completes the proof.
\end{proof}

%%%%%%%%%%%%%%%%%%%%%%%%%%%%%%%%%%
\subsection{Proof of Theorem~\ref{thm:nonconvex}} \label{ap:proof_nonconvex_w_1}
Before going to the proof, let us restate the full version of Theorem~\ref{thm:nonconvex} that we provide below. 

\begin{theorem}[Complete \textbf{Theorem 3.3}]\label{thm:nonconvex_complete}
Let $\theta_{0}=\hat{\theta}_{0}=\theta$.
Assume that Assumption~\ref{assump:1}, Assumption~\ref{assump:3} and Assumption~\ref{Assump:noise} hold.
We also assume that 
$\eta<\min\left\{\frac{1}{m},\frac{m}{K_{1}^{2}+64D^{2}K_{2}^{2}}\right\}$
and $\sup_{x\in \mathcal{X}}\|x\| \leq D$ for some $D < \infty$ and $\sup_{x\in\mathcal{X}}\Vert\nabla f(0,x)\Vert\leq E$ for some $E<\infty$.
For any $\hat{\eta}\in(0,1)$. Define $M>0$
so that $\int_{\Vert\theta_{1}-\theta_{\ast}\Vert\leq M}p(\theta_{\ast},\theta_{1})d\theta_{1}\geq\sqrt{\hat{\eta}}$
and any $R>\frac{2K_{0}}{m}$ where $K_{0}$ is defined in \eqref{defn:K:0} so that
\begin{equation}
\inf_{\theta,\theta_{1}\in\mathbb{R}^{d}:V(\theta)\leq R,\Vert\theta_{1}-\theta_{\ast}\Vert\leq M}\frac{p(\theta,\theta_{1})}{p(\theta_{\ast},\theta_{1})}\geq\sqrt{\hat{\eta}}.
\end{equation}
Let $\nu_{k}$ and $\hat{\nu}_{k}$ denote the distributions
of $\theta_{k}$ and $\hat{\theta}_{k}$ respectively. 
Then, we have
\begin{align}
&\mathcal{W}_{1}(\nu_{k},\hat{\nu}_{k})
\nonumber
\\
&\leq
\frac{1-\bar{\eta}^{k}}{2\sqrt{\psi(1+\psi)}(1-\bar{\eta})}
\nonumber
\\
&\cdot
\frac{2b}{n}\max\Bigg\{\psi(4+8\eta^{2}K_{1}^{2}),
\nonumber
\\
&\qquad\qquad
1+\psi\Bigg(1+\eta^{2}\sigma^{2}
+16(1+2\eta^{2}K_{1}^{2})\left(\frac{E+\sqrt{E^{2}+4mK}}{2m}\right)^{2}
\nonumber
\\
&\qquad\qquad\qquad\qquad\qquad
+4\eta^{2}\left(2E^{2}+2K_{1}^{2}\left(\frac{E+\sqrt{E^{2}+4mK}}{2m}\right)^{2}\right)\Bigg)
\Bigg\}
\nonumber
\\
&\qquad\cdot
\max\Bigg\{1+2\theta^{2}+2\left(\frac{E+\sqrt{E^{2}+4mK}}{2m}\right)^{2},
\nonumber
\\
&\qquad\qquad\qquad
2-\frac{\eta}{m}K_{1}^{2}-\frac{56\eta}{m}D^{2}K_{2}^{2}
+\frac{64\eta}{m}D^{2}K_{2}^{2}\left(\frac{E+\sqrt{E^{2}+4mK}}{2m}\right)^{2}
+\frac{2K}{m}+\frac{\eta}{m}\sigma^{2}\Bigg\},
\end{align}
where for any $\eta_{0}\in(0,\hat{\eta})$
and $\gamma_{0}\in\left(1-m\eta+\frac{2\eta K_{0}}{R},1\right)$ one can 
choose $\psi=\frac{\eta_{0}}{\eta K_{0}}$ 
and $\bar{\eta}=(1-(\hat{\eta}-\eta_{0}))\vee\frac{2+R\psi\gamma_{0}}{2+R\psi}$
and $d_{\psi}$ is the weighted total variation distance defined in Section~\ref{sec:computable:Markov}.
\end{theorem}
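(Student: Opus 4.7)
The plan is to apply the three-step perturbation recipe that has already been assembled in Lemma~\ref{lem:nonconvex:1}, Lemma~\ref{lem:nonconvex:2}, and Lemma~\ref{lem:nonconvex:3}, but feed them into the modified perturbation bound Lemma~\ref{lemma:modified} rather than the original Lemma~\ref{lemma:key}. The reason we need Lemma~\ref{lemma:modified} is that in the non-convex case we do not get a clean synchronous-coupling contraction of the form $\mathcal{W}_1(P^n(\theta,\cdot), P^n(\tilde\theta,\cdot)) \le C\rho^n \|\theta-\tilde\theta\|$; instead, Lemma~\ref{lem:nonconvex:1} (built on Hairer-Mattingly via the drift-minorization conditions) gives contraction with respect to the weighted total-variation metric $d_\psi$, together with the comparison $\mathcal{W}_1 \le \frac{1}{2\sqrt{\psi(1+\psi)}} d_\psi$. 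Setting $C = \frac{1}{2\sqrt{\psi(1+\psi)}}$ and $\rho = \bar\eta$ identifies the contraction hypothesis of Lemma~\ref{lemma:modified}.

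First I would invoke Lemma~\ref{lem:nonconvex:2} to certify the Lyapunov drift $(\hat P \hat V)(\theta) \le \delta \hat V(\theta) + L$ with $\hat V(\theta) = 1+\|\theta-\hat\theta_*\|^2$, $\delta = 1-m\eta$, and
\begin{equation*}
L = 2m\eta - \eta^2 K_1^2 - 56\eta^2 D^2 K_2^2 + 64\eta^2 D^2 K_2^2 \|\hat\theta_*\|^2 + 2\eta K + \eta^2 \sigma^2,
\end{equation*}
so that $L/(1-\delta)$ matches (up to $\eta$-cancellation) the second argument of the outer max in the theorem statement. Next I would invoke Lemma~\ref{lem:nonconvex:3} to bound
\begin{equation*}
\gamma = \sup_\theta \frac{d_\psi(\delta_\theta P, \delta_\theta \hat P)}{\hat V(\theta)},
\end{equation*}
which produces the $\frac{2b}{n}\max\{\cdot,\cdot\}$ factor appearing in the final bound. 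Since $\theta_0 = \hat\theta_0 = \theta$, the initial term $d_\psi(p_0,\hat p_0) = 0$, so Lemma~\ref{lemma:modified} collapses to
\begin{equation*}
\mathcal{W}_1(\nu_k,\hat\nu_k) \le \frac{1}{2\sqrt{\psi(1+\psi)}} \cdot \frac{1-\bar\eta^k}{1-\bar\eta} \cdot \gamma \kappa,
\end{equation*}
where $\kappa = \max\{\hat V(\theta), L/(1-\delta)\}$ since $\hat p_0 = \delta_\theta$.

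The last step is to turn the implicit quantities $\|\theta_*\|$ and $\|\hat\theta_*\|$ that appear in Lemma~\ref{lem:nonconvex:2} and Lemma~\ref{lem:nonconvex:3} into explicit data-independent constants. This is exactly where Lemma~\ref{lem:nonconvex:minimizer} enters: under Assumption~\ref{assump:3} and the hypothesis $\sup_x \|\nabla f(0,x)\| \le E$, both $\|\theta_*\|$ and $\|\hat\theta_*\|$ are bounded by $(E+\sqrt{E^2+4mK})/(2m)$. Substituting this bound wherever these norms occur (in $L$, in the perturbation-gap bound, and in $\hat V(\theta) \le 1 + 2\|\theta\|^2 + 2\|\hat\theta_*\|^2$) produces the explicit $E$-dependent expressions in the theorem statement.

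The main obstacle is not analytical but bookkeeping: one must carefully match the form of $\gamma\kappa$ produced by Lemma~\ref{lem:nonconvex:3} with the outer $\max\{\cdot,\cdot\}$ appearing in the statement, track all the $\eta$-factors (in particular that $L/(1-\delta)$ absorbs one power of $\eta$ via $1-\delta = m\eta$), and verify that the bounds on $\|\theta_*\|, \|\hat\theta_*\|$ propagate correctly into both the $\psi$-coefficient inside the $\gamma$-term and the Lyapunov initial value $\hat V(\theta) \le 1+2\|\theta\|^2 + 2((E+\sqrt{E^2+4mK})/(2m))^2$. Beyond this, the step-size condition $\eta < \min\{1/m, m/(K_1^2+64D^2K_2^2)\}$ is precisely what is needed so that $\delta = 1-m\eta \in (0,1)$ in Lemma~\ref{lem:nonconvex:2} and so that the drift coefficient $\bar\eta$ of Lemma~\ref{lem:nonconvex:1} lies in $(0,1)$; no further optimization over $\eta$ is required.
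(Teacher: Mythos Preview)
Your proposal is correct and follows essentially the same route as the paper: invoke Lemma~\ref{lem:nonconvex:1}, Lemma~\ref{lem:nonconvex:2}, and Lemma~\ref{lem:nonconvex:3}, feed them into the modified perturbation bound Lemma~\ref{lemma:modified} with $C=\tfrac{1}{2\sqrt{\psi(1+\psi)}}$ and $\rho=\bar\eta$, then replace $\|\theta_*\|,\|\hat\theta_*\|$ via Lemma~\ref{lem:nonconvex:minimizer}. The only small step you did not name explicitly is that the term $\sup_{x}\|\nabla f(\theta_*,x)\|^2$ in Lemma~\ref{lem:nonconvex:3} is first bounded by $2E^2+2K_1^2\|\theta_*\|^2$ (via Assumption~\ref{assump:1}) before Lemma~\ref{lem:nonconvex:minimizer} is applied, but this is exactly the ``bookkeeping'' you flagged.
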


\begin{proof}
By applying Lemma~\ref{lem:nonconvex:1}, Lemma~\ref{lem:nonconvex:2}, 
Lemma~\ref{lem:nonconvex:3} and Lemma~\ref{lemma:modified}, 
which is a modification of Lemma~\ref{lemma:key}, 
we obtain
\begin{align}
&\mathcal{W}_{1}(\nu_{k},\hat{\nu}_{k})
\nonumber
\\
&\leq
\frac{1-\bar{\eta}^{k}}{2\sqrt{\psi(1+\psi)}(1-\bar{\eta})}
\nonumber
\\
&\cdot
\frac{2b}{n}\max\Bigg\{\psi(4+8\eta^{2}K_{1}^{2}),
\nonumber
\\
&\qquad\qquad
1+\psi\left(1+\eta^{2}\sigma^{2}
+(4+8\eta^{2}K_{1}^{2})\Vert\theta_{\ast}-\hat{\theta}_{\ast}\Vert^{2}
+4\eta^{2}\sup_{x\in\mathcal{X}}\Vert\nabla f(\theta_{\ast},x)\Vert^{2}\right)
\Bigg\}
\nonumber
\\
&\qquad\cdot
\max\left\{1+\Vert\theta-\hat{\theta}_{\ast}\Vert^{2},
2-\frac{\eta}{m}K_{1}^{2}-\frac{56\eta}{m}D^{2}K_{2}^{2}
+\frac{64\eta}{m}D^{2}K_{2}^{2}\Vert\hat{\theta}_{\ast}\Vert^{2}
+\frac{2K}{m}+\frac{\eta}{m}\sigma^{2}\right\},
\end{align}
where for any $\eta_{0}\in(0,\hat{\eta})$
and $\gamma_{0}\in\left(1-m\eta+\frac{2\eta K_{0}}{R},1\right)$ one can 
choose $\psi=\frac{\eta_{0}}{\eta K_{0}}$ 
and $\bar{\eta}=(1-(\hat{\eta}-\eta_{0}))\vee\frac{2+R\psi\gamma_{0}}{2+R\psi}$
and $d_{\psi}$ is the weighted total variation distance defined in Section~\ref{sec:computable:Markov}.

Finally, let us notice that
$\Vert\theta_{\ast}-\hat{\theta}_{\ast}\Vert^{2}
\leq 2\Vert\theta_{\ast}\Vert^{2}+2\Vert\hat{\theta}_{\ast}\Vert^{2}$
and $\Vert\theta-\hat{\theta}_{\ast}\Vert^{2}\leq 2\Vert\theta\Vert^{2}+2\Vert\hat{\theta}_{\ast}\Vert^{2}$ and for every $x\in\mathcal{X}$, 
\begin{equation}
\Vert\nabla f(\theta_{\ast},x)\Vert^{2}
\leq
2\Vert\nabla f(0,x)\Vert^{2}
+2\Vert\nabla f(\theta_{\ast},x)-\nabla f(0,x)\Vert^{2}
\leq
2\Vert\nabla f(0,x)\Vert^{2}
+2K_{1}^{2}\Vert\theta_{\ast}\Vert^{2}.
\end{equation}
By applying Lemma~\ref{lem:nonconvex:minimizer}, we complete the proof.
\end{proof}

% \begin{remark}
% In Theorem~\ref{thm:nonconvex}, we can take $R=\frac{2K_{0}}{m}(1+\epsilon)$
% for some fixed $\epsilon\in(0,1)$ so that we can take
% \begin{equation}\label{eqn:hat:eta}
% \hat{\eta}=\left(\max_{M>0}\left\{\min\left\{\int_{\Vert\theta_{1}-\theta_{\ast}\Vert\leq M}p(\theta_{\ast},\theta_{1})d\theta_{1},
% \inf_{\substack{\theta,\theta_{1}\in\mathbb{R}^{d}:\Vert\theta_{1}-\theta_{\ast}\Vert\leq M
% \\
% \Vert\theta-\theta_{\ast}\Vert^{2}\leq\frac{2K_{0}}{m}(1+\epsilon)-1}}\frac{p(\theta,\theta_{1})}{p(\theta_{\ast},\theta_{1})}\right\}\right\}\right)^{2}.
% \end{equation}
% Moreover, we can take
% \begin{equation}
% \eta_{0}=\frac{\hat{\eta}}{2},
% \qquad
% \gamma_{0}=1-\frac{m\eta\epsilon}{2},
% \qquad
% \psi=\frac{\hat{\eta}}{2\eta K_{0}},
% \end{equation}
% so that
% \begin{equation}
% \bar{\eta}=\max\left\{1-\frac{\hat{\eta}}{2},\frac{2+\frac{(1+\epsilon)\hat{\eta}}{m}(1-\frac{m\eta\epsilon}{2})}{2+\frac{(1+\epsilon)\hat{\eta}}{m}}\right\}
% =1-\frac{m\eta\epsilon(1+\epsilon)\hat{\eta}}{4m+2(1+\epsilon)\hat{\eta}},
% \end{equation}
% provided that $\eta\leq 1$. 
% \end{remark}

%%%%%%%%%%%%%%%%%%%%%%%%%%%%%%%%%%%%%%%%%%%%%%%%%%%%%%%
\subsection{Proof of Corollary~\ref{cor:Gaussian}}

\begin{proof}
Under our assumptions, the noise $\xi_{k}$ are i.i.d. Gaussian $\mathcal{N}(0,\Sigma)$
so that $\mathbb{E}[\Vert\xi_{1}\Vert^{2}]=\text{tr}(\Sigma)=\sigma^{2}$.
Moreoever, we have $\Sigma\prec I_{d}$.
Then $p(\theta_{\ast},\theta_{1})$ is the probability density function of
\begin{equation}
\theta_{\ast}-\frac{\eta}{b}\sum_{i\in\Omega_{1}}\nabla f(\theta_{\ast},x_{i})+\eta\xi_{1}.
\end{equation}
Therefore,
\begin{align}
\int_{\Vert\theta_{1}-\theta_{\ast}\Vert\leq M}p(\theta_{\ast},\theta_{1})d\theta_{1}
=\mathbb{P}\left(\left\Vert -\frac{\eta}{b}\sum_{i\in\Omega_{1}}\nabla f(\theta_{\ast},x_{i})+\eta\xi_{1}\right\Vert\leq M\right).
\end{align}
Notice that for any $\Omega_{1}$, 
\begin{equation}
\left\Vert\frac{\eta}{b}\sum_{i\in\Omega_{1}}\nabla f(\theta_{\ast},x_{i})\right\Vert
\leq\eta\sup_{x\in\mathcal{X}}\Vert\nabla f(\theta_{\ast},x)\Vert.
\end{equation}
Thus, we have
\begin{align}
\int_{\Vert\theta_{1}-\theta_{\ast}\Vert\leq M}p(\theta_{\ast},\theta_{1})d\theta_{1}
&=
1-\mathbb{P}\left(\left\Vert -\frac{\eta}{b}\sum_{i\in\Omega_{1}}\nabla f(\theta_{\ast},x_{i})+\eta\xi_{1}\right\Vert>M\right)
\nonumber
\\
&\geq
1-\mathbb{P}\left(\left\Vert\eta\xi_{1}\right\Vert>M-\eta\sup_{x\in\mathcal{X}}\Vert\nabla f(\theta_{\ast},x)\Vert\right)
\nonumber
\\
&=1-\mathbb{P}\left(\left\Vert\xi_{1}\right\Vert>\frac{M}{\eta}-\sup_{x\in\mathcal{X}}\Vert\nabla f(\theta_{\ast},x)\Vert\right).
\end{align}
Since $\xi\sim\mathcal{N}(0,\Sigma)$ and $\Sigma\prec I_{d}$, 
for any $\gamma\leq\frac{1}{2}$, we have
\begin{equation*}
\mathbb{E}\left[e^{\gamma\Vert\xi_{1}\Vert^{2}}\right]
=\frac{1}{\sqrt{\det{(I_{d}-2\gamma\Sigma)}}}.
\end{equation*}
By Chebychev's inequality, letting $\gamma=\frac{1}{2}$, 
for any $M\geq\eta\sup_{x\in\mathcal{X}}\Vert\nabla f(\theta_{\ast},x)\Vert$, we get
\begin{equation*}
\int_{\Vert\theta_{1}-\theta_{\ast}\Vert\leq M}p(\theta_{\ast},\theta_{1})d\theta_{1}
\geq
1-\frac{1}{\sqrt{\det{(I_{d}-\Sigma)}}}
\exp\left(-\frac{1}{2}\left(\frac{M}{\eta}-\sup_{x\in\mathcal{X}}\Vert\nabla f(\theta_{\ast},x)\Vert\right)^{2}\right).
\end{equation*}

Next, for any $\theta,\theta_{1}\in\mathbb{R}^{d}$ such that $\Vert\theta_{1}-\theta_{\ast}\Vert\leq M$
and $\Vert\theta-\theta_{\ast}\Vert^{2}\leq\frac{2K_{0}}{m}(1+\epsilon)-1$, 
we have
\begin{align}
\frac{p(\theta,\theta_{1})}{p(\theta_{\ast},\theta_{1})}
=\frac{\mathbb{E}_{\Omega_{1}}[p_{\Omega_{1}}(\theta,\theta_{1})]}{\mathbb{E}_{\Omega_{1}}[p_{\Omega_{1}}(\theta_{\ast},\theta_{1})]},
\end{align}
where $\mathbb{E}_{\Omega_{1}}$ denotes the expectation w.r.t. $\Omega_{1}$, 
and $p_{\Omega_{1}}$ denotes the probability density function conditional on $\Omega_{1}$. 
For any given $\Omega_{1}$, we can compute that
\begin{align}
&\frac{p_{\Omega_{1}}(\theta,\theta_{1})}{p_{\Omega_{1}}(\theta_{\ast},\theta_{1})}
\nonumber
\\
&=\exp\left\{-\frac{1}{2\eta^{2}}(\theta_{1}-\mu(\theta))^{\top}\Sigma^{-1}(\theta_{1}-\mu(\theta))
+\frac{1}{2\eta^{2}}(\theta_{1}-\mu(\theta_{\ast}))^{\top}\Sigma^{-1}(\theta_{1}-\mu(\theta_{\ast}))\right\},
\end{align}
where
\begin{equation}
\mu(\theta):=\theta-\frac{\eta}{b}\sum_{i\in\Omega_{1}}\nabla f(\theta,x_{i}),
\qquad
\mu(\theta_{\ast}):=\theta_{\ast}-\frac{\eta}{b}\sum_{i\in\Omega_{1}}\nabla f(\theta_{\ast},x_{i}).
\end{equation}
Therefore, for any $\theta,\theta_{1}\in\mathbb{R}^{d}$ such that $\Vert\theta_{1}-\theta_{\ast}\Vert\leq M$
and $\Vert\theta-\theta_{\ast}\Vert^{2}\leq\frac{2K_{0}}{m}(1+\epsilon)-1$, 
we have
\begin{align}
\frac{p_{\Omega_{1}}(\theta,\theta_{1})}{p_{\Omega_{1}}(\theta_{\ast},\theta_{1})}
&\geq\exp\left\{-\frac{1}{2\eta^{2}}\Vert\mu(\theta)-\mu(\theta_{\ast})\Vert\cdot\Vert\Sigma^{-1}\Vert\left(\Vert\theta_{1}-\mu(\theta)\Vert+\Vert\theta_{1}-\mu(\theta_{\ast})\Vert\right)\right\}
\nonumber
\\
&\geq\exp\left\{-\frac{1}{2\eta^{2}}\Vert\mu(\theta)-\mu(\theta_{\ast})\Vert\cdot\Vert\Sigma^{-1}\Vert\left(\Vert\mu(\theta)-\mu(\theta_{\ast})\Vert+2\Vert\theta_{1}-\mu(\theta_{\ast})\Vert\right)\right\}.
\end{align}
We can further compute that
\begin{align}
\Vert\theta_{1}-\mu(\theta_{\ast})\Vert
\leq\Vert\theta_{1}-\theta_{\ast}\Vert
+\left\Vert\frac{\eta}{b}\sum_{i\in\Omega_{1}}\nabla f(\theta_{\ast},x_{i})\right\Vert
\leq M+\eta\sup_{x\in\mathcal{X}}\Vert\nabla f(\theta_{\ast},x)\Vert,
\end{align}
and
\begin{align}
\Vert\mu(\theta)-\mu(\theta_{\ast})\Vert
&\leq\Vert\theta-\theta_{\ast}\Vert
+\frac{\eta}{b}\sum_{i\in\Omega_{1}}\left\Vert\nabla f(\theta,x_{i})-\nabla f(\theta_{\ast},x_{i})\right\Vert
\nonumber
\\
&\leq
(1+K_{1}\eta)\Vert\theta-\theta_{\ast}\Vert
\nonumber
\\
&\leq
(1+K_{1}\eta)\left(\frac{2K_{0}}{m}(1+\epsilon)-1\right)^{1/2}.
\end{align}
Hence, we have
\begin{align}
\frac{p_{\Omega_{1}}(\theta,\theta_{1})}{p_{\Omega_{1}}(\theta_{\ast},\theta_{1})}
&\geq\exp\Bigg\{-\frac{(1+K_{1}\eta)\left(\frac{2K_{0}}{m}(1+\epsilon)-1\right)^{1/2}}{2\eta^{2}}\Vert\Sigma^{-1}\Vert
\nonumber
\\
&\qquad
\cdot\Bigg((1+K_{1}\eta)\left(\frac{2K_{0}}{m}(1+\epsilon)-1\right)^{1/2}+2\left(M+\eta\sup_{x\in\mathcal{X}}\Vert\nabla f(\theta_{\ast},x)\Vert\right)\Bigg)\Bigg\}.
\end{align}
Since it holds for every $\Omega_{1}$, we have
\begin{align}
\frac{p(\theta,\theta_{1})}{p(\theta_{\ast},\theta_{1})}
&\geq\exp\Bigg\{-\frac{(1+K_{1}\eta)\left(\frac{2K_{0}}{m}(1+\epsilon)-1\right)^{1/2}}{2\eta^{2}}\Vert\Sigma^{-1}\Vert\nonumber
\\
&\qquad
\cdot\Bigg((1+K_{1}\eta)\left(\frac{2K_{0}}{m}(1+\epsilon)-1\right)^{1/2}+2\left(M+\eta\sup_{x\in\mathcal{X}}\Vert\nabla f(\theta_{\ast},x)\Vert\right)\Bigg)\Bigg\}.
\end{align}
Hence, we conclude that
\begin{align}
\hat{\eta}
&\geq
\Bigg(\max_{M\geq\eta\sup_{x\in\mathcal{X}}\Vert\nabla f(\theta_{\ast},x)\Vert}\Bigg\{
\min\Bigg\{
1-\frac{\exp\left(-\frac{1}{2}\left(\frac{M}{\eta}-\sup_{x\in\mathcal{X}}\Vert\nabla f(\theta_{\ast},x)\Vert\right)^{2}\right)}{\sqrt{\det{(I_{d}-\Sigma)}}},
\nonumber
\\
&\qquad\exp\Bigg\{-\frac{(1+K_{1}\eta)\left(\frac{2K_{0}}{m}(1+\epsilon)-1\right)^{1/2}}{2\eta^{2}}\Vert\Sigma^{-1}\Vert
\nonumber
\\
&\qquad\quad
\cdot\Bigg((1+K_{1}\eta)\left(\frac{2K_{0}}{m}(1+\epsilon)-1\right)^{1/2}+2\left(M+\eta\sup_{x\in\mathcal{X}}\Vert\nabla f(\theta_{\ast},x)\Vert\right)\Bigg)\Bigg\}\Bigg\}\Bigg\}\Bigg)^{2}.
\end{align}
This completes the proof.
\end{proof}

%%%%%%%%%%%%%%%%%%%%%%%%%%%%%%%%%%%%%%%%
\section{Proofs of Non-Convex Case without Additive Noise}\label{sec:proof:nonconvex:no:noise}

\subsection{Proof of Theorem~\ref{thm:str:cvx:2}}

\begin{proof}
Let us recall that $\theta_{0}=\hat{\theta}_{0}=\theta$ and for any $k\in\mathbb{N}$,
\begin{align}
&\theta_{k}=\theta_{k-1}-\frac{\eta}{b}\sum_{i\in\Omega_{k}}\nabla f(\theta_{k-1},x_{i}),
\\
&\hat{\theta}_{k}=\hat{\theta}_{k-1}-\frac{\eta}{b}\sum_{i\in\Omega_{k}}\nabla f(\hat{\theta}_{k-1},\hat{x}_{i}).
\end{align}
Thus it follows that
\begin{align}
\theta_{k}-\hat{\theta}_{k}&=\theta_{k-1}-\hat{\theta}_{k-1}
-\frac{\eta}{b}\sum_{i\in\Omega_{k}}\left(\nabla f(\theta_{k-1},x_{i})-\nabla f\left(\hat{\theta}_{k-1},x_{i}\right)\right)
+\frac{\eta}{b}\mathcal{E}_{k},
\end{align}
where
\begin{equation}
\mathcal{E}_{k}:=\sum_{i\in\Omega_{k}}\left(\nabla f(\hat{\theta}_{k-1},\hat{x}_{i})-\nabla f\left(\hat{\theta}_{k-1},x_{i}\right)\right).
\end{equation}
This implies that
\begin{align}
\left\Vert\theta_{k}-\hat{\theta}_{k}\right\Vert^{2}&=\left\Vert\theta_{k-1}-\hat{\theta}_{k-1}
-\frac{\eta}{b}\sum_{i\in\Omega_{k}}\left(\nabla f(\theta_{k-1},x_{i})-\nabla f\left(\hat{\theta}_{k-1},x_{i}\right)\right)\right\Vert^{2}
+\frac{\eta^{2}}{b^{2}}\left\Vert\mathcal{E}_{k}\right\Vert^{2}
\nonumber
\\
&\qquad
+2\left\langle\theta_{k-1}-\hat{\theta}_{k-1}
-\frac{\eta}{b}\sum_{i\in\Omega_{k}}\left(\nabla f(\theta_{k-1},x_{i})-\nabla f\left(\hat{\theta}_{k-1},x_{i}\right)\right),
\frac{\eta}{b}\mathcal{E}_{k}\right\rangle.
\end{align}
By Assumption~\ref{assump:1} and Assumption~\ref{assump:3}, we have
\begin{align}
&\left\Vert\theta_{k-1}-\hat{\theta}_{k-1}
-\frac{\eta}{b}\sum_{i\in\Omega_{k}}\left(\nabla f(\theta_{k-1},x_{i})-\nabla f\left(\hat{\theta}_{k-1},x_{i}\right)\right)\right\Vert^{2}
\nonumber
\\
&\leq
(1-2\eta m)\left\Vert\theta_{k-1}-\hat{\theta}_{k-1}\right\Vert^{2}+2\eta K
+\frac{\eta^{2}}{b^{2}}\left(bK_{1}\left\Vert\theta_{k-1}-\hat{\theta}_{k-1}\right\Vert\right)^{2}
\nonumber
\\
&\leq
(1-\eta m)\left\Vert\theta_{k-1}-\hat{\theta}_{k-1}\right\Vert^{2}+2\eta K,
\end{align}
provided that $\eta\leq\frac{m}{K_{1}^{2}}$. 

Since $X_{n}$ and $\hat{X}_{n}$ differ by at most one element
and $\sup_{x\in \mathcal{X}}\|x\| \leq D$ for some $D < \infty$, 
we have $x_{i}=\hat{x}_{i}$
for any $i\in\Omega_{k}$
with probability $\frac{n-b}{n}$
and $x_{i}\neq\hat{x}_{i}$
for exactly one $i\in\Omega_{k}$
with probability $\frac{b}{n}$ and therefore
\begin{equation}
\mathbb{E}\left\Vert\mathcal{E}_{k}\right\Vert^{2}
\leq\frac{b}{n}\mathbb{E}\left[\left(K_{2}2D\left(2\Vert\hat{\theta}_{k-1}\Vert+1\right)\right)^{2}\right]
\leq\frac{4D^{2}K_{2}^{2}b}{n}\left(8\mathbb{E}\Vert\hat{\theta}_{k-1}\Vert^{2}+2\right),
\end{equation}
and moreover,
\begin{align}
&\mathbb{E}\left\langle\theta_{k-1}-\hat{\theta}_{k-1}
-\frac{\eta}{b}\sum_{i\in\Omega_{k}}\left(\nabla f(\theta_{k-1},x_{i})-\nabla f\left(\hat{\theta}_{k-1},x_{i}\right)\right),
\frac{\eta}{b}\mathcal{E}_{k}\right\rangle
\nonumber
\\
&\leq
\frac{\eta}{b}\frac{b}{n}\mathbb{E}\left[(1+K_{1}\eta)\left\Vert\theta_{k-1}-\hat{\theta}_{k-1}\right\Vert
K_{2}2D\left(2\Vert\hat{\theta}_{k-1}\Vert+1\right)\right]
\nonumber
\\
&\leq\frac{2K_{2}D\eta}{n}(1+K_{1}\eta)\mathbb{E}\left[\left(\Vert\theta_{k-1}\Vert+\Vert\hat{\theta}_{k-1}\Vert\right)
\left(2\Vert\hat{\theta}_{k-1}\Vert+1\right)\right]
\nonumber
\\
&\leq\frac{2K_{2}D\eta}{n}(1+K_{1}\eta)\left(1+\frac{3}{2}\mathbb{E}\Vert\theta_{k-1}\Vert^{2}+\frac{7}{2}\mathbb{E}\Vert\hat{\theta}_{k-1}\Vert^{2}\right),
\end{align}
where we used the inequality that 
\begin{equation}
(a+b)(2b+1)=2b^{2}+2ab+a+b\leq 1+\frac{3}{2}a^{2}+\frac{7}{2}b^{2},
\end{equation}
for any $a,b\in\mathbb{R}$. 
Therefore, we have
\begin{align}
\mathbb{E}\left\Vert\theta_{k}-\hat{\theta}_{k}\right\Vert^{2}
&\leq(1-\eta m)\mathbb{E}\left\Vert\theta_{k-1}-\hat{\theta}_{k-1}\right\Vert^{2}
+2\eta K+\frac{4D^{2}K_{2}^{2}\eta^{2}}{bn}\left(8\mathbb{E}\Vert\hat{\theta}_{k-1}\Vert^{2}+2\right)
\nonumber
\\
&\qquad
+\frac{4K_{2}D\eta}{n}(1+K_{1}\eta)\left(1+\frac{3}{2}\mathbb{E}\Vert\theta_{k-1}\Vert^{2}+\frac{7}{2}\mathbb{E}\Vert\hat{\theta}_{k-1}\Vert^{2}\right).\label{key:iterate}
\end{align}

In Lemma~\ref{lem:nonconvex:2}, we showed that
under the assumption $\eta<\min\left\{\frac{1}{m},\frac{m}{K_{1}^{2}+64D^{2}K_{2}^{2}}\right\}$
and $\sup_{x\in \mathcal{X}}\|x\| \leq D$ for some $D < \infty$, 
we have that for every $k\in\mathbb{N}$,
\begin{align}
\mathbb{E}\hat{V}(\hat{\theta}_{k})
\leq(1-\eta m)\mathbb{E}\hat{V}(\hat{\theta}_{k-1})
+2\eta m-\eta^{2}K_{1}^{2}-56\eta^{2}D^{2}K_{2}^{2}
+64\eta^{2}D^{2}K_{2}^{2}\Vert\hat{\theta}_{\ast}\Vert^{2}+2\eta K,
\end{align}
where $\hat{V}(\theta):=1+\Vert\theta-\hat{\theta}_{\ast}\Vert^{2}$, 
where $\hat{\theta}_{\ast}$ is the minimizer
of $\hat{F}(\theta,\hat{X}_{n}):=\frac{1}{n}\sum_{i=1}^{n}\nabla f(\theta,\hat{x}_{i})$.
This implies that
\begin{align}
&\mathbb{E}\left[\hat{V}\left(\hat{\theta}_{k}\right)\right]
\nonumber
\\
&\leq(1-\eta m)^{k}\mathbb{E}\left[\hat{V}\left(\hat{\theta}_{0}\right)\right]
+\frac{2\eta m-\eta^{2}K_{1}^{2}-56\eta^{2}D^{2}K_{2}^{2}
+64\eta^{2}D^{2}K_{2}^{2}\Vert\hat{\theta}_{\ast}\Vert^{2}+2\eta K}{1-(1-\eta m)}
\nonumber
\\
&\leq
1+\Vert\theta-\hat{\theta}_{\ast}\Vert^{2}
+2-\frac{\eta}{m}K_{1}^{2}-\frac{56\eta}{m}D^{2}K_{2}^{2}
+\frac{64\eta}{m}D^{2}K_{2}^{2}\Vert\hat{\theta}_{\ast}\Vert^{2}+\frac{2K}{m},
\end{align}
so that
\begin{align}
\mathbb{E}\Vert\hat{\theta}_{k}\Vert^{2}
&\leq
2\mathbb{E}\left\Vert\hat{\theta}_{k}-\hat{\theta}_{\ast}\right\Vert^{2}
+2\Vert\hat{\theta}_{\ast}\Vert^{2}
\nonumber
\\
&\leq
2\left\Vert\theta-\hat{\theta}_{\ast}\right\Vert^{2}
+4-\frac{2\eta}{m}K_{1}^{2}-\frac{112\eta}{m}D^{2}K_{2}^{2}
+\frac{128\eta}{m}D^{2}K_{2}^{2}\Vert\hat{\theta}_{\ast}\Vert^{2}
+\frac{4K}{m}+2\Vert\hat{\theta}_{\ast}\Vert^{2}
\nonumber
\\
&\leq 4\Vert\theta\Vert^{2}
+4\left(\frac{E+\sqrt{E^{2}+4mK}}{2m}\right)^{2}
+4-\frac{2\eta}{m}K_{1}^{2}-\frac{112\eta}{m}D^{2}K_{2}^{2}
\nonumber
\\
&\qquad
+\frac{128\eta}{m}D^{2}K_{2}^{2}
\left(\frac{E+\sqrt{E^{2}+4mK}}{2m}\right)^{2}
+\frac{4K}{m}+2\left(\frac{E+\sqrt{E^{2}+4mK}}{2m}\right)^{2}
=:B,\label{hat:B:bound}
\end{align}
where we applied Lemma~\ref{lem:nonconvex:minimizer}.
Similarly, we can show that
\begin{align}
\mathbb{E}\Vert\theta_{k}\Vert^{2}
\leq B.\label{B:bound}
\end{align}

Since $\theta_{0}=\hat{\theta}_{0}=\theta$, it follows from \eqref{key:iterate}, \eqref{hat:B:bound} and \eqref{B:bound} that
\begin{align}
&\mathcal{W}_{2}^{2}(\nu_{k},\hat{\nu}_{k})
\nonumber
\\
&\leq\mathbb{E}\left\Vert\theta_{k}-\hat{\theta}_{k}\right\Vert^{2}
\nonumber
\\
&\leq
\left(1-(1-\eta m)^{k}\right)\left(\frac{4D^{2}K_{2}^{2}\eta}{bnm}(8B+2)
+\frac{4K_{2}D}{nm}(1+K_{1}\eta)\left(1+\frac{3}{2}B+\frac{7}{2}B\right)+\frac{2K}{m}\right),
\end{align}
provided that $\eta<\min\left\{\frac{m}{K_{1}^{2}},\frac{1}{m},\frac{m}{K_{1}^{2}+64D^{2}K_{2}^{2}}\right\}$.
This completes the proof. 
\end{proof}

%%%%%%%%%%%%%%%%%%%%%%%%%%%%%%%%%%%
\section{Proofs of Convex Case with Additional Geometric Structure}\label{sec:proof:convex}

In the following technical lemma, we show that the $p$-th moment
of $\theta_{k}$ and $\hat{\theta}_{k}$ can be bounded in the following sense.

\begin{lemma}\label{lem:sub}
Let $\theta_{0}=\hat{\theta}_{0}=\theta$. 
Suppose Assumption~\ref{assump:sub} and Assumption~\ref{assump:Holder} hold and $\eta\leq\frac{\mu}{K_{1}^{2}+2^{p+4}D^{2}K_{2}^{2}}$. 
Then, we have
\begin{align}
&\frac{1}{k}\sum_{i=1}^{k}\mathbb{E}\left\Vert\theta_{i-1}-\theta_{\ast}\right\Vert^{p}
\leq\frac{\left\Vert\theta-\theta_{\ast}\right\Vert^{2}}{k\eta\mu}
+\frac{8\eta}{\mu}D^{2}K_{2}^{2}\left(2^{p+1}\Vert\theta_{\ast}\Vert^{p}+5\right),
\\
&\frac{1}{k}\sum_{i=1}^{k}\mathbb{E}\left\Vert\hat{\theta}_{i-1}-\hat{\theta}_{\ast}\right\Vert^{p}
\leq\frac{\left\Vert\theta-\hat{\theta}_{\ast}\right\Vert^{2}}{k\eta\mu}
+\frac{8\eta}{\mu}D^{2}K_{2}^{2}\left(2^{p+1}\Vert\hat{\theta}_{\ast}\Vert^{p}+5\right),
\end{align}
where $\hat{\theta}_{\ast}$ is the minimizer
of $\frac{1}{n}\sum_{i=1}^{n}\nabla f(\theta,\hat{x}_{i})$
and $\theta_{\ast}$ is the minimizer
of $\frac{1}{n}\sum_{i=1}^{n}\nabla f(\theta,x_{i})$. 
\end{lemma}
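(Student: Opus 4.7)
The plan is to derive a one-step energy recursion for $\mathbb{E}\Vert\theta_k - \theta_\ast\Vert^2$ that exposes a $-c\,\mathbb{E}\Vert\theta_{k-1} - \theta_\ast\Vert^p$ term on the right-hand side, telescope it over $i = 1, \ldots, k$, and solve for the desired average. To this end, I would first write the SGD recursion as
\[
\theta_k - \theta_\ast = \bigl(\theta_{k-1} - \theta_\ast - \eta\nabla\hat{F}(\theta_{k-1}, X_n)\bigr) + \eta Z_k,
\]
where $Z_k := \nabla\hat{F}(\theta_{k-1}, X_n) - \frac{1}{b}\sum_{i\in\Omega_k}\nabla f(\theta_{k-1}, x_i)$ satisfies $\mathbb{E}[Z_k \mid \theta_{k-1}] = 0$ by uniform minibatch sampling, so after taking the conditional expectation of the squared norm the cross term vanishes. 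Expanding the first summand, using $\nabla\hat{F}(\theta_\ast, X_n) = 0$ together with Assumption~\ref{assump:sub}, gives the key lower bound $\langle \theta_{k-1} - \theta_\ast, \nabla\hat{F}(\theta_{k-1}, X_n)\rangle \geq \mu\Vert\theta_{k-1} - \theta_\ast\Vert^p$, while Assumption~\ref{assump:Holder} (applied with $x = \hat{x}$) yields $\Vert\nabla\hat{F}(\theta_{k-1}, X_n)\Vert^2 \leq K_1^2\Vert\theta_{k-1} - \theta_\ast\Vert^p$.

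The main technical step, and the principal obstacle, is bounding the variance $\mathbb{E}[\Vert Z_k\Vert^2 \mid \theta_{k-1}]$ so that only terms of order $\Vert\theta_{k-1} - \theta_\ast\Vert^p$, $\Vert\theta_\ast\Vert^p$, and a pure constant survive. Following the pattern of Lemma~\ref{lem:nonconvex:2}, the key pointwise estimate is $\Vert\nabla f(\theta, x_i) - \nabla f(\theta, x_j)\Vert \leq 2DK_2(2\Vert\theta\Vert^{p-1}+1)$ from Assumption~\ref{assump:Holder}, which gives $\Vert Z_k\Vert^2 \leq 4D^2K_2^2(2\Vert\theta_{k-1}\Vert^{p-1}+1)^2$. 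To convert this into a bound linear in $\Vert\theta_{k-1} - \theta_\ast\Vert^p$, I would chain three elementary inequalities that crucially exploit $p \in (1,2)$: the split $(a+b)^2 \leq 2a^2 + 2b^2$, the domination $\Vert\theta\Vert^{2(p-1)} \leq 1 + \Vert\theta\Vert^p$ (valid because $2(p-1) \leq p$), and the convexity inequality $\Vert\theta\Vert^p \leq 2^{p-1}(\Vert\theta-\theta_\ast\Vert^p + \Vert\theta_\ast\Vert^p)$. Chained together, these produce
\[
\mathbb{E}[\Vert Z_k\Vert^2 \mid \theta_{k-1}] \leq 2^{p+4}D^2K_2^2\,\Vert\theta_{k-1} - \theta_\ast\Vert^p + 2^{p+4}D^2K_2^2\,\Vert\theta_\ast\Vert^p + 40\,D^2K_2^2,
\]
whose constants are precisely calibrated so that the step-size restriction $\eta \leq \mu/(K_1^2 + 2^{p+4}D^2K_2^2)$ absorbs both the $K_1^2$ and $2^{p+4}D^2K_2^2$ contributions and leaves a net coefficient $-\eta\mu$ in front of $\mathbb{E}\Vert\theta_{k-1} - \theta_\ast\Vert^p$.

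Assembling these ingredients yields the clean one-step inequality
\[
\mathbb{E}\Vert\theta_k - \theta_\ast\Vert^2 \leq \mathbb{E}\Vert\theta_{k-1} - \theta_\ast\Vert^2 - \eta\mu\,\mathbb{E}\Vert\theta_{k-1} - \theta_\ast\Vert^p + \eta^2 D^2K_2^2\bigl(40 + 2^{p+4}\Vert\theta_\ast\Vert^p\bigr).
\]
Summing from $i = 1$ to $k$, discarding the nonnegative term $\mathbb{E}\Vert\theta_k - \theta_\ast\Vert^2$, using $\theta_0 = \theta$, and dividing by $k\eta\mu$ gives the first claim, after noting that $\frac{\eta}{\mu}D^2K_2^2 \cdot 40 = \frac{8\eta}{\mu}D^2K_2^2 \cdot 5$ and $\frac{\eta}{\mu}D^2K_2^2 \cdot 2^{p+4} = \frac{8\eta}{\mu}D^2K_2^2 \cdot 2^{p+1}$. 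The second claim, for $\hat{\theta}_k$ with respect to $\hat{\theta}_\ast$ and $\hat{X}_n$, follows from an identical derivation, since $\sup_i \Vert\hat{x}_i\Vert \leq D$ as well and every assumption used is symmetric in the two datasets.
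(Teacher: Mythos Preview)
Your proposal is correct and follows essentially the same route as the paper's own proof: the same unbiased decomposition $\theta_k-\theta_\ast=(\theta_{k-1}-\theta_\ast-\eta\nabla\hat F)+\eta Z_k$, the same use of Assumption~\ref{assump:sub} for the cross term and Assumption~\ref{assump:Holder} for the squared-gradient term, and the identical chain of elementary inequalities $(2a+1)^2\leq 8a^2+2$, $\Vert\theta\Vert^{2(p-1)}\leq \Vert\theta\Vert^{p}+1$, and $\Vert\theta\Vert^{p}\leq 2^{p-1}(\Vert\theta-\theta_\ast\Vert^{p}+\Vert\theta_\ast\Vert^{p})$ to bound the variance, yielding the same one-step recursion and telescoping argument with matching constants. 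The only cosmetic difference is that the paper carries out the computation for the $\hat\theta_k$ sequence first and then says ``similarly'' for $\theta_k$, whereas you do the reverse.
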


% \subsection{Proof of Lemma~\ref{lem:sub}}

\begin{proof}
First, we recall that
\begin{equation}
\hat{\theta}_{k}=\hat{\theta}_{k-1}-\frac{\eta}{b}\sum_{i\in\Omega_{k}}\nabla f\left(\hat{\theta}_{k-1},\hat{x}_{i}\right).
\end{equation}
Therefore, we have
\begin{align}
\hat{\theta}_{k}
=\hat{\theta}_{k-1}-\frac{\eta}{n}\sum_{i=1}^{n}\nabla f\left(\hat{\theta}_{k-1},\hat{x}_{i}\right)
+\eta\left(\frac{1}{n}\sum_{i=1}^{n}\nabla f\left(\hat{\theta}_{k-1},\hat{x}_{i}\right)-\frac{1}{b}\sum_{i\in\Omega_{k}}\nabla f\left(\hat{\theta}_{k-1},\hat{x}_{i}\right)\right).
\end{align}
Moreover, we have
\begin{equation}
\mathbb{E}\left[\frac{1}{b}\sum_{i\in\Omega_{k}}\nabla f\left(\hat{\theta}_{k-1},\hat{x}_{i}\right)\Big|\hat{\theta}_{k-1}\right]
=\frac{1}{n}\sum_{i=1}^{n}\nabla f\left(\hat{\theta}_{k-1},\hat{x}_{i}\right).
\end{equation}
This implies that
\begin{align}
\mathbb{E}\left\Vert\hat{\theta}_{k}-\hat{\theta}_{\ast}\right\Vert^{2}
&=\mathbb{E}\left\Vert\hat{\theta}_{k-1}-\hat{\theta}_{\ast}+\frac{\eta}{n}\sum_{i=1}^{n}\nabla f\left(\hat{\theta}_{k-1},\hat{x}_{i}\right)\right\Vert^{2}
\nonumber
\\
&\qquad+\eta^{2}\mathbb{E}\left\Vert\frac{1}{n}\sum_{i=1}^{n}\nabla f\left(\hat{\theta}_{k-1},\hat{x}_{i}\right)-\frac{1}{b}\sum_{i\in\Omega_{k}}\nabla f\left(\hat{\theta}_{k-1},\hat{x}_{i}\right)\right\Vert^{2}.\label{lem:ineq:sub:1}
\end{align}
We can compute that
\begin{align}
&\left\Vert\hat{\theta}_{k-1}-\hat{\theta}_{\ast}+\frac{\eta}{n}\sum_{i=1}^{n}\nabla f\left(\hat{\theta}_{k-1},\hat{x}_{i}\right)\right\Vert^{2}
\nonumber
\\
&=\left\Vert\hat{\theta}_{k-1}-\hat{\theta}_{\ast}+\frac{\eta}{n}\sum_{i=1}^{n}\left(\nabla f\left(\hat{\theta}_{k-1},\hat{x}_{i}\right)-\nabla f\left(\hat{\theta}_{\ast},\hat{x}_{i}\right)\right)\right\Vert^{2}
\nonumber
\\
&\leq
\left\Vert\hat{\theta}_{k-1}-\hat{\theta}_{\ast}\right\Vert^{2}
-2\eta\mu\left\Vert\hat{\theta}_{k-1}-\hat{\theta}_{\ast}\right\Vert^{p}
+\frac{\eta^{2}}{n^{2}}\left\Vert
\sum_{i=1}^{n}\left(\nabla f\left(\hat{\theta}_{k-1},\hat{x}_{i}\right)-\nabla f\left(\hat{\theta}_{\ast},\hat{x}_{i}\right)\right)
\right\Vert^{2}
\nonumber
\\
&\leq
\left\Vert\hat{\theta}_{k-1}-\hat{\theta}_{\ast}\right\Vert^{2}
-2\eta\mu\left\Vert\hat{\theta}_{k-1}-\hat{\theta}_{\ast}\right\Vert^{p}
+\eta^{2}K_{1}^{2}\left\Vert\hat{\theta}_{k-1}-\hat{\theta}_{\ast}\right\Vert^{p}.
\label{lem:ineq:sub:2}
\end{align}
Moreover, we can compute that
\begin{align}
&\mathbb{E}\left\Vert\frac{1}{n}\sum_{i=1}^{n}\nabla f\left(\hat{\theta}_{k-1},\hat{x}_{i}\right)-\frac{1}{b}\sum_{i\in\Omega_{k}}\nabla f\left(\hat{\theta}_{k-1},\hat{x}_{i}\right)\right\Vert^{2}
\nonumber
\\
&=\mathbb{E}\left\Vert\frac{1}{b}\sum_{i\in\Omega_{k}}\left(\frac{1}{n}\sum_{j=1}^{n}\nabla f\left(\hat{\theta}_{k-1},\hat{x}_{j}\right)-\nabla f\left(\hat{\theta}_{k-1},\hat{x}_{i}\right)\right)\right\Vert^{2}
\nonumber
\\
&=\mathbb{E}\left(\frac{1}{b}\sum_{i\in\Omega_{k}}K_{2}\frac{1}{n}\sum_{j=1}^{n}\Vert\hat{x}_{i}-\hat{x}_{j}\Vert(2\Vert\hat{\theta}_{k-1}\Vert^{p-1}+1)\right)^{2}
\nonumber
\\
&\leq
4D^{2}K_{2}^{2}\mathbb{E}\left[(2\Vert\hat{\theta}_{k-1}\Vert^{p-1}+1)^{2}\right]
\nonumber
\\
&\leq 8D^{2}K_{2}^{2}\left(4\mathbb{E}\left[\Vert\hat{\theta}_{k-1}\Vert^{2(p-1)}\right]+1\right)
\nonumber
\\
&\leq 8D^{2}K_{2}^{2}\left(4\mathbb{E}\left[\Vert\hat{\theta}_{k-1}\Vert^{p}\right]+5\right)
\nonumber
\\
&\leq 8D^{2}K_{2}^{2}\left(2^{p+1}\mathbb{E}\Vert\hat{\theta}_{k-1}-\hat{\theta}_{\ast}\Vert^{p}+2^{p+1}\Vert\hat{\theta}_{\ast}\Vert^{p}+5\right).\label{lem:ineq:sub:3}
\end{align}
Hence, by applying \eqref{lem:ineq:sub:2} and \eqref{lem:ineq:sub:3} to \eqref{lem:ineq:sub:1}, we conclude that
\begin{align}
\mathbb{E}\left\Vert\hat{\theta}_{k}-\hat{\theta}_{\ast}\right\Vert^{2}
&\leq
\mathbb{E}\left\Vert\hat{\theta}_{k-1}-\hat{\theta}_{\ast}\right\Vert^{2}
-2\eta\mu\mathbb{E}\left\Vert\hat{\theta}_{k-1}-\hat{\theta}_{\ast}\right\Vert^{p}
+\eta^{2}K_{1}^{2}\mathbb{E}\left\Vert\hat{\theta}_{k-1}-\hat{\theta}_{\ast}\right\Vert^{p}
\nonumber
\\
&\qquad\qquad
+8\eta^{2}D^{2}K_{2}^{2}\left(2^{p+1}\mathbb{E}\left\Vert\hat{\theta}_{k-1}-\hat{\theta}_{\ast}\right\Vert^{2}+2^{p+1}\Vert\hat{\theta}_{\ast}\Vert^{2}+5\right)
\nonumber
\\
&\leq\mathbb{E}\left\Vert\hat{\theta}_{k-1}-\hat{\theta}_{\ast}\right\Vert^{2}
-\eta\mu\mathbb{E}\left\Vert\hat{\theta}_{k-1}-\hat{\theta}_{\ast}\right\Vert^{p}
+8\eta^{2}D^{2}K_{2}^{2}\left(2^{p+1}\Vert\hat{\theta}_{\ast}\Vert^{p}+5\right),
\end{align}
provided that $\eta\leq\frac{\mu}{K_{1}^{2}+2^{p+4}D^{2}K_{2}^{2}}$. 
This implies that 
\begin{equation}
\mathbb{E}\left\Vert\hat{\theta}_{k-1}-\hat{\theta}_{\ast}\right\Vert^{p}
\leq\frac{\mathbb{E}\left\Vert\hat{\theta}_{k-1}-\hat{\theta}_{\ast}\right\Vert^{2}-\mathbb{E}\left\Vert\hat{\theta}_{k}-\hat{\theta}_{\ast}\right\Vert^{2}}{\eta\mu}
+\frac{8\eta}{\mu}D^{2}K_{2}^{2}\left(2^{p+1}\Vert\hat{\theta}_{\ast}\Vert^{p}+5\right),
\end{equation}
and hence
\begin{align}
\frac{1}{k}\sum_{i=1}^{k}\mathbb{E}\left\Vert\hat{\theta}_{i-1}-\hat{\theta}_{\ast}\right\Vert^{p}
&\leq\frac{\left\Vert\hat{\theta}_{0}-\hat{\theta}_{\ast}\right\Vert^{2}-\mathbb{E}\left\Vert\hat{\theta}_{k}-\hat{\theta}_{\ast}\right\Vert^{2}}{k\eta\mu}
+\frac{8\eta}{\mu}D^{2}K_{2}^{2}\left(2^{p+1}\Vert\hat{\theta}_{\ast}\Vert^{p}+5\right)
\nonumber
\\
&\leq\frac{\left\Vert\theta-\hat{\theta}_{\ast}\right\Vert^{2}}{k\eta\mu}
+\frac{8\eta}{\mu}D^{2}K_{2}^{2}\left(2^{p+1}\Vert\hat{\theta}_{\ast}\Vert^{p}+5\right).
\end{align}
Similarly, we can show that
\begin{align}
\frac{1}{k}\sum_{i=1}^{k}\mathbb{E}\left\Vert\theta_{i-1}-\theta_{\ast}\right\Vert^{p}
\leq\frac{\left\Vert\theta-\theta_{\ast}\right\Vert^{2}}{k\eta\mu}
+\frac{8\eta}{\mu}D^{2}K_{2}^{2}\left(2^{p+1}\Vert\theta_{\ast}\Vert^{p}+5\right).
\end{align}
This completes the proof.
\end{proof}

%%%%%%%%%%%%%%%%%%%%%%%%%%%%
% \subsection{Proof of Lemma~\ref{lem:sub:minimizer}}

Next, let us provide a technical lemma
that upper bounds the norm
of $\theta_{\ast}$ and $\hat{\theta}_{\ast}$, 
which are the minimizers of 
$\hat{F}(\theta,X_{n}):=\frac{1}{n}\sum_{i=1}^{n}\nabla f(\theta,x_{i})$
and $\hat{F}(\theta,\hat{X}_{n}):=\frac{1}{n}\sum_{i=1}^{n}\nabla f(\theta,\hat{x}_{i})$ respectively.

\begin{lemma}\label{lem:sub:minimizer}
Under Assumption~\ref{assump:sub}, 
we have 
\begin{equation*}
\Vert\theta_{\ast}\Vert\leq\frac{1}{\mu^{\frac{1}{p-1}}}\sup_{x\in\mathcal{X}}\Vert\nabla f(0,x)\Vert^{\frac{1}{p-1}},
\end{equation*}
and
\begin{equation*}
\Vert\hat{\theta}_{\ast}\Vert\leq\frac{1}{\mu^{\frac{1}{p-1}}}\sup_{x\in\mathcal{X}}\Vert\nabla f(0,x)\Vert^{\frac{1}{p-1}}.
\end{equation*}
\end{lemma}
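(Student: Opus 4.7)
The plan is to mirror the strategy used in Lemma~\ref{lem:str:cvx:minimizer} and Lemma~\ref{lem:nonconvex:minimizer}, but with the quadratic lower bound on the inner product replaced by the $p$-th power lower bound coming from Assumption~\ref{assump:sub}. Since $\theta_{\ast}$ minimizes $\hat{F}(\theta,X_n)=\frac{1}{n}\sum_{i=1}^n f(\theta,x_i)$, the first-order optimality condition gives $\nabla\hat{F}(\theta_{\ast},X_n)=0$. This will let me rewrite $-\langle\nabla\hat{F}(0,X_n),\theta_{\ast}\rangle$ as the telescoping quantity $\frac{1}{n}\sum_{i=1}^n\langle\nabla f(0,x_i)-\nabla f(\theta_{\ast},x_i),0-\theta_{\ast}\rangle$.

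Applying Assumption~\ref{assump:sub} termwise (with $\theta_1=0$, $\theta_2=\theta_{\ast}$) and averaging, I obtain the chain
\[
\mu\|\theta_{\ast}\|^{p}\leq\frac{1}{n}\sum_{i=1}^{n}\langle \nabla f(0,x_i)-\nabla f(\theta_{\ast},x_i),0-\theta_{\ast}\rangle=-\langle\nabla\hat{F}(0,X_n),\theta_{\ast}\rangle\leq \sup_{x\in\mathcal{X}}\|\nabla f(0,x)\|\cdot\|\theta_{\ast}\|,
\]
where the final bound is Cauchy--Schwarz together with the triangle inequality for the average. Dividing through by $\|\theta_{\ast}\|$ (the bound is trivially true when $\theta_{\ast}=0$) yields $\|\theta_{\ast}\|^{p-1}\leq\mu^{-1}\sup_{x\in\mathcal{X}}\|\nabla f(0,x)\|$, and since Assumption~\ref{assump:sub} requires $p\in(1,2)$ so that $p-1>0$, taking $(p-1)$-th roots gives the claimed bound on $\|\theta_{\ast}\|$. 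The argument for $\hat{\theta}_{\ast}$ is identical, replacing $X_n$ by $\hat{X}_n$ and $x_i$ by $\hat{x}_i$.

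There is essentially no obstacle: this is a direct $p$-adic generalization of the earlier $p=2$ case, and the only new feature is that solving the inequality for $\|\theta_{\ast}\|$ requires inverting the $(p-1)$-th power rather than a square, which is legitimate precisely because $p>1$.
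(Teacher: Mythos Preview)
Your proposal is correct and follows essentially the same route as the paper: use the first-order optimality condition $\nabla\hat{F}(\theta_{\ast},X_n)=0$, apply Assumption~\ref{assump:sub} termwise to obtain $\mu\|\theta_{\ast}\|^{p}\leq -\langle\nabla\hat{F}(0,X_n),\theta_{\ast}\rangle$, bound the right-hand side by Cauchy--Schwarz, and solve for $\|\theta_{\ast}\|$. Your explicit handling of the trivial case $\theta_{\ast}=0$ is a minor refinement over the paper's presentation.
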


\begin{proof}
Under Assumption~\ref{assump:sub}, we have
\begin{align}
\left\langle\nabla\hat{F}(0,X_{n})-\nabla\hat{F}(\theta_{\ast},X_{n}),0-\theta_{\ast}\right\rangle
&=-\frac{1}{n}\sum_{i=1}^{n}\langle\nabla f(0,x_{i}),\theta_{\ast}\rangle
\nonumber
\\
&=\frac{1}{n}\sum_{i=1}^{n}\langle\nabla f(0,x_{i})-\nabla f(\theta_{\ast},x_{i}),0-\theta_{\ast}\rangle\geq\mu\Vert\theta_{\ast}\Vert^{p},  
\end{align}
where $p\in(1,2)$, which implies that
\begin{equation}
\mu\Vert\theta_{\ast}\Vert^{p}
\leq\sup_{x\in\mathcal{X}}\Vert\nabla f(0,x)\Vert\cdot\Vert\theta_{\ast}\Vert,
\end{equation}
which yields that 
\begin{equation*}
\Vert\theta_{\ast}\Vert\leq\frac{1}{\mu^{\frac{1}{p-1}}}\sup_{x\in\mathcal{X}}\Vert\nabla f(0,x)\Vert^{\frac{1}{p-1}}.
\end{equation*}
Similarly, one can show that
\begin{equation*}
\Vert\hat{\theta}_{\ast}\Vert\leq\frac{1}{\mu^{\frac{1}{p-1}}}\sup_{x\in\mathcal{X}}\Vert\nabla f(0,x)\Vert^{\frac{1}{p-1}}.
\end{equation*}
This completes the proof.
\end{proof}

%%%%%%%%%%%%%%%%%%%%%%%%%%%%%%%%%%%%%%%%%%%%%%%%%%%%%%%%
% \subsection{Proof of Theorem~\ref{thm:sub}}

Now, we are able to state the main result for the Wasserstein algorithmic stability.

\begin{theorem}\label{thm:sub}
Let $\theta_{0}=\hat{\theta}_{0}=\theta$. 
Suppose Assumption~\ref{assump:sub} and Assumption~\ref{assump:Holder} hold and $\eta\leq\frac{\mu}{K_{1}^{2}+2^{p+4}D^{2}K_{2}^{2}}$
and $\sup_{x\in \mathcal{X}}\|x\| \leq D$ for some $D < \infty$ and $\sup_{x\in\mathcal{X}}\Vert\nabla f(0,x)\Vert\leq E$ for some $E<\infty$. 
Let $\nu_{k}$ and $\hat{\nu}_{k}$ denote the distributions
of $\theta_{k}$ and $\hat{\theta}_{k}$ respectively. 
Then, we have
\begin{align}
&\frac{1}{k}\sum_{i=1}^{k}\mathcal{W}_{p}^{p}(\nu_{i-1},\hat{\nu}_{i-1})
\nonumber
\\
&\leq
\frac{4D^{2}K_{2}^{2}\eta}{bn\mu}\cdot 2^{p+2}\cdot\left(\frac{2\theta^{2}+2(E/\mu)^{\frac{2}{p-1}}}{k\eta\mu}
+\frac{8\eta}{\mu}D^{2}K_{2}^{2}\left(2^{p+1}(E/\mu)^{\frac{p}{p-1}}+5\right)\right)
\nonumber
\\
&\qquad\qquad\qquad
+\frac{4D^{2}K_{2}^{2}\eta}{bn\mu}\cdot 2^{p+2}\left(2^{p+2}(E/\mu)^{\frac{p}{p-1}}+10\right)
\nonumber
\\
&\quad
+\frac{4DK_{2}}{n\mu}(1+K_{1}\eta)\cdot 3\cdot 2^{p-1}\left(\frac{2\theta^{2}+2(E/\mu)^{\frac{2}{p-1}}}{k\eta\mu}
+\frac{8\eta}{\mu}D^{2}K_{2}^{2}\left(2^{p+1}(E/\mu)^{\frac{p}{p-1}}+5\right)\right)
\nonumber
\\
&\qquad
+\frac{4DK_{2}}{n\mu}(1+K_{1}\eta)\cdot 7\cdot 2^{p-1}\left(\frac{2\theta^{2}+2(E/\mu)^{\frac{2}{p-1}}}{k\eta\mu}
+\frac{8\eta}{\mu}D^{2}K_{2}^{2}\left(2^{p+1}(E/\mu)^{\frac{p}{p-1}}+5\right)\right)
\nonumber
\\
&\qquad\quad
+\frac{4DK_{2}}{n\mu}(1+K_{1}\eta)\left(10\cdot 2^{p-1}(E/\mu)^{\frac{p}{p-1}}+5\right).
\end{align}
\end{theorem}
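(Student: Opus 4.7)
The plan is to adapt the telescoping argument of Theorem~\ref{thm:str:cvx:2} to the Hölder setting, replacing its $L^2$-contraction (which used full Lipschitz smoothness and strong convexity) by an $L^2$ descent inequality that harvests only a $p$-th order contraction from Assumption~\ref{assump:sub} and Assumption~\ref{assump:Holder}, and then converting this descent into a Cesàro-averaged bound on the $p$-th moment of $\theta_{k-1}-\hat{\theta}_{k-1}$. First, I decompose
\begin{equation*}
\theta_k - \hat{\theta}_k = \bigl(\theta_{k-1} - \hat{\theta}_{k-1}\bigr) - \frac{\eta}{b}\sum_{i\in\Omega_k}\bigl[\nabla f(\theta_{k-1},x_i) - \nabla f(\hat{\theta}_{k-1},x_i)\bigr] + \frac{\eta}{b}\mathcal{E}_k,
\end{equation*}
with $\mathcal{E}_k := \sum_{i\in\Omega_k}\bigl[\nabla f(\hat{\theta}_{k-1},\hat{x}_i) - \nabla f(\hat{\theta}_{k-1},x_i)\bigr]$ isolating the data-set discrepancy.

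Expanding $\|\theta_k - \hat{\theta}_k\|^2$, the cross term between $\theta_{k-1}-\hat{\theta}_{k-1}$ and the bracketed gradient difference contributes $-2\eta\mu\|\theta_{k-1}-\hat{\theta}_{k-1}\|^p$ by Assumption~\ref{assump:sub}, while the squared norm of that bracketed difference is bounded by $\eta^2 K_1^2\|\theta_{k-1}-\hat{\theta}_{k-1}\|^p$ via Assumption~\ref{assump:Holder}. The step-size restriction $\eta\leq\mu/(K_1^2+2^{p+4}D^2K_2^2)$ absorbs the $K_1^2$ contribution into the strong-convexity contribution, leaving a descent of $-\eta\mu\|\theta_{k-1}-\hat{\theta}_{k-1}\|^p$. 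The pieces involving $\mathcal{E}_k$ are controlled by noting that $\mathcal{E}_k$ vanishes unless $\Omega_k$ contains the differing index (an event of probability $b/n$), together with the growth bound $\|\mathcal{E}_k\|\leq 2DK_2(2\|\hat{\theta}_{k-1}\|^{p-1}+1)$ from Assumption~\ref{assump:Holder} and $\sup_x\|x\|\leq D$. The key algebraic device is the inequality $x^{2(p-1)}\leq x^p+1$, valid for $p\in(1,2)$, which turns squared $(p-1)$-power growth into a linear dependence on $\|\hat{\theta}_{k-1}\|^p$.

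Combining these pieces and taking expectations gives
\begin{equation*}
\eta\mu\,\mathbb{E}\|\theta_{k-1}-\hat{\theta}_{k-1}\|^p \leq \mathbb{E}\|\theta_{k-1}-\hat{\theta}_{k-1}\|^2 - \mathbb{E}\|\theta_k-\hat{\theta}_k\|^2 + \mathcal{R}_k,
\end{equation*}
where the residual $\mathcal{R}_k$ collects the $\|\mathcal{E}_k\|^2$ contribution (of size $\tfrac{4D^2K_2^2\eta^2}{bn}$ times moments of $\|\hat{\theta}_{k-1}\|$) and the cross term $\langle A - B,\tfrac{\eta}{b}\mathcal{E}_k\rangle$ (bounded via Cauchy--Schwarz using $\|A-B\|\leq(1+K_1\eta)$ times mixed powers of $\|\theta_{k-1}-\hat{\theta}_{k-1}\|$). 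Summing from $k=1$ to $K$, the squared norms telescope; since $\theta_0=\hat{\theta}_0=\theta$ and the terminal term is non-negative, dividing by $K\eta\mu$ yields $\tfrac{1}{K}\sum_k\mathbb{E}\|\theta_{k-1}-\hat{\theta}_{k-1}\|^p\leq\tfrac{1}{K\eta\mu}\sum_k\mathcal{R}_k$. Moments of $\|\hat{\theta}_{k-1}\|^p$ and $\|\theta_{k-1}\|^p$ inside $\mathcal{R}_k$ are fed in from Lemma~\ref{lem:sub}, and the uniform bounds $\|\theta_*\|,\|\hat{\theta}_*\|\leq(E/\mu)^{1/(p-1)}$ from Lemma~\ref{lem:sub:minimizer} — combined with $\|\hat{\theta}\|^p\leq 2^{p-1}(\|\hat{\theta}-\hat{\theta}_*\|^p+\|\hat{\theta}_*\|^p)$ — produce the explicit constants in the statement. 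A synchronous coupling, $\mathcal{W}_p^p(\nu_{k-1},\hat{\nu}_{k-1})\leq\mathbb{E}\|\theta_{k-1}-\hat{\theta}_{k-1}\|^p$, finishes the conversion.

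The main obstacle will be taming the cross term $\langle A-B,\tfrac{\eta}{b}\mathcal{E}_k\rangle$: because $\nabla f$ is only Hölder-continuous in $\theta$, the factor $\|A-B\|$ is bounded only by $\|\theta_{k-1}-\hat{\theta}_{k-1}\|+\eta K_1\|\theta_{k-1}-\hat{\theta}_{k-1}\|^{p/2}$, so the cross term produces mixed products of $\|\theta_{k-1}-\hat{\theta}_{k-1}\|^{p/2}$ (or $\|\theta_{k-1}-\hat{\theta}_{k-1}\|$) with $\|\hat{\theta}_{k-1}\|^{p-1}$ that cannot be absorbed directly into the descent term $\eta\mu\|\theta_{k-1}-\hat{\theta}_{k-1}\|^p$. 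Splitting them by Young-type inequalities of the shape $\|\theta_{k-1}-\hat{\theta}_{k-1}\|^{p/2}\|\hat{\theta}_{k-1}\|^{p-1}\leq\tfrac{1}{2}\|\theta_{k-1}-\hat{\theta}_{k-1}\|^p+\tfrac{1}{2}\|\hat{\theta}_{k-1}\|^{2(p-1)}$, followed again by $x^{2(p-1)}\leq x^p+1$, routes the first halves into the descent slot (or into $\mathcal{R}_k$) and reduces the second halves to the already bounded averaged $p$-moment from Lemma~\ref{lem:sub}, closing the argument with constants matching those in the stated bound.
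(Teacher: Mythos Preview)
Your proposal is correct and follows the paper's strategy almost exactly: the same decomposition isolating $\mathcal{E}_k$, the same $L^2$ descent inequality harvesting a $p$-th order contraction from Assumptions~\ref{assump:sub} and~\ref{assump:Holder}, the same telescoping sum, and the same appeal to Lemma~\ref{lem:sub} and Lemma~\ref{lem:sub:minimizer} followed by the synchronous-coupling bound on $\mathcal{W}_p^p$.

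The one place you diverge from the paper is the treatment of the cross term $\langle A-B,\tfrac{\eta}{b}\mathcal{E}_k\rangle$. The paper does \emph{not} use a Young-type splitting to route $\|\theta_{k-1}-\hat{\theta}_{k-1}\|^p$ pieces back into the descent slot. Instead it first uses $\|\theta_{k-1}-\hat{\theta}_{k-1}\|^{p/2}\leq\|\theta_{k-1}-\hat{\theta}_{k-1}\|+1$ (valid since $p/2<1$), then the triangle inequality $\|\theta_{k-1}-\hat{\theta}_{k-1}\|\leq\|\theta_{k-1}\|+\|\hat{\theta}_{k-1}\|$, so the cross term reduces to a constant times $\mathbb{E}\bigl[(\|\theta_{k-1}\|+\|\hat{\theta}_{k-1}\|+1)(2\|\hat{\theta}_{k-1}\|^{p-1}+1)\bigr]$, and finally applies the elementary inequality
\[
(x+y+1)(2y^{p-1}+1)\leq 3x^{p}+7y^{p}+5.
\]
This is precisely what produces the coefficients $3\cdot 2^{p-1}$, $7\cdot 2^{p-1}$, and $5$ in the stated bound, and it is also why both moment bounds in Lemma~\ref{lem:sub} (for the $\theta$-chain \emph{and} the $\hat{\theta}$-chain) are needed. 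Your Young-based route is a perfectly valid alternative and would close the argument, but it would yield different constants (and would only require the $\hat{\theta}$-chain moment bound), so the final claim that it gives ``constants matching those in the stated bound'' is not accurate without switching to the paper's elementary inequality.
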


\begin{proof}
Let us recall that $\theta_{0}=\hat{\theta}_{0}=\theta$ and for any $k\in\mathbb{N}$,
\begin{align}
&\theta_{k}=\theta_{k-1}-\frac{\eta}{b}\sum_{i\in\Omega_{k}}\nabla f(\theta_{k-1},x_{i}),
\\
&\hat{\theta}_{k}=\hat{\theta}_{k-1}-\frac{\eta}{b}\sum_{i\in\Omega_{k}}\nabla f\left(\hat{\theta}_{k-1},\hat{x}_{i}\right).
\end{align}
Thus it follows that
\begin{align}
\theta_{k}-\hat{\theta}_{k}&=\theta_{k-1}-\hat{\theta}_{k-1}
-\frac{\eta}{b}\sum_{i\in\Omega_{k}}\left(\nabla f(\theta_{k-1},x_{i})-\nabla f\left(\hat{\theta}_{k-1},x_{i}\right)\right)
+\frac{\eta}{b}\mathcal{E}_{k},
\end{align}
where
\begin{equation}
\mathcal{E}_{k}:=\sum_{i\in\Omega_{k}}\left(\nabla f\left(\hat{\theta}_{k-1},\hat{x}_{i}\right)-\nabla f\left(\hat{\theta}_{k-1},x_{i}\right)\right).
\end{equation}
This implies that
\begin{align}
\left\Vert\theta_{k}-\hat{\theta}_{k}\right\Vert^{2}&=\left\Vert\theta_{k-1}-\hat{\theta}_{k-1}
-\frac{\eta}{b}\sum_{i\in\Omega_{k}}\left(\nabla f(\theta_{k-1},x_{i})-\nabla f\left(\hat{\theta}_{k-1},x_{i}\right)\right)\right\Vert^{2}
+\frac{\eta^{2}}{b^{2}}\left\Vert\mathcal{E}_{k}\right\Vert^{2}
\nonumber
\\
&\qquad
+2\left\langle\theta_{k-1}-\hat{\theta}_{k-1}
-\frac{\eta}{b}\sum_{i\in\Omega_{k}}\left(\nabla f(\theta_{k-1},x_{i})-\nabla f\left(\hat{\theta}_{k-1},x_{i}\right)\right),
\frac{\eta}{b}\mathcal{E}_{k}\right\rangle.\label{ineq:sub:0}
\end{align}
By Assumption~\ref{assump:Holder} and Assumption~\ref{assump:sub}, we have
\begin{align}
&\left\Vert\theta_{k-1}-\hat{\theta}_{k-1}
-\frac{\eta}{b}\sum_{i\in\Omega_{k}}\left(\nabla f(\theta_{k-1},x_{i})-\nabla f\left(\hat{\theta}_{k-1},x_{i}\right)\right)\right\Vert^{2}
\nonumber
\\
&\leq
\left\Vert\theta_{k-1}-\hat{\theta}_{k-1}\right\Vert^{2}
-2\eta\mu\left\Vert\theta_{k-1}-\hat{\theta}_{k-1}\right\Vert^{p}
+\frac{\eta^{2}}{b^{2}}\left(bK_{1}\left\Vert\theta_{k-1}-\hat{\theta}_{k-1}\right\Vert^{\frac{p}{2}}\right)^{2}
\nonumber
\\
&
=\left\Vert\theta_{k-1}-\hat{\theta}_{k-1}\right\Vert^{2}
-2\eta\mu\left\Vert\theta_{k-1}-\hat{\theta}_{k-1}\right\Vert^{p}
+\eta^{2}K_{1}^{2}\left\Vert\theta_{k-1}-\hat{\theta}_{k-1}\right\Vert^{p}
\nonumber
\\
&\leq\left\Vert\theta_{k-1}-\hat{\theta}_{k-1}\right\Vert^{2}
-\eta\mu\left\Vert\theta_{k-1}-\hat{\theta}_{k-1}\right\Vert^{p},\label{ineq:sub:1}
\end{align}
provided that $\eta\leq\frac{\mu}{K_{1}^{2}}$.

Since $X_{n}$ and $\hat{X}_{n}$ differ by at most one element
and $\sup_{x\in \mathcal{X}}\|x\| \leq D$ for some $D < \infty$, 
we have $x_{i}=\hat{x}_{i}$
for any $i\in\Omega_{k}$
with probability $\frac{n-b}{n}$
and $x_{i}\neq\hat{x}_{i}$
for exactly one $i\in\Omega_{k}$
with probability $\frac{b}{n}$ and therefore
\begin{align}
\mathbb{E}\left\Vert\mathcal{E}_{k}\right\Vert^{2}
&\leq\frac{b}{n}\mathbb{E}\left[\left(K_{2}2D(2\Vert\hat{\theta}_{k-1}\Vert^{p-1}+1)\right)^{2}\right]
\nonumber
\\
&\leq\frac{4D^{2}K_{2}^{2}b}{n}\left(8\mathbb{E}\left[\Vert\hat{\theta}_{k-1}\Vert^{2(p-1)}\right]+2\right)
\nonumber
\\
&\leq\frac{4D^{2}K_{2}^{2}b}{n}\left(8\mathbb{E}\left[\Vert\hat{\theta}_{k-1}\Vert^{p}\right]+10\right)
\nonumber
\\
&\leq\frac{4D^{2}K_{2}^{2}b}{n}\left(2^{p+2}\mathbb{E}\Vert\hat{\theta}_{k-1}-\hat{\theta}_{\ast}\Vert^{p}+2^{p+2}\Vert\hat{\theta}_{\ast}\Vert^{p}+10\right),\label{ineq:sub:2}
\end{align}
and moreover,
\begin{align}
&\mathbb{E}\left\langle\theta_{k-1}-\hat{\theta}_{k-1}
-\frac{\eta}{b}\sum_{i\in\Omega_{k}}\left(\nabla f(\theta_{k-1},x_{i})-\nabla f\left(\hat{\theta}_{k-1},x_{i}\right)\right),
\frac{\eta}{b}\mathcal{E}_{k}\right\rangle
\nonumber
\\
&\leq
\frac{\eta}{b}\frac{b}{n}\mathbb{E}\left[\left(\left\Vert\theta_{k-1}-\hat{\theta}_{k-1}\right\Vert+K_{1}\eta\left\Vert\theta_{k-1}-\hat{\theta}_{k-1}\right\Vert^{\frac{p}{2}}\right)
K_{2}2D\left(2\Vert\hat{\theta}_{k-1}\Vert^{p-1}+1\right)\right]
\nonumber
\\
&\leq
\frac{2DK_{2}\eta}{n}\mathbb{E}\left[\left((1+K_{1}\eta)\left\Vert\theta_{k-1}-\hat{\theta}_{k-1}\right\Vert+K_{1}\eta\right)
\left(2\Vert\hat{\theta}_{k-1}\Vert^{p-1}+1\right)\right]
\nonumber
\\
&\leq
\frac{2DK_{2}\eta}{n}(1+K_{1}\eta)\mathbb{E}\left[\left(\Vert\theta_{k-1}\Vert+\Vert\hat{\theta}_{k-1}\Vert+1\right)
\left(2\Vert\hat{\theta}_{k-1}\Vert^{p-1}+1\right)\right].
\label{ineq:sub:3}
\end{align}
Notice that for any $x,y\geq 0$ and $p\in(1,2)$, we have $xy^{p-1}\leq x^{p}+y^{p}$, 
$y\leq y^{p}+1$, $x\leq x^{p}+1$ and $y^{p-1}\leq y^{p}+1$, which implies that
\begin{equation}\label{ineq:elementary}
(x+y+1)(2y^{p-1}+1)
=2xy^{p-1}+2y^{p}+2y^{p-1}+x+y+1
\leq 3x^{p}+7y^{p}+5.
\end{equation}
Therefore, by applying \eqref{ineq:elementary} to \eqref{ineq:sub:3}, we have
\begin{align}
&\mathbb{E}\left\langle\theta_{k-1}-\hat{\theta}_{k-1}
-\frac{\eta}{b}\sum_{i\in\Omega_{k}}\left(\nabla f(\theta_{k-1},x_{i})-\nabla f\left(\hat{\theta}_{k-1},x_{i}\right)\right),
\frac{\eta}{b}\mathcal{E}_{k}\right\rangle
\nonumber
\\
&\leq
\frac{2DK_{2}\eta}{n}(1+K_{1}\eta)\left(3\mathbb{E}\Vert\theta_{k-1}\Vert^{p}+7\mathbb{E}\Vert\hat{\theta}_{k-1}\Vert^{p}+5\right)
\nonumber
\\
&\leq
\frac{2DK_{2}\eta}{n}(1+K_{1}\eta)\Big(3\cdot 2^{p-1}\mathbb{E}\Vert\theta_{k-1}-\theta_{\ast}\Vert^{p}
+3\cdot 2^{p-1}\Vert\theta_{\ast}\Vert^{p}
\nonumber
\\
&\qquad\qquad\qquad\qquad\qquad\qquad
+7\cdot 2^{p-1}\mathbb{E}\Vert\hat{\theta}_{k-1}-\hat{\theta}_{\ast}\Vert^{p}
+7\cdot 2^{p-1}\Vert\hat{\theta}_{\ast}\Vert^{p}+5\Big).\label{ineq:sub:4}
\end{align}
Hence, by applying \eqref{ineq:sub:1}, \eqref{ineq:sub:2}, \eqref{ineq:sub:4} into \eqref{ineq:sub:0}, we conclude that
\begin{align}
\mathbb{E}\left\Vert\theta_{k}-\hat{\theta}_{k}\right\Vert^{2}
&\leq
\mathbb{E}\left\Vert\theta_{k-1}-\hat{\theta}_{k-1}\right\Vert^{2}
-\eta\mu\mathbb{E}\left\Vert\theta_{k-1}-\hat{\theta}_{k-1}\right\Vert^{p}
\nonumber
\\
&\qquad
+\frac{4D^{2}K_{2}^{2}\eta^{2}}{bn}\left(2^{p+2}\mathbb{E}\left\Vert\hat{\theta}_{k-1}-\hat{\theta}_{\ast}\right\Vert^{p}+2^{p+2}\Vert\hat{\theta}_{\ast}\Vert^{p}+10\right)
\nonumber
\\
&\qquad\qquad
+\frac{4DK_{2}\eta}{n}(1+K_{1}\eta)\Big(3\cdot 2^{p-1}\mathbb{E}\Vert\theta_{k-1}-\theta_{\ast}\Vert^{p}
+3\cdot 2^{p-1}\Vert\theta_{\ast}\Vert^{p}
\nonumber
\\
&\qquad\qquad\qquad\qquad\qquad\qquad
+7\cdot 2^{p-1}\mathbb{E}\Vert\hat{\theta}_{k-1}-\hat{\theta}_{\ast}\Vert^{p}
+7\cdot 2^{p-1}\Vert\hat{\theta}_{\ast}\Vert^{p}+5\Big),
\end{align}
provided that $\eta\leq\frac{\mu}{K_{1}^{2}}$.
This, together with $\theta_{0}=\hat{\theta}_{0}=\theta$, implies that
\begin{align}
&\frac{1}{k}\sum_{i=1}^{k}\mathbb{E}\left\Vert\theta_{i-1}-\hat{\theta}_{i-1}\right\Vert^{p}
\nonumber
\\
&\leq
\frac{4D^{2}K_{2}^{2}\eta}{bn\mu}\left(2^{p+2}\frac{1}{k}\sum_{i=1}^{k}\mathbb{E}\left\Vert\hat{\theta}_{i-1}-\hat{\theta}_{\ast}\right\Vert^{p}+2^{p+2}\Vert\hat{\theta}_{\ast}\Vert^{p}+10\right)
\nonumber
\\
&\qquad\qquad
+\frac{4DK_{2}}{n\mu}(1+K_{1}\eta)\Big(3\cdot 2^{p-1}\frac{1}{k}\sum_{i=1}^{k}\mathbb{E}\Vert\theta_{i-1}-\theta_{\ast}\Vert^{p}
+3\cdot 2^{p-1}\Vert\theta_{\ast}\Vert^{p}
\nonumber
\\
&\qquad\qquad\qquad\qquad\qquad\qquad
+7\cdot 2^{p-1}\frac{1}{k}\sum_{i=1}^{k}\mathbb{E}\Vert\hat{\theta}_{i-1}-\hat{\theta}_{\ast}\Vert^{p}
+7\cdot 2^{p-1}\Vert\hat{\theta}_{\ast}\Vert^{p}+5\Big).\label{plug:sub:1}
\end{align}

In Lemma~\ref{lem:sub}, we showed that
under the assumption $\eta\leq\frac{\mu}{K_{1}^{2}+2^{p+4}D^{2}K_{2}^{2}}$,
we have
\begin{align}
&\frac{1}{k}\sum_{i=1}^{k}\mathbb{E}\left\Vert\theta_{i-1}-\theta_{\ast}\right\Vert^{p}
\leq\frac{\left\Vert\theta-\theta_{\ast}\right\Vert^{2}}{k\eta\mu}
+\frac{8\eta}{\mu}D^{2}K_{2}^{2}\left(2^{p+1}\Vert\theta_{\ast}\Vert^{p}+5\right),\label{plug:sub:2}
\\
&\frac{1}{k}\sum_{i=1}^{k}\mathbb{E}\left\Vert\hat{\theta}_{i-1}-\hat{\theta}_{\ast}\right\Vert^{p}
\leq\frac{\left\Vert\theta-\hat{\theta}_{\ast}\right\Vert^{2}}{k\eta\mu}
+\frac{8\eta}{\mu}D^{2}K_{2}^{2}\left(2^{p+1}\Vert\hat{\theta}_{\ast}\Vert^{p}+5\right).\label{plug:sub:3}
\end{align}
Hence, by plugging \eqref{plug:sub:2} and \eqref{plug:sub:3} into \eqref{plug:sub:1}, we conclude that
\begin{align}
&\frac{1}{k}\sum_{i=1}^{k}\mathbb{E}\left\Vert\theta_{i-1}-\hat{\theta}_{i-1}\right\Vert^{p}
\nonumber
\\
&\leq
\frac{4D^{2}K_{2}^{2}\eta}{bn\mu}\left(2^{p+2}\left(\frac{\left\Vert\theta-\hat{\theta}_{\ast}\right\Vert^{2}}{k\eta\mu}
+\frac{8\eta}{\mu}D^{2}K_{2}^{2}\left(2^{p+1}\Vert\hat{\theta}_{\ast}\Vert^{p}+5\right)\right)+2^{p+2}\Vert\hat{\theta}_{\ast}\Vert^{p}+10\right)
\nonumber
\\
&\qquad
+\frac{4DK_{2}}{n\mu}(1+K_{1}\eta)\cdot 3\cdot 2^{p-1}\left(\frac{\left\Vert\theta-\theta_{\ast}\right\Vert^{2}}{k\eta\mu}
+\frac{8\eta}{\mu}D^{2}K_{2}^{2}\left(2^{p+1}\Vert\theta_{\ast}\Vert^{p}+5\right)\right)
\nonumber
\\
&\qquad\qquad
+\frac{4DK_{2}}{n\mu}(1+K_{1}\eta)\cdot 7\cdot 2^{p-1}\left(\frac{\left\Vert\theta-\hat{\theta}_{\ast}\right\Vert^{2}}{k\eta\mu}
+\frac{8\eta}{\mu}D^{2}K_{2}^{2}\left(2^{p+1}\Vert\hat{\theta}_{\ast}\Vert^{p}+5\right)\right)
\nonumber
\\
&\qquad\qquad\qquad
+\frac{4DK_{2}}{n\mu}(1+K_{1}\eta)\left(3\cdot 2^{p-1}\Vert\theta_{\ast}\Vert^{p}+7\cdot 2^{p-1}\Vert\hat{\theta}_{\ast}\Vert^{p}+5\right).
\end{align}
Moreover, $\Vert\theta-\hat{\theta}_{\ast}\Vert^{2}\leq 2\Vert\theta\Vert^{2}+2\Vert\hat{\theta}_{\ast}\Vert^{2}$, $\Vert\theta-\theta_{\ast}\Vert^{2}\leq 2\Vert\theta\Vert^{2}+2\Vert\theta_{\ast}\Vert^{2}$
and by applying Lemma~\ref{lem:sub:minimizer}, 
we obtain
\begin{align}
&\frac{1}{k}\sum_{i=1}^{k}\mathbb{E}\left\Vert\theta_{i-1}-\hat{\theta}_{i-1}\right\Vert^{p}
\nonumber
\\
&\leq
\frac{4D^{2}K_{2}^{2}\eta}{bn\mu}\Bigg(2^{p+2}\left(\frac{2\theta^{2}+2(E/\mu)^{\frac{2}{p-1}}}{k\eta\mu}
+\frac{8\eta}{\mu}D^{2}K_{2}^{2}\left(2^{p+1}(E/\mu)^{\frac{p}{p-1}}+5\right)\right)
\nonumber
\\
&\qquad\qquad\qquad\qquad\qquad\qquad
+2^{p+2}(E/\mu)^{\frac{p}{p-1}}+10\Bigg)
\nonumber
\\
&\qquad
+\frac{4DK_{2}}{n\mu}(1+K_{1}\eta)\cdot 3\cdot 2^{p-1}\left(\frac{2\theta^{2}+2(E/\mu)^{\frac{2}{p-1}}}{k\eta\mu}
+\frac{8\eta}{\mu}D^{2}K_{2}^{2}\left(2^{p+1}(E/\mu)^{\frac{p}{p-1}}+5\right)\right)
\nonumber
\\
&\qquad\quad
+\frac{4DK_{2}}{n\mu}(1+K_{1}\eta)\cdot 7\cdot 2^{p-1}\left(\frac{2\theta^{2}+2(E/\mu)^{\frac{2}{p-1}}}{k\eta\mu}
+\frac{8\eta}{\mu}D^{2}K_{2}^{2}\left(2^{p+1}(E/\mu)^{\frac{p}{p-1}}+5\right)\right)
\nonumber
\\
&\qquad\qquad\quad
+\frac{4DK_{2}}{n\mu}(1+K_{1}\eta)\left(10\cdot 2^{p-1}(E/\mu)^{\frac{p}{p-1}}+5\right).\label{L:p:ineq}
\end{align}
Finally, by the definition of $p$-Wasserstein distance, we have
\begin{equation}
\frac{1}{k}\sum_{i=1}^{k}\mathcal{W}_{p}^{p}(\nu_{i-1},\hat{\nu}_{i-1})
\leq\frac{1}{k}\sum_{i=1}^{k}\mathbb{E}\left\Vert\theta_{i-1}-\theta_{\ast}\right\Vert^{p}.
\end{equation}
This completes the proof.
\end{proof}

%%%%%%
\subsection{Proof of Theorem~\ref{cor:sub}}

\begin{proof}
First, let us show that 
the sequences $\theta_{k}$ and $\hat{\theta}_{k}$ are ergodic. 

First, let us show that the limit $\theta_{\infty}$,
if exists, is unique. Consider two sequences $\theta_{k}^{(1)}$
and $\theta_{k}^{(2)}$ starting at $\theta_{0}^{(1)}$ and $\theta_{0}^{(2)}$ respectively with two limits $\theta_{\infty}^{(1)}$ and $\theta_{\infty}^{(2)}$. 
For any $k\in\mathbb{N}$,
\begin{align}
&\theta_{k}^{(1)}=\theta_{k-1}^{(1)}-\frac{\eta}{b}\sum_{i\in\Omega_{k}}\nabla f\left(\theta_{k-1}^{(1)},x_{i}\right),
\\
&\theta_{k}^{(2)}=\theta_{k-1}^{(2)}-\frac{\eta}{b}\sum_{i\in\Omega_{k}}\nabla f\left(\theta_{k-1}^{(2)},x_{i}\right).
\end{align}
By Assumption~\ref{assump:Holder} and Assumption~\ref{assump:sub}, we have
\begin{align}
\left\Vert\theta_{k}^{(2)}-\theta_{k}^{(1)}\right\Vert^{2}
&=\left\Vert\theta_{k-1}^{(2)}-\theta_{k-1}^{(1)}
-\frac{\eta}{b}\sum_{i\in\Omega_{k}}\left(\nabla f\left(\theta_{k-1}^{(2)},x_{i}\right)-\nabla f\left(\theta_{k-1}^{(1)},x_{i}\right)\right)\right\Vert^{2}
\nonumber
\\
&\leq
\left\Vert\theta_{k-1}^{(2)}-\theta_{k-1}^{(1)}\right\Vert^{2}
-2\eta\mu\left\Vert\theta_{k-1}^{(2)}-\theta_{k-1}^{(1)}\right\Vert^{p}
+\frac{\eta^{2}}{b^{2}}\left(bK_{1}\left\Vert\theta_{k-1}^{(2)}-\theta_{k-1}^{(1)}\right\Vert^{\frac{p}{2}}\right)^{2}
\nonumber
\\
&
=\left\Vert\theta_{k-1}^{(2)}-\theta_{k-1}^{(1)}\right\Vert^{2}
-2\eta\mu\left\Vert\theta_{k-1}^{(2)}-\theta_{k-1}^{(1)}\right\Vert^{p}
+\eta^{2}K_{1}^{2}\left\Vert\theta_{k-1}^{(2)}-\theta_{k-1}^{(1)}\right\Vert^{p}
\nonumber
\\
&\leq\left\Vert\theta_{k-1}^{(2)}-\theta_{k-1}^{(1)}\right\Vert^{2}
-\eta\mu\left\Vert\theta_{k-1}^{(2)}-\theta_{k-1}^{(1)}\right\Vert^{p},
\end{align}
provided that $\eta\leq\frac{\mu}{K_{1}^{2}}$.
This implies that
\begin{equation}
\left\Vert\theta_{k}^{(2)}-\theta_{k}^{(1)}\right\Vert^{2}
\leq\left\Vert\theta_{k-1}^{(2)}-\theta_{k-1}^{(1)}\right\Vert^{2}
-\eta\mu\left\Vert\theta_{k-1}^{(2)}-\theta_{k-1}^{(1)}\right\Vert^{p},
\end{equation}
provided that $\eta\leq\frac{\mu}{K_{1}^{2}}$.
Thus, $\left\Vert\theta_{k}^{(2)}-\theta_{k}^{(1)}\right\Vert^{2}
\leq\left\Vert\theta_{k-1}^{(2)}-\theta_{k-1}^{(1)}\right\Vert^{2}$ for any $k\in\mathbb{N}$.
Suppose $\left\Vert\theta_{j-1}^{(2)}-\theta_{j-1}^{(1)}\right\Vert^{2}\geq 1$ for every $j=1,2,\ldots,k$, then we have
\begin{equation}
\left\Vert\theta_{k}^{(2)}-\theta_{k}^{(1)}\right\Vert^{2}
\leq\left\Vert\theta_{0}^{(2)}-\theta_{0}^{(1)}\right\Vert^{2}
-k\eta\mu,
\end{equation}
such that 
\begin{equation}
\left\Vert\theta_{k}^{(2)}-\theta_{k}^{(1)}\right\Vert^{2}
\leq 1,\qquad\text{for any $k\geq k_{0}:=\frac{\left\Vert\theta_{0}^{(2)}-\theta_{0}^{(1)}\right\Vert^{2}-1}{\eta\mu}$}.    
\end{equation}
Since $p\in(1,2)$, 
\begin{equation}\label{keq:ineq:sub:ergodic}
\left\Vert\theta_{k}^{(2)}-\theta_{k}^{(1)}\right\Vert^{2}
\leq(1-\eta\mu)\left\Vert\theta_{k-1}^{(2)}-\theta_{k-1}^{(1)}\right\Vert^{2},
\end{equation}
for any $k\geq k_{0}+1$, which implies that
$\theta_{k}^{(2)}-\theta_{k}^{(1)}\rightarrow 0$
as $k\rightarrow\infty$ so that $\theta_{\infty}^{(2)}=\theta_{\infty}^{(1)}$. 

Next, let us show that for any sequence $\theta_{k}$, it converges to a limit. It follows from \eqref{keq:ineq:sub:ergodic}
that 
\begin{equation}\label{keq:ineq:sub:ergodic:2}
\left\Vert\theta_{k}^{(2)}-\theta_{k}^{(1)}\right\Vert^{2}
\leq(1-\eta\mu)^{k-\lceil k_{0}\rceil}\left\Vert\theta_{\lceil k_{0}\rceil}^{(2)}-\theta_{\lceil k_{0}\rceil}^{(1)}\right\Vert^{2},
\end{equation}
for any $k\geq k_{0}+1$.
Let $\theta_{0}^{(1)}$ be a fixed
initial value in $\mathbb{R}^{d}$, 
and let $\theta_{0}^{(2)}=\theta_{1}^{(1)}$
which is random yet takes only finitely many values
given $\theta_{0}^{(1)}$ so that $k_{0}$ is bounded. 
Therefore, it follows from \eqref{keq:ineq:sub:ergodic:2} that
\begin{equation}
\mathcal{W}_{2}^{2}(\nu_{k+1},\nu_{k})
\leq(1-\eta\mu)^{k}\mathbb{E}\left[(1-\eta\mu)^{-\lceil k_{0}\rceil}\left\Vert\theta_{\lceil k_{0}\rceil}^{(2)}-\theta_{\lceil k_{0}\rceil}^{(1)}\right\Vert^{2}\right],
\end{equation}
where $\nu_{k}$ denotes the distribution of $\theta_{k}$, which implies that
\begin{equation}
\sum_{k=1}^{\infty}\mathcal{W}_{2}^{2}(\nu_{k+1},\nu_{k})<\infty.
\end{equation}
Thus, $(\nu_{k})$ is a Cauchy sequence in $\mathcal{P}_{2}(\mathbb{R}^{d})$ equipped with metric $\mathcal{W}_{2}$ and hence there exists some $\nu_{\infty}$ such that $\mathcal{W}_{2}(\nu_{k},\nu_{\infty})\rightarrow 0$ as $k\rightarrow\infty$.

Hence, we showed that the sequence $\theta_{k}$ is ergodic. Similarly, we can show that the sequence $\hat{\theta}_{k}$ is ergodic.

Finally, by ergodic theorem and Fatou's lemma, we have
\begin{equation}
\mathbb{E}\left\Vert\theta_{\infty}-\hat{\theta}_{\infty}\right\Vert^{p}
=\mathbb{E}\left[\lim_{k\rightarrow\infty}\frac{1}{k}\sum_{i=1}^{k}\left\Vert\theta_{i-1}-\hat{\theta}_{i-1}\right\Vert^{p}\right]
\leq
\limsup_{k\rightarrow\infty}\frac{1}{k}\sum_{i=1}^{k}\mathbb{E}\left\Vert\theta_{i-1}-\hat{\theta}_{i-1}\right\Vert^{p}.
\end{equation}
We can then apply \eqref{L:p:ineq} from the proof of Theorem~\ref{thm:sub} to 
obtain:
\begin{align}
\mathcal{W}_{p}^{p}(\nu_{\infty},\hat{\nu}_{\infty})
\leq
\frac{C_{2}}{bn\mu}+\frac{C_{3}}{n},
\end{align}
where
\begin{align}
&C_{2}:=\frac{4D^{2}K_{2}^{2}\eta}{\mu}\left(2^{p+2}\left(\frac{8\eta}{\mu}D^{2}K_{2}^{2}\left(2^{p+1}(E/\mu)^{\frac{p}{p-1}}+5\right)\right)+2^{p+2}(E/\mu)^{\frac{p}{p-1}}+10\right),
\\
&C_{3}:=\frac{32D^{3}K_{2}^{3}\eta}{\mu^{2}}(1+K_{1}\eta)\cdot 10\cdot 2^{p-1}\left(2^{p+1}(E/\mu)^{\frac{p}{p-1}}+5\right)
\nonumber
\\
&\qquad\qquad\qquad\qquad\qquad
+\frac{4DK_{2}}{\mu}(1+K_{1}\eta)\left(10\cdot 2^{p-1}(E/\mu)^{\frac{p}{p-1}}+5\right).
\end{align}
This completes the proof.
\end{proof}

\end{document}